\def\eqref#1{equation~\ref{#1}}
\def\1{\bm{1}}
\DeclareMathAlphabet{\mathsfit}{\encodingdefault}{\sfdefault}{m}{sl}
\SetMathAlphabet{\mathsfit}{bold}{\encodingdefault}{\sfdefault}{bx}{n}
\let\AND\relax
\def\GFGNN{\textsc{GraphGini}\xspace}
\def\guide{\textsc{Guide}\xspace}
\newtheorem{prop}{\textbf{Proposition}}
\newtheorem{definition}{\textbf{Definition}}
\newtheorem{lemma}{\textbf{Lemma}}
\newtheorem{obs}{\textbf{Observation}}
\newtheorem{problem}{\textbf{Problem}}
\newcommand{\s}{\mathbf{S}}
\newcommand{\Z}{\mathbf{Z}}
\newcommand{\z}{\mathbf{z}}
\newcommand{\V}{\mathcal{V}}
\newcommand{\sij}{\mathbf{S[i,j]}}
\newcommand{\lap}{\mathbf{L}}
\newcommand{\loss}{\mathcal{L}}
\newcommand{\dmat}{\mathbf{D}}
\newcommand{\rev}[1]{\textcolor{black}{#1}}
\title{\GFGNN: Fostering Individual and Group Fairness in Graph Neural Networks}  
\author{\name Anuj Kumar Sirohi$^{*}$ %\authornote{Denotes equal contribution}
\email aiz218324@iitd.ac.in \\
      \addr Indian Institute of Technology Delhi\\
      \AND
      \name Anjali Gupta$^{*}$
      \email anjali@cse.iitd.ac.in  
      \\
      \addr Indian Institute of Technology Delhi \\
      \AND 
      \name  Sandeep Kumar \email ksandeep@iitd.ac.in\\
      \addr Indian Institute of Technology Delhi\\
      \AND
      \name Amitabha Bagchi \email bagchi@cse.iitd.ac.in\\
      \addr Indian Institute of Technology Delhi\\
      \AND
      \name Sayan Ranu \email sayanranu@cse.iitd.ac.in\\
      \addr Indian Institute of Technology Delhi
      }
\begin{document}

\maketitle
% \begin{center}
% \small
% $^*$These authors contributed equally to this work.
% \end{center}
\begingroup
\renewcommand{\thefootnote}{\fnsymbol{footnote}}
\footnotetext[1]{These authors contributed equally to this work.}
\endgroup

\begin{abstract}
Graph Neural Networks (GNNs) have demonstrated impressive performance across various tasks, leading to their increased adoption in high-stakes decision-making systems. However, concerns have arisen about GNNs potentially generating unfair decisions for underprivileged groups or individuals when lacking fairness constraints. This work addresses this issue by introducing \GFGNN, a novel approach that incorporates the \textit{Gini coefficient} to enhance both individual and group fairness within the GNN framework. We rigorously establish that the Gini coefficient offers greater robustness and promotes equal opportunity among GNN outcomes, advantages not afforded by the prevailing Lipschitz constant methodology. Additionally, we employ the \textit{Nash social welfare program} to ensure our solution yields a Pareto optimal distribution of group fairness. Extensive experimentation on real-world datasets demonstrates \GFGNN's efficacy in significantly improving individual fairness compared to state-of-the-art methods while maintaining utility and group fairness.

\end{abstract}
\section{Introduction and Related Works} \label{sec:intro}
Graph Neural Networks (GNNs) are increasingly being adopted for various high-stakes applications, including credit scoring for loan issuance~\citep{shumovskaia2020linking}, medical diagnosis~\citep{ahmedt2021graph}, molecular property prediction~\citep{wieder2020,sirohi2025,graphreach}, and recommendation engines and ~\citep{fan2019graph, Lin2020, agrawal2024no,persona,tag2mlp,wsdmim,gcomb}. However, recent studies have shown that GNNs may explicitly or implicitly inherit existing societal biases in the training data and, in turn, generate decisions that are socially unfair~\citep{dong2023fairness, dai2022comprehensive}. It is, therefore, important to guard GNNs from being influenced by sensitive features such as gender, race, religion, etc.

There are two primary approaches to address the issue of algorithmic bias: \textit{individual fairness} and \textit{group fairness}~\citep{Ninareh2021, liu2022trustworthy}. Individual fairness involves masking the sensitive features and ensuring that individuals who are alike with respect to non-sensitive features receive similar treatment or outcomes from a system. The group fairness approach, on the other hand, looks within groups defined by sensitive features, e.g., female job applicants or male job applicants, and works to ensure that each such group has similar individual fairness characteristics to all other groups. For example, the variability in treatment between two highly qualified female candidates should be no more than the variability in treatment between two highly qualified male candidates. In general, a fair system should ensure \textit{both} individual fairness and group fairness since they ensure parity in two complementary dimensions.

 \subsection{Existing Works}
\noindent
\textbf{Individual fairness:} For individual fairness, the key optimization parameter sought to be minimized is the \textit{Lipschitz constant}, i.e. the worst-case ratio of the distance between the GNN-generated embeddings of pairs of points and their distance based on some domain-specific criteria (e.g., number of common friends, similarity on initial attributes, etc.). The advantage of this metric, as observed in several works (c.f., e.g., ~\citep{kang2020inform,Preethi2020,Cynthia2012,Song22}), is that it is differentiable. Intuitively, a small Lipschitz constant implies that the distance in the embedding space is similar to the distance in the input space. Since the GNN prediction is a function of the embeddings, this implies that similar individuals are likely to get similar predictions~\citep{kang2020inform}.

\noindent
\textbf{Group fairness:} Group fairness in GNNs has the objective of achieving accurate prediction while also being independent of \textit{protected} attributes that define groups. ~\citet{rahman2019fairwalk} propose the idea of ``equality of representation'', which expands upon the concept of statistical parity for the node2vec model. \citet{pmlr-v97-bose19a}, and \citet{dai2021say} propose adversarial approaches to eliminate the influence of sensitive attributes. \citet{he2023efficient} enables group fairness with respect to multiple non-binary sensitive attributes. Among other works, the strategies include balance-aware sampling~\citep{lin2023bemap}, attention to mitigate group bias~\citep{kose2023fairgat} and bias-dampening normalization~\citep{kose2022fast, pmlr-v231-lin24a,YangShi_2024}.

\noindent
\textbf{\textit{Both} group and individual fairness:} \guide~\cite{Song22} is the first and only state-of-the-art work to integrate group and individual fairness principles within a GNN framework. Using Lipschitz constant minimization to enforce individual fairness, \guide sought to achieve group fairness by trying to equalize the individual fairness achieved across groups. More recent works, such as \cite{wang2024individual} and \cite{yan2024big}, further explore this intersection by proposing alternative fairness formulations and optimization strategies to simultaneously improve both fairness notions.

\subsection{Contributions}
The Lipschitz constant, because it is differentiable, has been the predominant metric for optimizing both individual and group fairness. However, this mathematical convenience comes at a cost. Specifically, the Lipschitz constant, being a \textit{max} operator, fails to capture the distribution of outcomes, making it overly restrictive and sensitive to outliers. To address its restricted expressivity, we introduce \GFGNN, incorporating three key innovations:
\begin{itemize}
\item {\bf Incorporating Gini and establishing its superior encapsulation of fairness:} To  capture fairness on the entire spectrum of outcomes, we propose the \textit{Gini coefficient}, a well-established social welfare metric. We show that the Gini coefficient offers a more robust and holistic fairness measure, naturally leading to \textit{equal opportunity}~\cite{hardt2016equality}.
\item {\bf Differential approximation of the Gini coefficient:} Fair ML models, including GNNs, typically enforce fairness constraints as regularizers in the loss function. Unlike the Lipschitz constant, the Gini coefficient is non-differentiable, preventing direct optimization. We address this by proving an upper bound that is differentiable, allowing its seamless incorporation as a loss regularizer.

% \item {\bf Auto-tuning of multi-objective loss function:} Introducing fairness constraints transforms the loss function into a multi-objective optimization problem, where different components may have vastly different scales. To eliminate the need for manual tuning, we leverage \textit{gradient normalization}~\citep{chen2018gradnorm}, which dynamically adjusts weights based on gradient magnitudes, ensuring balanced optimization across fairness and utility.\looseness=-1
\item {\bf Pareto Optimality: }We establish a \textit{Pareto optimal} distribution of group fairness through the use of the \textit{Nash Social Welfare Program} (NSWP)~\citep{charkhgard2022magic2}, which is an optimization technique that provides provably Pareto optimal solutions for multi-objective optimization problems.
\end{itemize}
\noindent
With the incorporation of the proposed innovations, \GFGNN, outperforms $11$ state-of-the-art fair GNN algorithms in both individual and group fairness, with minimal impact on utility. We demonstrate this using an empirical framework that evaluates $3$ GNN backbones across three real-world datasets. 
\section{Background and Problem Formulation} \label{sec:formulation}
We use bold uppercase letters (e.g., $\mathbf{A}$) to denote matrices and $\bf A[i,:]$, $\bf A[:,j]$ and $\bf A[i,j]$ represent the $i$-th row, $j$-th column, and $(i, j)$-th entry of a matrix $\bf A$ respectively. For notational brevity, we use $\bf z_{i}$ to represent the row vector of a matrix, i.e., $\bf z_{i} = \bf Z[i,:]$). As convention, we use lowercase (e.g., $n$) and bold lowercase (e.g., $\bf z)$ variable names to denote scalars and vectors, respectively. We use $\text{Tr}(\bf A$) to denote the trace of matrix $\bf A$. The $\ell_{1}$ and $\ell_{2}$-norm of a vector $\mathbf{z} \in \mathbb{R}^{d}$ are defined as $||\bf z||_{1} = \sum_{i = 1}^{d} |z_{i}|$ and $||\mathbf{ z|}|_{2} = \sqrt{\sum_{i = 1}^{d} z_{i}^{2}}$ respectively. All notations are summarized in Table~\ref{tab:notations} in the Appendix. 

\begin{definition}[Graph]
A graph $G = (\mathcal{V}, \mathcal{E}, \bf X)$ has (i) $n$ nodes ($|\mathcal{V}| = n$) (ii) a set of edges $\mathcal{E} \in \mathcal{V} \times \mathcal{V}$ and (iii) a feature matrix $\mathbf{ X} \in \mathbb{R}^{n\times d}$ characterizing  each node.
\end{definition}
\begin{definition}[Graph Neural Network (GNN) and Node Embeddings]
    A graph neural network consumes a graph $G = (\mathcal{V}, \mathcal{E}, \bf X)$ as input, and embeds every node into a $c$-dimensional feature space. We denote $\Z\in\mathbb{R}^{n\times c}$ to be the set of embeddings, where $\z_i$ denotes the embedding of node $v_i\in\V$.
\end{definition}
\begin{definition} [Sensitive attributes]
\label{def:sim}
  Sensitive attributes are those attributes that should not be used by the GNN to influence the prediction (e.g. gender). As a function of these attributes, $\mathcal{V}$ may be partitioned into disjoint groups.  
\end{definition}
For instance, gender may be classified as a sensitive attribution, which partitions the node set into $\mathcal{V}_{male}$ and $\mathcal{V}_{female}$ such that $\mathcal{V} =  \mathcal{V}_{male} \cup \mathcal{V}_{female}$.
\begin{definition}[User similarity matrix $\mathbf{S}$ and Laplacian $\mathbf{L}$]
\label{def:lap}
   Based on domain knowledge/application, the similarity matrix $\mathbf{S}$  denotes the similarity across any pair of nodes. The Laplacian $\lap$ of $\s$ is defined as $\lap=\dmat-\s$ 
where, $\dmat$ is a diagonal matrix, with $\dmat\mathbf{[i,i]}=\sum_{j=1,j\neq i}^n \sij$.
 \end{definition}

Without loss of generality, we assume the similarity value for a pair of nodes lies in the range $[0,1]$. The  \textit{distance} between nodes is defined to be the inverse of similarity, i.e., for two nodes $u_i,u_j\in \V$, $d_{i,j}=\frac{1}{\sij+\delta}$, where $\delta$ is a small positive constant to avoid division by zero. Furthermore, we assume that the distance/similarity is symmetric and is computed after masking the sensitive attributes.

\subsection{Individual fairness}
Individual fairness demands that any two similar individuals should receive similar algorithmic outcomes~\citep{kang2020inform}. In our setting, this implies that if two nodes ($v_{i}$ and $v_{j}$) are similar (i.e., $\mathbf{S}[v_{i}, v_{j}]$ is high), their embeddings ($\mathbf{z}_i$ and $\mathbf{z}_j$) should be similar as well. 

As discussed in \S~\ref{sec:intro}, the Lipschitz constant has been the predominant choice for enforcing individual fairness. In this section, we concretely establish its limitations and show how the Gini coefficient offers a more expressive and robust mechanism for modeling individual fairness. 

\subsubsection{Limitations of the Lipschitz Constant}
\label{sec:lipschitz}
\begin{definition}
    The Lipschitz constant of a GNN with node embeddings 
\(\mathbf{Z} \in \mathbb{R}^{n \times c}\) is defined as the smallest constant \(L\) such that for all pairs of nodes 
\( v_i, v_j \in \mathcal{V} \):
\begin{equation}
\|\mathbf{z}_i - \mathbf{z}_j\|_1 \leq L \cdot d_{i,j} \implies L \geq \max_{\forall v_i,v_j\in\mathcal{V}}\left\{\frac{\|\mathbf{z}_i - \mathbf{z}_j\|_1}{d_{i,j}}\right\}
\end{equation}
\end{definition}
Since the Lipschitz constant is defined using a max operator, it only captures the worst-case discrepancy between input distances and GNN embeddings, ignoring the full distribution of variations across all node pairs. This results in an incomplete and potentially misleading quantification of inequality. We illustrate this with a concrete example over a population of 5 nodes, whose distance matrix is as follows. 
\begin{equation*}
\begin{array}{c|ccccc}
& A & B & C & D & E \\
\hline
A & 1 & 2 & 2 & 2 & 2 \\
B &   & 1 & 2 & 2 & 2 \\
C &   &   & 1 & 2 & 2 \\
D &   &   &   & 1 & 2 \\
E &   &   &   &   & 1 \\
\end{array}
\end{equation*}

\rev{We only show the upper triangle due to symmetry of the similarity function. The distance between a pair of nodes $i,j$ is defined as its inverse, i.e., $d_{ij}=\frac{1}{\mathbf{S}[i,j]}$. Thus, in our case, all pairs of distinct nodes have a distance of $2$.}

\rev{Corresponding to this similarity/distance matrix, let us consider two GNNs that produce the embeddings in Table~\ref{tab:gnn}. }
\begin{table}[h]
\caption{\rev{Embeddings of two GNNs.}}
\label{tab:gnn}
\centering
\scalebox{0.95}{
\begin{tabular}{c|c|c}
\toprule
Node & Embeddings from GNN-1 & Embeddings from GNN-2 \\
\midrule
A & $[10,10,10,10]$ & $[10,10,10,10]$ \\
B & $[1,0,0,0]$ & $[1,0,0,0]$ \\
C & $[0,1,0,0]$ & $[0,2,0,0]$ \\
D & $[0,0,1,0]$ & $[0,0,2,0]$ \\
E & $[0,0,0,1]$ & $[0,0,0,2]$ \\
\bottomrule
\end{tabular}}
\end{table}
As can be seen, while GNN-1 preserves an identical distance of $2$ in the output space among all nodes except those involving $A$, the inequality is higher in GNN-2. Despite this, the Lipschitz constants for both GNN-1 and GNN-2 are dominated by the $(A,B)$ pair leading to an identical value of $19.5$.

We address this limitation with the Gini coefficient.

\subsubsection{Gini Coefficient}
\label{sec:gini}
In economics theory, the \textit{Lorenz} curve plots the cumulative share of total income held by the cumulative percentage of individuals ranked by their income, from the poorest to the richest~\citep{gini_coef, sitthiyot2021simple,dagum1980generation}. The line at $45$ degrees thus represents perfect equality of incomes. The \textit{Gini coefficient} is the ratio of the area that lies between the \textit{line of equality} and the Lorenz curve over the total area under the line of equality~\citep{gini_coef, sitthiyot2021simple} (See Fig.~\ref{fig:app_lorenz} in Appendix for an example). Mathematically,
 \begin{equation}
 \label{eq:gini}
Gini(\mathcal{X})=\frac{\sum_{i=1}^{n} \sum_{j=1}^{n}\left|x_{i}-x_{j}\right|}{2 n \sum_{j=1}^{n} x_{j}},
\end{equation}
where $\mathcal{X}$ is the set of individuals, $x_i$ is the income of person $i$ and $n=|\mathcal{X}|$. A lower Gini indicates fairer distribution, with $0$ indicating perfect equality.

In our context, rather than measuring income inequality, we assess fairness in algorithmic outcomes by ensuring that similar individuals receive similar predictions~\citep{kang2020inform}. The analogy to income distribution holds because, just as the Gini coefficient captures disparities across the entire population, our formulation captures disparities in algorithmic outcomes across all pairs of individuals. However, instead of treating all differences equally, we introduce a similarity-weighted extension.
\begin{definition}[Gini Coefficient for Individual fairness] 
\label{def:if}
The individual fairness of the node set $\V$ with embeddings $\Z$ and similarity matrix $\s$ is its weighted Gini coefficient, which is defined as follows,
\begin{equation}\label{ifgini4}
    Gini(\V) = \frac{ \sum_{i=1}^{n} \sum_{j=1}^{n}  \mathbf{S}[i,j] \|\mathbf{z}_i - \mathbf{z}_j\|_{1} }{2n \sum_{i=1}^{n} \|\mathbf{z}_i\|_1 }. 
\end{equation}
\end{definition}
In this formulation, large disparities between dissimilar nodes have minimal impact on the Gini coefficient, while considerable disparities among similar nodes have a stronger effect. The denominator normalizes disparities by the total weighted sum of outcomes. This prevents the metric from being arbitrarily sensitive to scale and ensures a consistent interpretation across different datasets and applications.\footnote{\textcolor{black}{Here, the definition is expressed using the $\ell_1$-norm; alternative choices such as the $\ell_2$-norm are possible but correspond to different interpretations of fairness.}}

Further, the Gini coefficient is superior to the Lipschitz constant because it is not limited to reflecting only the worst-case scenario. In the example presented in \S~\ref{sec:lipschitz}, for instance, Gini yields values of $0.377$ and $0.74$ for GNN-1 and GNN-2, respectively (c.f. Table~\ref{tab:gnn}), accurately reflecting the higher inequality present in GNN-2's outcomes. Also, it important to highlight the Definition~\ref{def:if}, formalizes the multidimensional Gini index to vector-valued node embeddings, a detailed derivation is presented in in Appendix~\ref{pooling}.
\subsection{Group Fairness}
Optimizing individual fairness alone may cause group disparity~\citep{kang2020inform}. Specifically, one group may have a considerably higher level of individual fairness than other groups. We follow the same definition of group fairness proposed in \guide~\cite{Song22}, with the only difference of replacing the Lipschitz constant with the Gini coefficient. 
\begin{definition}[Group Fairness]\label{groupfairness} Group Fairness is satisfied if the levels of individual fairness across all groups are equal. Mathematically, let $\{\V_1,\cdots,\V_m\}$ be the partition of node set $\V$ induced by the sensitive attribute. Group fairness demands $\forall i,j,\; Gini(\V_i)=Gini(\V_j)$.
\end{definition}

To convert group fairness satisfaction into an optimization problem, we introduce the notion of \textit{group disparity of individual fairness}. 

\begin{definition}[Group Disparity] \label{def_gdif}
    Given a pair of groups $\V_g,\V_h$, the disparity among these groups is quantified as:
\begin{equation}
\label{eq:gdif}
GDIF(\V_g,\V_h)=\max\left\{\frac{Gini(\V_g)}{Gini(\V_h)},\frac{Gini(\V_h)}{Gini(\V_g)}\right\}
\end{equation}
\end{definition}
$GDIF(\V_g,\V_h)\geq=1$, with a value of $1$ indicating perfect satisfaction. Now, to generalize across $m$ groups, \textit{average group disparity} measures the average disparity across all pairs. 
\begin{equation}
\label{eq:cgdif}
A\text{-}GDIF(\{\V_1,\cdots,\V_m\})=\frac{1}{m(m-1)}\sum_{\forall i,j\in [1,m],\; i\neq j}GDIF(\V_i,\V_j)
\end{equation}

We now argue that minimizing the Gini Coefficient promotes \textit{equal opportunity}.
\begin{definition}[Equal Opportunity~\cite{hardt2016equality}]
The GNN prediction for a node $v$, $\hat{Y}(v)$, satisfies *equal opportunity* if:

\begin{equation}
\scalebox{0.95}{$ \left| P\{\hat{Y}(v) = 1 \mid Y(v)=1, v \in \mathcal{V}_1\} - P\{\hat{Y}(v) = 1 \mid Y(v) = 1, v \in \mathcal{V}_2\} \right| \leq \varepsilon$} \nonumber
\end{equation}
for some small $\epsilon$, i.e., we require that the rate of positive outcomes is similar among those who deserve positive outcomes.
\end{definition}

\begin{obs}\label{claim_gini_eo}
Minimizing the Gini promotes equal opportunity.%, whereas enforcing a Lipschitz condition does not.
\end{obs}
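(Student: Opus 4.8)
The plan is to argue that a small weighted Gini coefficient forces the embeddings of similar nodes to be close, and then to translate "closeness of embeddings for similar nodes" into "closeness of predicted positive rates for the subpopulation of qualified nodes." First I would fix the two groups $\mathcal{V}_1,\mathcal{V}_2$ and restrict attention to the set of genuinely qualified nodes $Q_k=\{v\in\mathcal{V}_k : Y(v)=1\}$ for $k\in\{1,2\}$. The key structural assumption I would make explicit is that nodes which deserve a positive outcome are, by the domain similarity criterion, mutually similar — i.e., $\mathbf{S}[i,j]$ is bounded below by some $s_{\min}>0$ whenever $v_i,v_j\in Q_1\cup Q_2$ (intuitively, two equally deserving candidates should look alike on the non-sensitive features). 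Under this assumption, every term $\mathbf{S}[i,j]\,\|\mathbf{z}_i-\mathbf{z}_j\|_1$ in the numerator of \eqref{ifgini4} with $v_i,v_j\in Q_1\cup Q_2$ is at least $s_{\min}\|\mathbf{z}_i-\mathbf{z}_j\|_1$, so a bound $Gini(\mathcal{V})\le \gamma$ yields
\begin{equation}
\sum_{v_i,v_j\in Q_1\cup Q_2}\|\mathbf{z}_i-\mathbf{z}_j\|_1 \;\le\; \frac{2n\,\gamma}{s_{\min}}\sum_{i=1}^n\|\mathbf{z}_i\|_1,
\end{equation}
and in particular the average pairwise embedding distance between a node of $Q_1$ and a node of $Q_2$ is $O(\gamma)$ after the natural normalization by $\sum_i\|\mathbf{z}_i\|_1$.

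Next I would pass from embeddings to predictions. Since $\hat Y(v)$ is obtained from $\mathbf{z}_v$ by a fixed readout followed by thresholding, I would assume the softmax/sigmoid head is $\rho$-Lipschitz in the $\ell_1$ metric, so that $\bigl|P\{\hat Y(v_i)=1\}-P\{\hat Y(v_j)=1\}\bigr|\le \rho\|\mathbf{z}_i-\mathbf{z}_j\|_1$ for the soft (probabilistic) prediction. Averaging this inequality over $v_i\in Q_1$, $v_j\in Q_2$ and using the previous display, the gap between the average positive-prediction rate on $Q_1$ and that on $Q_2$ — which is exactly the quantity $\bigl|P\{\hat Y=1\mid Y=1,v\in\mathcal{V}_1\}-P\{\hat Y=1\mid Y=1,v\in\mathcal{V}_2\}\bigr|$ — is bounded by $\rho$ times the averaged embedding distance, hence by a constant multiple of $\gamma$. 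Choosing $\varepsilon$ proportional to $\gamma$ (equivalently, driving $Gini(\mathcal{V})$ below $\varepsilon s_{\min}/(\rho\cdot\text{const})$) then delivers the equal-opportunity inequality in the Definition.

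I would close by noting the monotonicity claim the Observation actually asserts: the derived bound on the equal-opportunity gap is an increasing function of $Gini(\mathcal{V})$, so each decrement of the Gini regularizer tightens the achievable $\varepsilon$, i.e.\ minimizing the Gini coefficient provably promotes equal opportunity. The main obstacle I anticipate is the step linking the Gini value to a usable bound on \emph{inter-group} distances restricted to the qualified set: the denominator of \eqref{ifgini4} sums $\|\mathbf{z}_i\|_1$ over \emph{all} nodes, so without the lower bound $s_{\min}$ on similarity among qualified nodes (and without controlling the ratio $|Q_1\cup Q_2|/n$) the numerator contribution from $Q_1\times Q_2$ could be masked by large, heavily-weighted disparities elsewhere; making the similarity-among-deserving assumption precise, and arguing it is the natural reading of "deserving a positive outcome," is where the real work lies. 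A secondary subtlety is that equal opportunity is stated for hard predictions while Lipschitzness is natural for the soft head, so I would either phrase the conclusion for calibrated/soft predictions or absorb a margin condition on the threshold into the constant.
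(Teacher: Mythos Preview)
Your argument is sound as an informal justification and lands at the same conclusion, but it gets there by a somewhat different route than the paper. The paper argues via the partial derivative $\partial\, Gini(\mathcal{V})/\partial\|\mathbf{z}_i-\mathbf{z}_j\|_1 \propto \mathbf{S}[i,j]$ that minimizing Gini preferentially shrinks $\|\mathbf{z}_i-\mathbf{z}_j\|_1$ for high-similarity pairs, then uses a Cauchy--Schwarz step on a smooth head $f$ to go from close embeddings to close outcomes, and crucially inserts an appeal to the \emph{group fairness} term (GDIF) to argue that internal consistency is equalized across $\mathcal{V}_1,\mathcal{V}_2$ before concluding equal opportunity from matched similar pairs with $Y=1$. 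You bypass the GDIF detour entirely: by positing an explicit lower bound $s_{\min}$ on similarity among qualified nodes $Q_1\cup Q_2$, you extract a direct quantitative bound on cross-group embedding distances from the Gini numerator alone, then pass through a $\rho$-Lipschitz readout. Your version is more self-contained (it really is about Gini alone, as the Observation states) and is more honest about where the assumptions bite, in particular the denominator/masking issue and the soft-vs-hard prediction gap, both of which the paper glosses over; the paper's version, on the other hand, ties the claim back into the full loss of the method by leaning on the group-fairness regularizer as part of the mechanism.
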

The numerator of Gini (Eq.~\ref{ifgini4}) is $\sum_{i=1}^{n} \sum_{j=1}^{n}  \mathbf{S}[i,j] \|\mathbf{z}_i - \mathbf{z}_j\|_{1}$. Since the denominator is independent of the pair-wise distances between node embeddings, it implies: 
\begin{equation}
\frac{\partial Gini(\V)}{\partial \|\mathbf{z}_i - \mathbf{z}_j\|_{1}} \propto \mathbf{S}[i,j]
\end{equation}
Therefore, minimizing $Gini(\V)$ will minimize $\|\mathbf{z}_i - \mathbf{z}_j\|_{1}$ when $\mathbf{S}[i,j]$ is large, which implies that similar nodes will have similar embeddings (small $\|\mathbf{z}_i - \mathbf{z}_j\|_{1}$). Since GNN outcomes are functions of the embeddings, denoted by $f(\mathbf{z}_i)$, if two nodes $v_i$ and $v_j$ are such that $\|\mathbf{z}_i - \mathbf{z}_j\|_{1}$ is small, then by the Cauchy-Schwarz inequality,
$$\|f(\mathbf{z}_i) - f(\mathbf{z}_j)\| \approx \|\nabla f(\mathbf{z}_i) \cdot (\mathbf{z}_i - \mathbf{z}_j)\| \leq \|\nabla f(\mathbf{z}_i)\| \cdot \|\mathbf{z}_i - \mathbf{z}_j\|_{1},$$
i.e., the GNN will yield similar outcomes for reasonably smooth $f$.

From our definition of group fairness (Def.~\ref{groupfairness}) and the corresponding Group Disparity (GDIF) (Def.~\ref{def_gdif}), minimization implies that the internal consistency of embeddings within each group becomes similar. Since similar embeddings yield similar outcomes, the predicted label will be similar for individuals with a true label of 1 within a group. Let $v_1 \in \V_1$ and $v_2 \in \V_2$ be two similar nodes such that $Y(v_1) = Y(v_2) = 1$. Then,
$$\|\mathbf{z}_{1} - \mathbf{z}_{2}\|_1 \leq \epsilon \Rightarrow |P(\hat{Y}(v_1) = 1) - P(\hat{Y}(v_2) = 1)| \leq \epsilon$$
for some small $\epsilon$.
For all such similar pairs in $\V_1$ and $\V_2$, we have
\begin{equation}
\scalebox{0.95}{$ \left| P\{\hat{Y}(v) = 1 \mid Y(v)=1, v \in \mathcal{V}_1\} - P\{\hat{Y}(v) = 1 \mid Y(v) = 1, v \in \mathcal{V}_2\} \right| \leq \varepsilon$} \nonumber
\end{equation}
which is the definition of Equal Opportunity.

\subsection{Problem Formulation}
We finally state our problem objective.
\begin{problem}[Fair GNN]
\label{prob:fairIG}
Given a graph $G = (\mathcal{V}, \mathcal{E}, \mathbf{X})$, a symmetric similarity matrix $\s$ for nodes in $\mathcal{V}$, and $k$ disjoint groups differing in their sensitive attributes (i.e, $\cup_{i=1}^{k}\mathcal{V}_{i}$), our goal is to learn node embeddings $\Z$ such that:
\begin{itemize}
    \item[1.] Overall individual fairness level is maximized (Eq.~\ref{ifgini4});
    \item[2.] Cumulative group disparity is minimized (Eq.~\ref{eq:cgdif}).
    \item[3.] The prediction quality on the node embeddings is maximized.
\end{itemize}
\end{problem}

\section{\GFGNN: Proposed Methodology}
\begin{figure}
\centering
\includegraphics[width=5.0in]{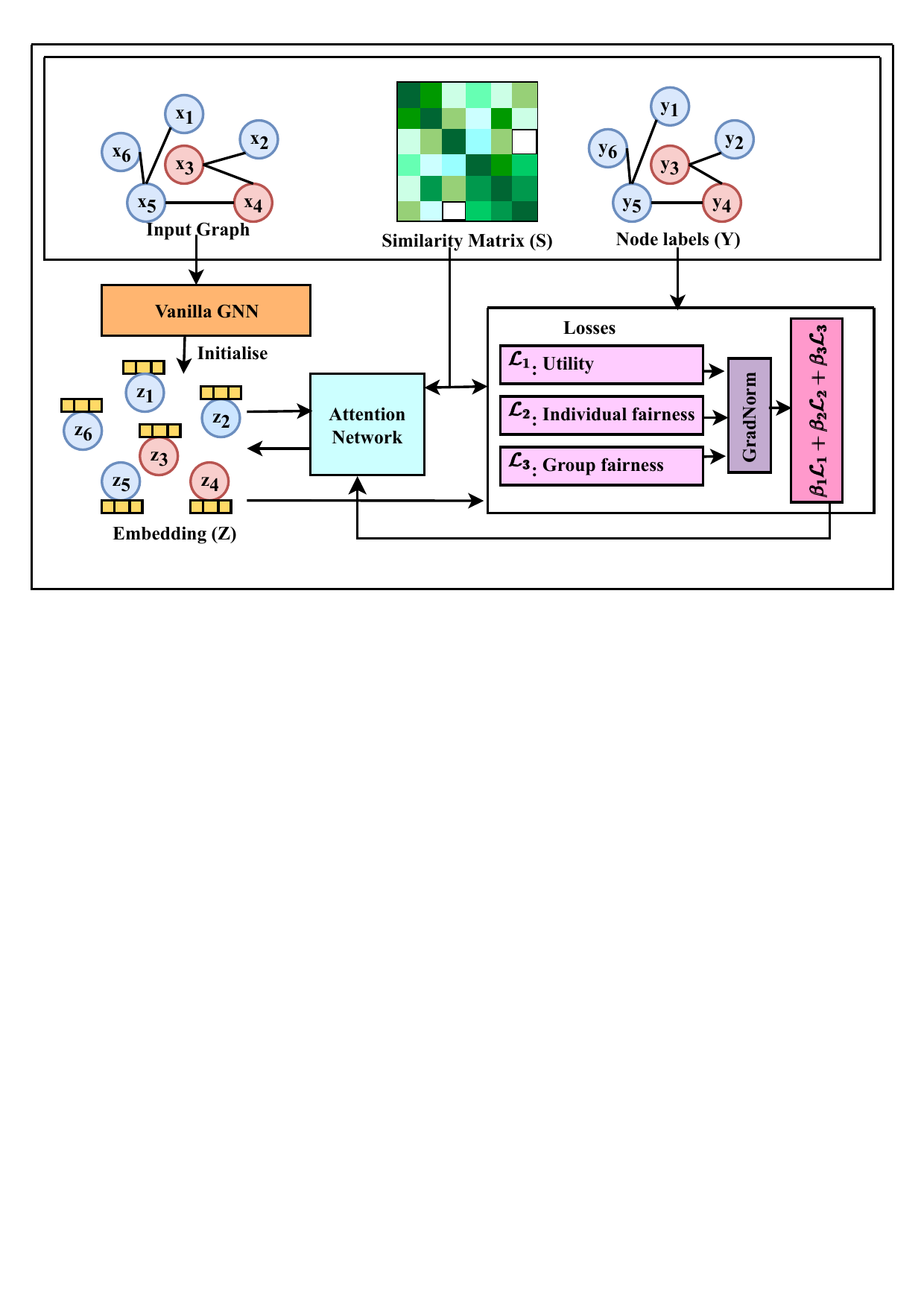}
\vspace{-9cm}
  \caption{ The figure illustrates the pipeline of \GFGNN. The sequence of actions depicted in this figure is formally encapsulated in Alg.~\ref{alg:graphgini} (in Appendix) and discussed in \S~\ref{sec:framework}.}
   \label{fig:arch}
\end{figure} 
The key challenge in learning fair node embeddings $\Z$ is to design a loss function that encapsulates all three objectives of Prob.~\ref{prob:fairIG}. Designing this loss function requires us to navigate through three key challenges. \textbf{(1)} Gini is non-differentiable and hence cannot be integrated directly into the loss function. \textbf{(2)} Second, since group disparity is a formulation over a max operator, it remains non-differentiable as well. \textbf{(3)} Finally, we need a mechanism to automatically balance the three optimization objectives of the GNN without resorting to manual adjustments of weight parameters.
\subsection{Optimizing Individual Fairness}
The weighted Gini coefficient \text{$Gini(\V)$} is not differentiable in general. In case $\|\z_i - \z_j\|_1$ is 0, the derivatives from different directions may converge to different limits. To work around this problem, we present a differentiable (and convex) upper bound to this metric.

\begin{prop} \label{prop:ub}
Given node embeddings $\mathbf{Z} \in \mathbb{R}^{n \times c}$ of graph $G = (\mathcal{V}, \mathcal{E}, \bf X)$ with node similarity matrix $\mathbf{S}$ and corresponding Laplacian $\mathbf{L}$ (Recall Def.~\ref{def:lap}), 
$$Gini(\V) \leq \text{Tr}\left(\mathbf{Z}^{T}\mathbf{L} \mathbf{Z}\right).$$
\end{prop}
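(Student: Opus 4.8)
The plan is to rewrite the right-hand side as a similarity-weighted sum over node pairs using the standard graph-Laplacian identity, and then dominate the Gini coefficient pair by pair. First I would expand the trace over the $c$ embedding coordinates: since $\mathbf{L}=\mathbf{D}-\mathbf{S}$ with $\mathbf{D}[i,i]=\sum_{j\neq i}\mathbf{S}[i,j]$, the usual computation gives $\text{Tr}(\mathbf{Z}^{T}\mathbf{L}\mathbf{Z})=\tfrac12\sum_{i=1}^{n}\sum_{j=1}^{n}\mathbf{S}[i,j]\,\|\mathbf{z}_i-\mathbf{z}_j\|_2^2$. This step does double duty: it shows $\mathbf{L}\succeq 0$, so the surrogate is a convex quadratic in $\mathbf{Z}$ (and manifestly differentiable), and it puts both quantities into the same shape — a weighted sum over all ordered pairs — so the inequality can be attacked termwise against the numerator of $Gini(\V)$.

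Next I would handle the Gini coefficient, $Gini(\V)=N(\mathbf{Z})/(2n\sum_i\|\mathbf{z}_i\|_1)$ with $N(\mathbf{Z})=\sum_{i,j}\mathbf{S}[i,j]\|\mathbf{z}_i-\mathbf{z}_j\|_1$. Using $\mathbf{S}[i,j]\in[0,1]$, the triangle inequality $\|\mathbf{z}_i-\mathbf{z}_j\|_1\le\|\mathbf{z}_i\|_1+\|\mathbf{z}_j\|_1$, and $\mathbf{D}[i,i]\le n$, the denominator already dominates $N(\mathbf{Z})$, which recovers the trivial bound $Gini(\V)\le 1$; the real work is to show that after dividing by $2n\sum_i\|\mathbf{z}_i\|_1$ each first-order gap $\|\mathbf{z}_i-\mathbf{z}_j\|_1$ is controlled by the corresponding second-order gap $\|\mathbf{z}_i-\mathbf{z}_j\|_2^2$ from the expansion in Step~1 (invoking $\|\mathbf{v}\|_1\le\sqrt{c}\,\|\mathbf{v}\|_2$ on each pair together with the normalization of the embedding rows). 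Summing the per-pair estimates and comparing with Step~1 then yields $Gini(\V)\le\text{Tr}(\mathbf{Z}^{T}\mathbf{L}\mathbf{Z})$. The degenerate case $\mathbf{z}_i=\mathbf{z}_j$ needs no special handling: both contributions vanish, and the non-differentiability of $\|\cdot\|_1$ at the origin is irrelevant because it is only ever upper-bounded by the smooth quadratic.

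The main obstacle is precisely this per-pair comparison: the numerator of $Gini(\V)$ is homogeneous of degree one in $\mathbf{Z}$ while $\text{Tr}(\mathbf{Z}^{T}\mathbf{L}\mathbf{Z})$ is homogeneous of degree two, so the bound is not scale-free and must be anchored by the normalizing factor $2n\sum_i\|\mathbf{z}_i\|_1$ in the denominator (equivalently, by a lower bound on the total embedding mass relative to the pairwise gaps and the dimension $c$). Once that normalization convention is made explicit, what remains is a routine chain of triangle inequalities and $\ell_1$–$\ell_2$ norm equivalences. I would close by re-emphasizing that, being a graph-Laplacian quadratic form with $\mathbf{L}\succeq 0$, the bound $\text{Tr}(\mathbf{Z}^{T}\mathbf{L}\mathbf{Z})$ is jointly convex and smooth in the embeddings — exactly the properties needed to use it as a differentiable regularizer in the training loss.
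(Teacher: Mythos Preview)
Your overall strategy matches the paper's: expand $\text{Tr}(\mathbf{Z}^T\mathbf{L}\mathbf{Z})=\tfrac12\sum_{i,j}\mathbf{S}[i,j]\|\mathbf{z}_i-\mathbf{z}_j\|_2^2$ via the Laplacian identity, then bridge to the $\ell_1$ numerator of $Gini(\mathcal{V})$ using $\|\mathbf{v}\|_1\le\sqrt{c}\,\|\mathbf{v}\|_2$. The paper's argument is equally brief --- it applies the norm inequality, then passes directly to $\tfrac12\sum_{i,j}\mathbf{S}[i,j]\|\mathbf{z}_i-\mathbf{z}_j\|_2^2$ with the remark that the denominator $2n\sum_i\|\mathbf{z}_i\|_1$ is positive and ``does not affect the inequality.''

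You correctly flag the crux the paper glosses over, but you do not close it, and it cannot be closed as stated. Your homogeneity observation is decisive, not a formality to defer: $Gini(\mathcal{V})$ is invariant under $\mathbf{Z}\mapsto\lambda\mathbf{Z}$, while $\text{Tr}(\mathbf{Z}^T\mathbf{L}\mathbf{Z})$ scales like $\lambda^2$. Hence for any $\mathbf{Z}$ with $Gini(\mathcal{V})>0$, taking $\lambda$ small enough drives the trace below the Gini --- the inequality simply fails without an extra assumption on the scale of $\mathbf{Z}$. Your proposed per-pair comparison $\|\mathbf{z}_i-\mathbf{z}_j\|_1/(2n\sum_k\|\mathbf{z}_k\|_1)\le\tfrac12\|\mathbf{z}_i-\mathbf{z}_j\|_2^2$ breaks for exactly the same reason whenever the gaps are small, so ``once that normalization convention is made explicit, what remains is routine'' is precisely the point at which the argument stops being a proof. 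The paper's version skips the same step (as well as the passage from $\|\cdot\|_2$ to $\|\cdot\|_2^2$); what both arguments really support is that $\text{Tr}(\mathbf{Z}^T\mathbf{L}\mathbf{Z})$ is a smooth convex surrogate sharing the same minimizer as the Gini (constant embeddings) --- which is the content of the paper's subsequent Proposition~\ref{prp:gini-hat} --- rather than a literal pointwise upper bound valid for all $\mathbf{Z}$.
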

\begin{proof} The $\ell_2$-norm, when multiplied by the square root of the dimension of the space, is an upper bound on the $\ell_{1}$-norm. Using this fact, we get:
\vspace{-0.05in}
\begin{alignat}{3}
    \sum_{i=1}^{n} \sum_{j=1}^{n} \mathbf{S}[i,j]\|\z_i - \z_j\|_{2} &\;\;\leq\;\; \sum_{j=1}^{|\mathcal{E}|}  \mathbf{S}[i,j]\|\z_i - \z_j\|_{1} 
  \;\;\leq\;\; \sqrt{c}\sum_{i=1}^{n} \sum_{j=1}^{n} \mathbf{S}[i,j]\|\z_i - \z_j\|_{2}
\end{alignat}
where $c$ is the dimension of the GNN embeddings. Now, substituting this inequality into the expression for $Gini(\mathcal{V})$ (Eq.~\ref{ifgini4}), we have:
\begin{multline}\label{ifgini3}
 \frac{1}{2 \sqrt{c}} \sum_{i=1}^{n} \sum_{j=1}^{n} \mathbf{S}[i,j]\|\z_i - \z_j\|_{2}  \;\leq\;  \text{$Gini(\V)$}  
\;\leq\; \frac{1}{2} \sum_{i=1}^{n} \sum_{j=1}^{n}\mathbf{S}[i,j]\|\z_i - \z_j\|_{2}^{2} = Tr\left(\mathbf{Z}^{T}\mathbf{L} \mathbf{Z}\right)
 \end{multline}
\textcolor{black}{as the denominator in Eq.~\ref{ifgini4} is strictly positive and scales both sides of the inequality, thereby does not affect the inequality.}
\end{proof} 

\textcolor{black}{The upper bound in Proposition~\ref{prop:ub} has appeared as a heuristic individual fairness regularizer in prior work~\cite{kang2020inform}. Here, we present a new interpretation based on the Gini coefficient.} Hereon, we use the notation $\widehat{Gini}(\V)=Tr\left(\mathbf{Z}^{T}\mathbf{L} \mathbf{Z}\right)$ to denote the upper bound on Gini with respect to node set $\V$ characterized by similarity matrix $\s$ and embeddings $\Z$. An alternative solution to work around the non-differentiability is to approximate it using smooth pooling-operator surrogates such as softmax and top-$k$ as individual-fairness regularizers. In Appendix~\ref{pooling}, we demonstrate that these surrogates are empirically inferior to the proposed differentiable upper bound. \textcolor{black}{ Next, we discuss the effectiveness of minimizing this differentiable upper bound as a surrogate for the true Gini coefficient. The following proposition outlines how optimisation over the relaxed objective remains aligned with reductions in the original fairness measure, i.e., the Gini coefficient.}
\begin{prop}  \label{prp:gini-hat}  
The same value of \(\mathbf{Z}\) minimizes both \(Gini(\mathcal{V})\) and its differentiable upper bound \(\widehat{Gini}(\mathcal{V})\). Moreover, since \(\widehat{Gini}(\mathcal{V}) = \operatorname{Tr}(\mathbf{Z}^{T} \mathbf{L} \mathbf{Z})\) is convex in \(\mathbf{Z}\), we can efficiently find its global minimum. Consequently, solving for the minimizer of \(\widehat{Gini}(\mathcal{V})\) guarantees minimization of the true Gini coefficient as well.  
\end{prop}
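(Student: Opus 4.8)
The plan is to derive the proposition from the sandwich inequality of Proposition~\ref{prop:ub} together with the elementary fact that both $Gini(\mathcal{V})$ and $\widehat{Gini}(\mathcal{V})$ are nonnegative and both attain the value $0$. First I would record that $\widehat{Gini}(\mathcal{V}) = \operatorname{Tr}(\mathbf{Z}^{T}\mathbf{L}\mathbf{Z}) = \sum_{k=1}^{c} \mathbf{Z}[:,k]^{T}\mathbf{L}\,\mathbf{Z}[:,k] \ge 0$, since the similarity Laplacian $\mathbf{L}$ (Def.~\ref{def:lap}) is built from nonnegative weights and is therefore positive semidefinite; the same column decomposition exhibits $\widehat{Gini}$ as a finite sum of convex quadratic forms in disjoint blocks of the entries of $\mathbf{Z}$, hence a convex function of $\mathbf{Z}$, which is exactly the ``efficiently find its global minimum'' part of the claim. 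Next I would exhibit an explicit common minimizer: for any fixed nonzero $\mathbf{u}\in\mathbb{R}^{c}$, the collapsed matrix $\mathbf{Z}_{\mathbf{u}}$ whose every row equals $\mathbf{u}$ has $\mathbf{z}_i - \mathbf{z}_j = \mathbf{0}$ for all $i,j$, so the numerator of $Gini(\mathcal{V})$ in Eq.~\ref{ifgini4} vanishes and $\widehat{Gini}(\mathbf{Z}_{\mathbf{u}}) = 0$, while the denominator $2n\sum_i\|\mathbf{z}_i\|_1 = 2n^{2}\|\mathbf{u}\|_1 > 0$ keeps the ratio well defined. This gives $\min_{\mathbf{Z}} \widehat{Gini}(\mathcal{V}) = \min_{\mathbf{Z}} Gini(\mathcal{V}) = 0$.

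The core step is then immediate: Proposition~\ref{prop:ub} yields $0 \le Gini(\mathcal{V}) \le \widehat{Gini}(\mathcal{V})$ at every $\mathbf{Z}$ on which $Gini$ is defined, so any $\mathbf{Z}^{\star}$ with $\widehat{Gini}(\mathbf{Z}^{\star}) = 0$ is squeezed to $Gini(\mathbf{Z}^{\star}) = 0$ and is therefore also a global minimizer of the true Gini coefficient; that is, $\argmin\widehat{Gini} \subseteq \argmin Gini$ on the common domain. Combining this inclusion with the convexity established above — so that a global minimizer of the surrogate $\widehat{Gini}$ is reachable by first-order optimization — gives the stated conclusion that minimizing the differentiable upper bound also minimizes $Gini(\mathcal{V})$.

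The part that needs care rather than computation is the degeneracy of the minimizer set. The minimizers of $\widehat{Gini}$ are exactly the null space of $\mathbf{L}$ (for a connected similarity graph, precisely the constant-row matrices $\mathbf{Z}_{\mathbf{u}}$, $\mathbf{u}\in\mathbb{R}^{c}$), and this set contains $\mathbf{Z} = \mathbf{0}$, at which $Gini$ is the indeterminate form $0/0$. I would therefore phrase the proposition with the understanding that $\mathbf{Z}=\mathbf{0}$ is excluded — equivalently, a mild scale normalization such as $\sum_i\|\mathbf{z}_i\|_1 = 1$ is imposed — so that the common minimizers are the nonzero collapsed matrices, and I would check explicitly that Proposition~\ref{prop:ub} is only invoked on $\{\mathbf{Z} : \sum_i\|\mathbf{z}_i\|_1 > 0\}$, which is exactly the domain on which the conclusion is asserted. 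I would also add a one-line scoping remark: the statement concerns the individual-fairness regularizer in isolation, and once the utility term enters the training loss the joint optimum is no longer a collapsed embedding, so what the proposition actually buys is that descent on $\widehat{Gini}$ never moves $\mathbf{Z}$ in a direction that worsens $Gini$.
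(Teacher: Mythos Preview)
Your approach is essentially the paper's own: exhibit the constant-row (collapsed) embeddings as simultaneous zero-minimizers of both $Gini(\mathcal{V})$ and $\widehat{Gini}(\mathcal{V})$, then invoke convexity of $\operatorname{Tr}(\mathbf{Z}^{T}\mathbf{L}\mathbf{Z})$ (the paper does this via an explicit Hessian lemma, you via the cleaner column-wise decomposition into PSD quadratic forms). Your treatment is in fact more careful than the paper's, since you flag the $\mathbf{Z}=\mathbf{0}$ indeterminacy and restrict to $\sum_i\|\mathbf{z}_i\|_1>0$, which the paper glosses over.

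One caveat: your closing scoping remark --- that ``descent on $\widehat{Gini}$ never moves $\mathbf{Z}$ in a direction that worsens $Gini$'' --- is an overclaim not supported by the argument. You have shown $\argmin\widehat{Gini}\subseteq\argmin Gini$ at the global level, but nothing about monotone alignment of descent directions at intermediate iterates; drop or soften that sentence.
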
  

\begin{proof}  
The detailed proof is provided in App.~\ref{app:tightness_diff}. 
\end{proof}  

\subsection{Optimizing Group Fairness through Nash Social Welfare Program}
Group fairness optimization, defined in Eq.~\ref{eq:cgdif}, is a function of the Gini coefficient. Hence, Eq.~\ref{eq:cgdif} is non-differentiable as well. The natural relaxation to obtain a differentiable proxy function is therefore to replace $Gini(\V_g)$ with $\widehat{Gini}(\V_g)=Tr\left(\mathbf{Z}_g^{T}\mathbf{L_g} \mathbf{Z}_g\right)$ in Eq.~\ref{eq:gdif}. Here $\V_g\subseteq\V$ is a group of nodes, $\Z_g=\{\z_i\in \Z\mid v_i\in \V_g\}$ are the node embeddings of $\V_g$, and $\lap_g$ is the Laplacian of the similarity matrix $\s_g:\V_g\times\V_g\rightarrow\mathbb{R}$. Unfortunately, even with this modification, optimization of group fairness remains non-differentiable since a function of the form $max\{a,b\}$ is differentiable everywhere except at $a=b$.\footnote{While ReLU is also a max operator, it represents a special case handled via pre-coded values. They do not transfer to generic loss functions. See App.~\ref{app:differentiability} for details.\looseness=-1} To circumvent this non-differentiability, we employ \textit{Nash Social Welfare Program (NSWP)} optimization~\citep{charkhgard2022magic2}.

The NSWP is a technique in mathematical programming that combines multiple objectives into a single objective, leading to a Pareto-optimal solution. The core idea involves creating a product of the differences between two minimization terms, both referenced to a common point. In our context, $\frac{\widehat{Gini}(\V_g)}{\widehat{Gini}(\V_h)}$ and $\frac{\widehat{Gini}(\V_h)}{\widehat{Gini}(\V_g)}$ are the competing terms to be minimized. We choose 1 as the common reference. Then, using the NSWP formulation, the resulting optimization leads to the following expression:
\begin{alignat}{3}\label{eq:nash}
 -\left( \frac{\widehat{Gini}(\V_g)}{\widehat{Gini}(\V_h)} - 1 \right) \left( \frac{\widehat{Gini}(\V_h)}{\widehat{Gini}(\V_g)} - 1 \right)
\end{alignat}

Intuitively, group fairness is maximized when for any pair of groups $Gini(\V_g)=Gini(\V_h)$. On the other hand, if $Gini(\V_g)\gg Gini(\V_h)$ then Eq.~\ref{eq:nash} approaches $\infty$. We now establish that the solution obtained by minimizing Eq.~\ref{eq:nash} is \textit{Pareto optimal}.
\begin{prop} \label{prp:pareto_opt}
Minimizing Eq.~\ref{eq:nash} produces Pareto optimal solutions for group fairness over all groups.
\end{prop}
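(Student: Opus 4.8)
The plan is to show that the objective in Eq.~\ref{eq:nash} is exactly an instance of the Nash Social Welfare Program (NSWP) applied to the two competing objectives $f_1(\mathbf{Z}) = \widehat{Gini}(\V_g)/\widehat{Gini}(\V_h)$ and $f_2(\mathbf{Z}) = \widehat{Gini}(\V_h)/\widehat{Gini}(\V_g)$, each measured against the common reference point $1$, and then invoke the known Pareto-optimality guarantee of the NSWP for multi-objective minimization~\citep{charkhgard2022magic2}. Concretely, I would first recall that for a biobjective minimization problem with objectives referenced against a utopia/reference point $\vec{r}$, maximizing the product $\prod_i (r_i - f_i)$ over the feasible region yields a Pareto optimal point; equivalently minimizing $-\prod_i (r_i - f_i) = -\prod_i (f_i - r_i)$ (up to the even number of sign flips, here two factors) gives the same. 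Setting $r_1 = r_2 = 1$ reproduces Eq.~\ref{eq:nash} verbatim.

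The core argument would then be the standard contradiction proof of Pareto optimality for NSWP. Let $\mathbf{Z}^\star$ be a minimizer of Eq.~\ref{eq:nash}, with objective value $-(f_1(\mathbf{Z}^\star)-1)(f_2(\mathbf{Z}^\star)-1)$. Suppose for contradiction that $\mathbf{Z}^\star$ is not Pareto optimal, i.e., there exists feasible $\mathbf{Z}'$ with $f_1(\mathbf{Z}') \le f_1(\mathbf{Z}^\star)$ and $f_2(\mathbf{Z}') \le f_2(\mathbf{Z}^\star)$, at least one inequality strict. Since $f_1 f_2 = 1$ identically (the two ratios are reciprocals), $f_1, f_2 \geq 0$, and by AM–GM each of $f_1, f_2 \geq 1$ is \emph{not} automatic — rather, $f_1 \le 1 \iff f_2 \ge 1$. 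I would handle this by noting $(f_1 - 1)(f_2 - 1) = f_1 f_2 - f_1 - f_2 + 1 = 2 - (f_1 + f_2) \le 0$ always, with equality (value $0$) precisely when $f_1 = f_2 = 1$. So Eq.~\ref{eq:nash} equals $(f_1 + f_2) - 2 \ge 0$, and minimizing it is equivalent to minimizing $f_1 + f_2$ subject to $f_1 f_2 = 1$, whose unique optimum is $f_1 = f_2 = 1$ by AM–GM. That single point is trivially Pareto optimal: any improvement in one ratio forces the reciprocal to increase, violating dominance. I would then conclude by extending from a single pair $(\V_g,\V_h)$ to all $m$ groups: the full group-fairness objective is the sum (over pairs) of such NSWP terms (or the product, matching Eq.~\ref{eq:cgdif}'s averaging structure), and a point that simultaneously equalizes every pairwise $\widehat{Gini}$ ratio is Pareto optimal for the joint problem because any deviation strictly worsening some $GDIF(\V_i,\V_j)$ cannot be compensated, again by the reciprocal structure.

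The main obstacle I anticipate is being careful about \emph{which} Pareto-optimality statement is actually true and non-vacuous. Because $f_1$ and $f_2$ are exact reciprocals, the Pareto frontier of the pair $(f_1, f_2)$ over the \emph{unconstrained} image is degenerate — it is the single point $(1,1)$ only if $(1,1)$ is attainable, and otherwise the frontier is a curve $f_1 f_2 = 1$ where \emph{every} point is Pareto optimal (no point dominates another since decreasing $f_1$ forces increasing $f_2$). So I must be precise: either (a) if the feasible set of embeddings allows $\widehat{Gini}(\V_g) = \widehat{Gini}(\V_h)$, then the NSWP minimizer achieves it and is Pareto optimal; or (b) in general, \emph{any} minimizer of Eq.~\ref{eq:nash} is Pareto optimal because on the constraint surface $f_1 f_2 = 1$ every feasible point is already non-dominated. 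The cleaner route is (b): show no feasible $\mathbf{Z}'$ can dominate the minimizer $\mathbf{Z}^\star$, using only $f_1(\mathbf{Z}')f_2(\mathbf{Z}') = 1 = f_1(\mathbf{Z}^\star)f_2(\mathbf{Z}^\star)$ — if $f_1(\mathbf{Z}') \le f_1(\mathbf{Z}^\star)$ then $f_2(\mathbf{Z}') = 1/f_1(\mathbf{Z}') \ge 1/f_1(\mathbf{Z}^\star) = f_2(\mathbf{Z}^\star)$, so strict improvement in one coordinate forces strict worsening in the other, contradicting dominance. This makes the proof short and robust, and I would present it this way, then remark that the $m$-group case follows by applying the same reciprocal argument pairwise and citing~\citep{charkhgard2022magic2} for the general NSWP Pareto guarantee underpinning the aggregate formulation.
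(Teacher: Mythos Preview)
Your proposal is correct and broadly parallels the paper's own argument: both proceed by contradiction, assuming a dominating feasible point exists and deriving a violation of the minimality of Eq.~\ref{eq:nash}. The paper's proof is extremely terse---it fixes $\widehat{Gini}(\V_h)=\epsilon_h$, perturbs $\widehat{Gini}(\V_g)$ to $\epsilon_g^*$, and argues this contradicts optimality of the Nash product.

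The substantive difference is the space in which you frame Pareto optimality. The paper works directly with the pair $(\epsilon_g,\epsilon_h)=\bigl(\widehat{Gini}(\V_g),\widehat{Gini}(\V_h)\bigr)$, asserting that one cannot lower $\epsilon_h$ without raising $\epsilon_g$. You instead work with the ratio pair $(f_1,f_2)=(\epsilon_g/\epsilon_h,\,\epsilon_h/\epsilon_g)$ and exploit the identity $f_1f_2=1$ to conclude that \emph{every} feasible point is non-dominated in ratio space, so the minimizer is trivially Pareto optimal there. Your route is cleaner and fully rigorous, and your algebraic simplification $-(f_1-1)(f_2-1)=f_1+f_2-2$ is a nice observation the paper does not make. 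The trade-off is that your Pareto statement in $(f_1,f_2)$ space is weaker (indeed vacuous, as you yourself note), whereas the paper's statement in $(\epsilon_g,\epsilon_h)$ space is the more natural one for ``group fairness over all groups'' and is not automatically satisfied by every feasible point. If you want to match the paper's intended claim, you would run your contradiction directly on $(\epsilon_g,\epsilon_h)$: assume $\epsilon_g'\le\epsilon_g$, $\epsilon_h'\le\epsilon_h$ with one strict, and show the Nash objective strictly decreases---which requires a monotonicity check on $-(x/y-1)(y/x-1)=x/y+y/x-2$ rather than the reciprocal trick.
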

\begin{proof} 
The detailed proof is provided in App.~\ref{app:parito_optimal_proof}.
\end{proof}

% \begin{proof}
%      Let the Gini upper bounds minimizing Eq.~\ref{eq:nash} be $\widehat{Gini}(\V_g)=\epsilon_g$ and $\widehat{Gini}(\V_h)=\epsilon_h$. The solution is Pareto optimal if we cannot reduce $\epsilon_h$ without increasing $\epsilon_g$ and vice-versa. We establish Pareto optimality through proof by contradiction. Suppose that there exists an assignment of node embeddings for which we get $\widehat{Gini}(\V_g)=\epsilon^*_g$, $\widehat{Gini}(\V_h)=\epsilon_h$ and $\epsilon^*_g>\epsilon_g$. Since Eq.~\ref{eq:nash} is minimized at $\epsilon_g$ and $\epsilon_h$, this means that $-\left(\frac{\epsilon_g}{\epsilon_h}-1\right)\left(\frac{\epsilon_h}{\epsilon_g}-1\right)<-\left(\frac{\epsilon^*_g}{\epsilon_h}-1\right)\left(\frac{\epsilon_g}{\epsilon_h}-1\right)$, which is a contradiction.
% \end{proof}
%%%%%%%%%%%%%%%%%%%%%%%%%%%%%%%%%%%%%%%%%%%%%%%%%%%%%%%%%%%%%%%%%%%
\subsection{Learning Framework} \label{sec:framework}
\GFGNN strives to achieve balance among three distinct goals:
\begin{enumerate}
\item \textbf{Utility Loss:} We assume $\loss_1$ denotes the loss term for the GNN prediction task.
\item \textbf{Loss for individual fairness:} To maximize the overall individual fairness level, $Gini(\V)$, we minimize:
\begin{equation}\label{l1}
\mathcal{L}_{2} = - Tr\left(\mathbf{Z}^{T} \mathbf{L} \mathbf{Z}\right), 
\end{equation}
\item \textbf{Loss for group fairness:} To maximize group fairness across a set of disjoint node partitions $\{V_1,\cdots,\V_m\}$, we minimize:
\begin{equation}
\nonumber
\mathcal{L}_{3} = -  \frac{1}{m(m-1)}\sum_{\forall i,j\in [1,m],\; i\neq j}  \left( \frac{\widehat{Gini}(\V_i)}{\widehat{Gini}(\V_j)} -1 \right) \left( \frac{\widehat{Gini}(\V_j)}{\widehat{Gini}(\V_i)} -1 \right)
\end{equation}
\end{enumerate}
Incorporating all of the above loss terms, we obtain:
\begin{equation}
\label{eq:loss}
  \mathcal{L} = \beta_{1}\mathcal{L}_{1} + \beta_{2} \mathcal{L}_{2}  + \beta_3 \mathcal{L}_{3}.
\end{equation}
Here, $\beta_{i}$s are tunable hyper-parameters. 

The overall framework of \GFGNN is presented in Fig.~\ref{fig:arch}. Alg.~\ref{alg:graphgini} in the Appendix presents the pseudocode. First, embeddings $\Z$ for nodes are learned on the input GNN using only the utility loss. These initial embeddings are now enhanced by injecting fairness through a Graph Attention Network (GAT)~\citep{velickovic2017}, where the attention is reinforced in proportion to the similarity strength. 
\begin{equation}\label{eq:att_eq}
  \alpha_{i,j}  = \frac{\exp(\text{LeakyReLU}(\mathbf{a}^{T}[\mathbf{Wh}^{\ell}_{i}\|\mathbf{Wh}^{\ell}_{j}])\mathbf{S}[i,j])}{\sum_{j \in \mathcal{N}_{i}} \exp(\text{LeakyReLU}(\mathbf{a}^{T}[\mathbf{Wh}^{\ell}_{i}\|\mathbf{Wh}^{\ell}_{j}])\mathbf{S}[i,j])},
\end{equation}
Here, $\mathbf{W} \in \mathbb{R}^{d\times h}$ is the weight matrix, $\mathbf{h}^{\ell} \in \mathbb{R}^{h}$ is the embedding for node $i$ in layer $\ell$, $\mathbf{a} \in \mathbb{R}^{2d}$ is the attention vector. The symbol $[.\|.]$ denotes  concatenation of vectors and $\mathcal{N}_{i}$ represents the neighbourhood of node $i$. Using pairwise attention, the embeddings are aggregated as $\mathbf{h}^{\ell+1}_{i}  = \sigma( \Sigma_{j \in \mathcal{N}_{i}} \alpha_{i,j} \mathbf{Wh}^{\ell}_{j})$.
%%%%%%%%%%%%%%%%%%%%%%%%%%%%%%%%%%%%%%%%%%%%%%%%%%%%%%%%%%%%%%%%%%%%%%%%%%%%%%%%%%%%%%
\subsection{Balanced Optimization with GradNorm} 
The optimisation process requires a delicate calibration of weighing factors, i.e., coefficients $\beta_{i}$'s in our loss function (Eq.~\ref{eq:loss}). Manual tuning, the predominant approach among existing techniques, is challenging as these weights not only determine the relative importance of fairness objectives compared to utility but also play a dual role in normalizing individual terms within the loss function. We perform \textit{gradient normalization} to automatically learn the weights~\citep{chen2018gradnorm}.

Let $\theta_t$ be the parameters of the GAT at epoch $t$ and $\loss_i(t),\;i\in\{1,2,3\}$ denotes the value of the loss term $\loss_i$ in the $t$-th epoch. We initiate the learning process with initialized weights $\beta_i(0)$. In later rounds ($t>0$), we treat each $\beta_i(t)$ as adjustable parameters aimed at minimizing a modified loss function, which we will develop next.

First, corresponding to each $\loss_i(t)$, we compute the $\ell_2$ norm of the gradient.
\begin{equation}
G_i(t)=\|\Delta_{\theta_t}\beta_i(t)\loss_i(t)\|_2
\end{equation} 
The \textit{loss ratio} in the $t$-th epoch is measured as $R_i(t)=\frac{\loss_i(t)}{\loss_i(0)}$, which reveals the inverse \textit{training rate}, i.e., the lower the ratio, the higher is the training. Finally, instead of optimizing the original loss in $\loss$ ( Eq.~\ref{eq:loss}), we minimize the loss below that operates on the gradient space of the individual loss terms. Specifically,
\begin{alignat}{2}
    \label{eq:gradnorm}
    \loss_{gradnorm}(t)&=\sum_{i=1}^3\left\|\beta_i(t)G_i(t)-\bar{G}(t)\times \frac{R_i(t)}{\bar{R}(t)}\right\|_1\\
    \nonumber
    \text{where, } \bar{G}(t)&=\frac{1}{3}\sum_{i=1}^3 G_i(t) \text{, 
 }\hspace{0.2in}\bar{R}(t)=\frac{1}{3}\sum_{i=1}^3 R_i(t)
\end{alignat}
Intuitively, Eq.~\ref{eq:gradnorm} strives to achieve similar training rates across all individual loss terms. Hence, it penalizes weighing factor $\beta_i(t)$ if the training rate is higher in $\loss_i(t)$ than the average across all terms.
\subsection{Convergence and Time Complexity}
\textbf{Convergence.} 
Since our framework uses standard GNN architectures, the convergence analysis of
gradient descent for GNNs from \cite{awasthi2021convergence} which assumes a smooth and bounded loss function, directly applies. Therefore, the optimization procedure for \GFGNN is guaranteed to converge under the same conditions.
A detailed discussion over convergence is provided in the Appendix~\ref{app:convergence_time}.
\noindent
\\
\textbf{Time complexity.} 
The base Graph Neural Network (GNN) has a time complexity of $O(|\mathcal{E}|)$, where $|\mathcal{E}|$ represents the number of edges in the graph. Both \GFGNN and the primary baseline, GUIDE, require an additional step of computing the Laplacian matrix. This computation is linear with respect to the number of non-zero entries in the similarity matrix, which in the worst-case scenario is $O(|\mathcal{V}|^2)$, where $|\mathcal{V}|$ is the number of vertices in the graph. The overall time complexity of these fair GNN approaches can thus be expressed as $O(|\mathcal{E}| + |\mathcal{V}|^2)$, accounting for both the base GNN operations and the Laplacian computation. The runtime comparison is also empirically demonstrated in Appendix~\ref{app:convergence_time}. 
%%%%%%%%%%%%%%%%%%%%%%%%%%%%%%%%%%%%%%%%%%%%%%%%%%%%%%%%%%%%%%%%%%%%%%%%%%
\begin{table}[t]
\caption{Summary statistics of used datasets.}
  \label{datasetsummary}
  \centering
  \scalebox{0.99}{
  \begin{tabular}{llllll}
    \toprule
    Name     & \# nodes & \# node  & \# of edges & \# edges & Sensitive  \\
    & & attributes & in $A$ &in $S$  & Attribute\\
    \midrule
    Credit   & 30,000 & 13  & 304,754    & 1, 687, 444  & age\\
    Income   & 14,821 & 14  & 100,483    & 1, 997, 641  & race\\
    Pokec-n  & 66,569 & 266 &  1,100,663 & 32, 837, 463 & age\\
    \bottomrule
  \end{tabular}}
\end{table}
%%%%%%%%%%%%%%%%%%%
\begin{table*}[ht]
\caption{ Performance comparison of benchmarked algorithms across datasets using the GCN backbone architecture. "Vanilla" indicates no debiasing. 
Arrows: $\uparrow$ = higher is better, $\downarrow$ = lower is better. 
Best performance per column is in \textbf{bold}. Individual fairness (IF) is reported in thousands.}
\label{table:result1}
\centering
\begin{tabular}{l|ccccc}
\toprule
\rowcolor{gray!25}
\textbf{Model} & \textbf{AUC} ($\uparrow$) & \textbf{IF} ($\downarrow$) & \textbf{GD} ($\downarrow$) & \textbf{IF-Gini} ($\downarrow$) & \textbf{GD-Gini} ($\downarrow$) \\
\midrule
\multicolumn{6}{c}{\textbf{Credit}} \\
\midrule
Vanilla & 0.68$\pm$0.10 & 40.01$\pm$2.16 & 1.34$\pm$0.05 & 0.30$\pm$0.01 & 1.72$\pm$0.02 \\
FairGNN & 0.68$\pm$0.05 & 23.35$\pm$13.39 & 1.34$\pm$0.10 & 0.26$\pm$0.02 & 1.69$\pm$0.01 \\
NIFTY & 0.69$\pm$0.01 & 30.85$\pm$1.45 & 1.25$\pm$0.05 & 0.26$\pm$0.02 & 1.67$\pm$0.01 \\
PFR & 0.64$\pm$0.16 & 36.57$\pm$7.90 & 1.46$\pm$0.02 & 0.30$\pm$0.00 & 1.72$\pm$0.01 \\
InFoRM & 0.68$\pm$0.02 & 2.42$\pm$0.02 & 1.45$\pm$0.02 & 0.18$\pm$0.01 & 1.73$\pm$0.01 \\
\rev{PostProcess} & \textbf{0.70}$\pm$0.00 & 40.23$\pm$0.00 & 1.40$\pm$0.00 & 0.30$\pm$0.02 & 1.74$\pm$0.01 \\
\rev{iFairNMTF} & 0.69$\pm$0.01 & 40.62$\pm$1.34 & 1.38$\pm$0.02 & 0.29$\pm$0.02 & 1.71$\pm$0.01 \\
\rev{GNN GEI} & 0.69$\pm$0.00 & 40.21$\pm$1.32 & 1.35$\pm$0.05 & 0.28$\pm$0.01 & 1.70$\pm$0.02 \\
\rev{TF-GNN} & 0.69$\pm$0.00 & 12.80$\pm$0.00 & 1.45$\pm$0.00 & 0.19$\pm$0.01 & 1.71$\pm$0.02 \\
\rev{BeMAP} & 0.68$\pm$0.00 & 3.11$\pm$1.23 & 1.20$\pm$0.00 & 0.15$\pm$0.02 & 1.73$\pm$0.02 \\
\rev{FairSIN} & 0.68$\pm$0.00 & 8.23$\pm$3.20 & 1.15$\pm$0.00 & 0.20$\pm$0.02 & 1.69$\pm$0.01 \\
\guide & 0.68$\pm$0.02 & 1.94$\pm$0.10 & \textbf{1.00}$\pm$0.00 & 0.14$\pm$0.01 & 1.41$\pm$0.01 \\
\rowcolor{green!15}
\GFGNN & 0.68$\pm$0.00 & \textbf{0.22$\pm$0.06} & \textbf{1.00}$\pm$0.00 & \textbf{0.12}$\pm$0.01 & \textbf{1.10}$\pm$0.01 \\
\midrule
\multicolumn{6}{c}{\textbf{Income}} \\
\midrule
Vanilla & 0.77$\pm$0.01 & 370.81$\pm$0.02 & 1.30$\pm$0.01 & 0.32$\pm$0.01 & 1.56$\pm$0.02 \\
FairGNN & 0.76$\pm$0.00 & 250.22$\pm$85.32 & 1.16$\pm$0.03 & 0.30$\pm$0.02 & 1.55$\pm$0.01 \\
NIFTY & 0.73$\pm$0.00 & 44.32$\pm$5.67 & 1.39$\pm$0.03 & 0.25$\pm$0.01 & 1.60$\pm$0.01 \\
PFR & 0.75$\pm$0.01 & 245.95$\pm$0.50 & 1.33$\pm$0.01 & 0.30$\pm$0.02 & 1.56$\pm$0.02 \\
InFoRM & \textbf{0.78}$\pm$0.01 & 199.20$\pm$0.04 & 1.35$\pm$0.04 & 0.26$\pm$0.02 & 1.54$\pm$0.00 \\
\rev{PostProcess} & 0.77$\pm$0.00 & 367.62$\pm$0.00 & 1.28$\pm$0.00 & 0.31$\pm$0.01 & 1.55$\pm$0.01 \\
\rev{iFairNMTF} & 0.77$\pm$0.00 & 358.20$\pm$0.32 & 1.28$\pm$0.01 & 0.30$\pm$0.02 & 1.57$\pm$0.01 \\
\rev{GNN GEI} & 0.77$\pm$0.00 & 357.23$\pm$5.04 & 1.47$\pm$0.01 & 0.30$\pm$0.02 & 1.56$\pm$0.01 \\
\rev{TF-GNN} & 0.76$\pm$0.00 & 25.65$\pm$0.00 & 1.85$\pm$0.00 & 0.24$\pm$0.01 & 1.60$\pm$0.01 \\
\rev{BeMAP} & 0.73$\pm$0.00 & 211.66$\pm$25.34 & 1.33$\pm$0.00 & 0.26$\pm$0.01 & 1.57$\pm$0.01 \\
\rev{FairSIN} & 0.73$\pm$0.00 & 287.22$\pm$38.75 & 1.42$\pm$0.00 & 0.27$\pm$0.01 & 1.61$\pm$0.01 \\
\guide & 0.73$\pm$0.01 & 33.20$\pm$12.14 & \textbf{1.00}$\pm$0.00 & 0.20$\pm$0.01 & 1.10$\pm$0.01 \\
\rowcolor{green!15}
\GFGNN & 0.73$\pm$0.09 & \textbf{21.12}$\pm$\textbf{5.22} & \textbf{1.00}$\pm$0.00 & \textbf{0.17}$\pm$0.01 & \textbf{1.02}$\pm$0.01 \\
\midrule
\multicolumn{6}{c}{\textbf{Pokec-n}} \\
\midrule
Vanilla & \textbf{0.77}$\pm$0.01 & 950.28$\pm$39.11 & 6.93$\pm$0.10 & 0.41$\pm$0.02 & 1.80$\pm$0.01 \\
FairGNN & 0.69$\pm$0.03 & 363.73$\pm$78.58 & 6.29$\pm$1.28 & 0.35$\pm$0.01 & 1.76$\pm$0.02 \\
NIFTY & 0.74$\pm$0.00 & 85.25$\pm$10.55 & 5.06$\pm$0.29 & 0.31$\pm$0.01 & 1.72$\pm$0.02 \\
PFR & 0.53$\pm$0.00 & 98.25$\pm$9.44 & 15.84$\pm$0.03 & 0.32$\pm$0.02 & 1.70$\pm$0.01 \\
InFoRM & \textbf{0.77}$\pm$0.00 & 230.45$\pm$6.13 & 6.62$\pm$0.10 & 0.34$\pm$0.01 & 1.69$\pm$0.01 \\
\rev{PostProcess} & \textbf{0.77}$\pm$0.00 & 872.12$\pm$82.23 & 5.93$\pm$0.27 & 0.39$\pm$0.01 & 1.72$\pm$0.02 \\
\rev{iFairNMTF} & 0.76$\pm$0.00 & 781.29$\pm$98.45 & 7.23$\pm$0.11 & 0.39$\pm$0.01 & 1.73$\pm$0.01 \\
\rev{GNN GEI} & \textbf{0.77}$\pm$0.00 & 875.11$\pm$9.31 & 6.43$\pm$8.31 & 0.37$\pm$0.01 & 1.74$\pm$0.02 \\
\rev{TF-GNN} & 0.74$\pm$0.00 & 245.48$\pm$11.43 & 9.28$\pm$0.10 & 0.33$\pm$0.02 & 1.81$\pm$0.01 \\
\rev{BeMAP} & 0.73$\pm$0.00 & 372.00$\pm$48.75 & 2.29$\pm$0.00 & 0.31$\pm$0.01 & 1.41$\pm$0.02 \\
\rev{FairSIN} & 0.73$\pm$0.00 & 482.00$\pm$39.20 & 2.88$\pm$0.03 & 0.31$\pm$0.02 & 1.41$\pm$0.03 \\
\guide & 0.73$\pm$0.02 & 55.05$\pm$30.87 & 1.11$\pm$0.03 & 0.24$\pm$0.02 & 1.19$\pm$0.01\\ 
\rowcolor{green!15}
\GFGNN & 0.74$\pm$0.00 & \textbf{31.10}$\pm$\textbf{5.22} & \textbf{1.00}$\pm$0.00 & \textbf{0.21}$\pm$0.01 & \textbf{1.14}$\pm$0.01 \\
\bottomrule
\end{tabular}
\end{table*}
%%%%%%%%%%%%%%%%%%%%%%% GIN %%%%%%%%%%%%%%%%%%%%%%%%%%%%%%%%%%%%%%%%%%%%%%%%%%%%%%%%
\begin{table*}[ht]
\caption{ Performance comparison of benchmarked algorithms across datasets using the GIN backbone architecture. "Vanilla" indicates no debiasing. 
Arrows: $\uparrow$ = higher is better, $\downarrow$ = lower is better. 
Best performance per column is in \textbf{bold}. Individual fairness (IF) is reported in thousands.}
\label{table:result1GIN}
\centering
\begin{tabular}{l|ccccc}
\toprule
\rowcolor{gray!25}
\textbf{Model} & \textbf{AUC} ($\uparrow$) & \textbf{IF} ($\downarrow$) & \textbf{GD} ($\downarrow$) & \textbf{IF-Gini} ($\downarrow$) & \textbf{GD-Gini} ($\downarrow$) \\
\midrule
\multicolumn{6}{c}{\textbf{Credit}} \\
\midrule
Vanilla & \bf0.71$\pm$0.00 & 118.02$\pm$16.22 & 1.80$\pm$0.16 &0.29 $\pm$ 0.02& 1.64 $\pm$ 0.01\\
FairGNN & 0.68$\pm$0.05 & 76.10$\pm$45.22 & 2.20$\pm$0.15 & 0.25 $\pm$ 0.01 &1.64 $\pm$ 0.01 \\
NIFTY & 0.70$\pm$0.04 & 58.33$\pm$39.85 & 1.68$\pm$0.22 &0.25 $\pm$ 0.01 &1.64 $\pm$ 0.01\\
PFR & \bf0.71$\pm$0.03 & 160.55$\pm$100.20 & 2.44$\pm$1.20 &0.30 $\pm$ 0.01 &1.66 $\pm$ 0.01 \\
InFoRM & 0.69$\pm$0.03 & 2.96$\pm$0.12 & 1.75$\pm$0.15 &0.16 $\pm$ 0.01 &1.66 $\pm$ 0.01 \\
\rev{PostProcess} & \bf 0.71$\pm$0.00& 177.44$\pm$0.66 &1.42$\pm$0.08 &0.30 $\pm$ 0.01 &1.66 $\pm$ 0.01\\
\rev{iFairNMTF} &  \bf 0.71$\pm$0.01 & 107.72$\pm$9.04 & 1.54$\pm$0.23 &0.28 $\pm$ 0.02 &1.66 $\pm$ 0.02\\
\rev{GNN GEI} &  0.70$\pm$0.06 & 176.21$\pm$1.21 & 1.45 $\pm$ 0.23 &0.27 $\pm$ 0.02 &1.65 $\pm$ 0.01\\
\rev{TF-GNN} & \bf 0.71$\pm$0.01 & 18.42$\pm$0.42 & 1.29$\pm$0.11 &0.21 $\pm$ 0.01 &1.70 $\pm$ 0.01\\
\rev{BeMAP} & 0.68$\pm$0.00 & 12.25$\pm$3.25 & 1.39$\pm$0.01 &0.17 $\pm$ 0.00 &1.70 $\pm$ 0.00 \\
\rev{FairSIN} & 0.68$\pm$0.0 & 18.12$\pm$ 6.68 & 1.60$\pm$0.00 & 0.19 $\pm$ 0.01 &1.70 $\pm$ 0.01 \\
\guide & 0.68$\pm$0.02 & 2.45$\pm$0.03 & \bf1.00$\pm$0.00 &0.12 $\pm$ 0.01 &1.30 $\pm$ 0.01\\
\rowcolor{green!15}
\GFGNN & 0.68$\pm$0.00 & \bf1.92$\pm$0.09 & \bf1.00$\pm$0.00 & \bf0.10 $\pm$ 0.01 & \bf 1.13 $\pm$ 0.01\\
\midrule
\multicolumn{6}{c}{\textbf{Income}} \\
\midrule
Vanilla & \bf 0.80$\pm$0.02 & 2812.62$\pm$1061.04 & 1.87$\pm$0.46 &0.46 $\pm$ 0.01 & 2.00 $\pm$ 0.00\\
FairGNN & 0.79$\pm$0.00 & 1359.93$\pm$880.22 & 3.32$\pm$1.20 & 0.35 $\pm$ 0.01 & 2.06 $\pm$ 0.01\\
NIFTY & 0.79$\pm$0.01 & 617.11$\pm$320.13 & 1.16$\pm$0.30 &0.29 $\pm$ 0.02 & 1.65 $\pm$ 0.01\\
PFR & 0.79$\pm$0.00 & 2210.35$\pm$461.11 & 2.35$\pm$1.14 &0.45 $\pm$ 0.02 & 2.03 $\pm$ 0.02\\
InFoRM & 0.80$\pm$0.01 & 309.35$\pm$14.24 & 1.61$\pm$0.28 &0.27 $\pm$ 0.01 & 2.05 $\pm$ 0.01\\
\rev{PostProcess} & 0.80$\pm$0.00 & 420.78$\pm$128 & 2.5$\pm$0.01 &0.28 $\pm$ 0.01 & 2.10 $\pm$ 0.02\\
\rev{iFairNMTF} & 0.80$\pm$0.00 & 2574.38$\pm$134.62 & 2.43$\pm$0.38 &0.45 $\pm$ 0.02 & 2.10 $\pm$ 0.03\\
\rev{GNN GEI} & 0.79$\pm$0.00 & 2531.59$\pm$78.12 & 3.07$\pm$0.23 & 0.44 $\pm$ 0.03 & 2.13 $\pm$ 0.01\\
\rev{TF-GNN} & 0.80$\pm$0.01 & 310.20$\pm$1.20 & 1.28$\pm$0.01 &0.28 $\pm$ 0.01 & 1.71 $\pm$ 0.02\\
\rev{BeMAP} & 0.75$\pm$0.00 & 872.89$\pm$75.11 & 1.34$\pm$0.00 &0.30 $\pm$ 0.01 & 1.71 $\pm$ 0.02 \\
\rev{FairSIN} & 0.75$\pm$0.0 & 929.00$\pm$ 65.80 & 1.45$\pm$0.00 &0.30 $\pm$ 0.00 & 1.75 $\pm$ 0.01\\
\guide & 0.74$\pm$0.01 & 83.85$\pm$20.20 & \bf1.00$\pm$0.00&0.20 $\pm$ 0.01 & 1.14 $\pm$ 0.01\\
\rowcolor{green!15}
\GFGNN &  0.74$\pm$0.00 & \bf55.73$\pm$ \bf9.12 & \bf1.00$\pm$0.00 & \bf 0.16 $\pm$ 0.01 & \bf 1.09 $\pm$ 0.01\\
\midrule
\multicolumn{6}{c}{\textbf{Pokec-n}} \\
\midrule
Vanilla & \bf 0.76$\pm$0.00 & 4490.50$\pm$1550.80 & 8.38$\pm$1.30 &0.45 $\pm$ 0.02 & 1.81 $\pm$ 0.02\\
FairGNN &  0.69$\pm$0.01 & 416.28$\pm$402.83 & 4.84$\pm$2.94 &0.36 $\pm$ 0.01 & 1.52 $\pm$ 0.02\\
NIFTY & \bf 0.76$\pm$0.01 & 2777.36$\pm$346.29 & 9.28$\pm$0.28 &0.39 $\pm$ 0.02 & 1.80 $\pm$ 0.01\\
PFR & 0.60$\pm$0.01 & 628.27$\pm$85.89 & 6.20$\pm$0.79 &0.36 $\pm$ 0.02 & 1.56 $\pm$ 0.01 \\
InFoRM & 0.75$\pm$0.01 & 271.65$\pm$30.63 & 6.83$\pm$1.34 &0.33 $\pm$ 0.01 & 1.55 $\pm$ 0.02 \\
\rev{PostProcess} & 0.75$\pm$0.00 & 4261.32$\pm$113.88 & 9.76$\pm$0.25 &0.44 $\pm$ 0.01 & 1.80 $\pm$ 0.01\\
\rev{iFairNMTF} & 0.75$\pm$0.00 & 3972.55$\pm$69.34 & 8.45$\pm$0.21 &0.41 $\pm$ 0.01 & 1.80$\pm$ 0.02\\
\rev{GNN GEI} &  0.75$\pm$0.01 & 4383.26$\pm$319.56 & 7.29$\pm$0.87 &0.44 $\pm$ 0.02 & 1.75 $\pm$ 0.02\\
\rev{TF-GNN} & 0.75$\pm$0.00 & 268.32$\pm$21.82 & 9.31$\pm$1.22 &0.33 $\pm$ 0.01 & 1.79 $\pm$ 0.02\\
\rev{BeMAP} & 0.75$\pm$0.00 & 521.00$\pm$72.85 &3.32$\pm$0.23 &0.30 $\pm$ 0.02 & 1.24 $\pm$ 0.01 \\
\rev{FairSIN} & 0.75$\pm$0.0 & 556.62$\pm$ 102.38 & 3.42$\pm$0.68 &0.29 $\pm$ 0.01 & 1.25 $\pm$ 0.01 \\
\guide & 0.74$\pm$0.01 & 120.65$\pm$17.33 & 1.12$\pm$0.03 &0.25 $\pm$ 0.01 & 1.21 $\pm$ 0.01\\
\rowcolor{green!15}
\GFGNN & 0.74$\pm$0.00 & \bf85.10$\pm$\bf6.29 & \bf 1.00$\pm$ 0.00& \bf 0.22 $\pm$ 0.02 & \bf 1.15 $\pm$ 0.01\\
\bottomrule
\end{tabular}
\end{table*}

\section{Experiments} \label{sec:experiments}
The objective in this section is to answer the following questions:
\begin{itemize}
\item \textbf{RQ1:} How well \GFGNN balances utility, individual fairness, and group fairness objectives
compared to baselines?
\item {\bf RQ2:} How robust is \GFGNN across GNN architectures?
\item {\bf RQ3:} Is \GFGNN robust across similarity matrix variations?
\item \textbf{RQ4:} \rev{Ablation study-- What are the individual impacts of the various components on \GFGNN?}
\end{itemize}
Our codebase is available at \url{ https://github.com/idea-iitd/GraphGini}.

\subsection{Datasets}
Table~\ref{datasetsummary} presents a summary of the real-world datasets used for benchmarking \GFGNN.
\noindent\textbf{Credit Dataset~\citep{
yeh2009comparisons}:} The graph contains 30,000 individuals, who are connected based on their payment activity. The class label whether an individual defaulted on a loan.\\
\textbf{Income Dataset~\citep{Song22}:} This is a similarity graph over 14,821 individuals sampled from the Adult Data Set~\citep{dua2017uci}. 
The class label indicates whether an individual's annual income exceeds \$50,000.\\
\textbf{Pokec-n~\citep{takac2012data}: } Pokec-n is a social network where the class label indicates the occupational domain of a user. 

\subsection{Empirical Framework}
%%%%%%%%%%%%%%%%%%%%%%%%%%%%%Table WGN begin%%%%%%%%%%
\begin{table*}[t!]
  \caption{\rev{Comparison of \GFGNN with and without Gradient Normalization (\GFGNN WGN).}}
   \label{tab:gradnorm}
  \centering
  \scalebox{0.70}{
  \begin{tabular}{c|ccc|ccc|ccc}
    \toprule
     Model & \bf AUC($\uparrow$) &\bf IF($\downarrow$) & \bf GD($\downarrow$) & \bf AUC($\uparrow$) & \bf IF($\downarrow$) & \bf GD($\downarrow$) & \bf AUC($\uparrow$) & \bf IF($\downarrow$) & \bf GD($\downarrow$)\\
    %\bottomrule
    \toprule
         &  & \bf GCN &  &  & \bf GIN & & & \bf JK & \\
     \toprule
    \multicolumn{10}{c}{\bf Credit} \\
     \toprule
    \textbf{\GFGNN WGN} & \bf 0.68$\pm$0.00  &1.82$\pm$0.13 & \bf 1.00$\pm$0.00 & \bf 0.68$\pm$0.00 & 2.15$\pm$0.03 & \bf1.00$\pm$0.00 & \bf 0.68$\pm$0.00 & 2.01$\pm$0.01 & \bf1.00$\pm$0.00\\
    \textbf{\GFGNN} & \bf 0.68$\pm$0.00 & \bf 0.22$\pm$0.06 & \bf 1.00$\pm$0.00 & \bf 0.68$\pm$0.00 & \bf1.92$\pm$0.09 & \bf1.00$\pm$0.00 & \bf 0.68$\pm$0.00 & \bf1.88$\pm$0.02 & \bf1.00$\pm$0.00\\
    %\bottomrule
     \toprule
    \multicolumn{10}{c}{\bf Income} \\
%     \toprule
%     &  & \bf GCN &  &  & \bf GIN & & & \bf JK & \\
     \toprule
\textbf{\GFGNN WGN} &\bf 0.73$\pm$0.09 & 21.12$\pm$5.22 & \bf1.00$\pm$0.00 & \bf 0.74$\pm$0.00 & 55.73$\pm$ 9.12 & \bf1.00$\pm$0.00 & \bf 0.75$\pm$0.00 & 31.23$\pm$ \bf3.22 & \bf1.00$\pm$0.00 \\
\textbf{\GFGNN} & \bf 0.73$\pm$0.09 & \bf20.50$\pm$\bf3.10 & \bf1.00$\pm$0.00 & \bf 0.74$\pm$0.00 & \bf49.03$\pm$ \bf6.33 & \bf1.00$\pm$0.00 & \bf 0.75$\pm$0.00 & \bf29.47$\pm$ \bf4.01 & \bf1.00$\pm$0.00 \\
\toprule
    \multicolumn{10}{c}{\bf Pokec-n} \\
%     \toprule
%     &  & \bf GCN &  &  & \bf GIN & & & \bf JK & \\
     \toprule
\textbf{\GFGNN WGN} &  \bf 0.74$\pm$0.00 & 31.10$\pm$5.22  &  \bf 1.00$\pm$ 0.00& \bf 0.74$\pm$0.00 & 85.10$\pm$6.29 & \bf 1.00$\pm$ 0.00& \bf 0.78$\pm$0.10 & 44.51$\pm$0.72 & \bf 1.00$\pm$ 0.00\\
\textbf{\GFGNN} &  \bf 0.74$\pm$0.00 & \bf27.60$\pm$\bf6.32  &   \bf 1.00$\pm$ 0.00& \bf 0.74$\pm$0.00 & \bf81.37$\pm$\bf9.87 & \bf 1.00$\pm$ 0.00& \bf 0.78$\pm$0.10 & \bf 43.87$\pm$2.36 & \bf 1.00$\pm$ 0.00\\
\bottomrule
  \end{tabular}}
\end{table*}
%%%%%%%%%%%%%%%%%%%%%%%%%%%%%%Table WGN end %%%%%%%%%%%%%%%%%%%%%%%%%%%%
\textbf{Metrics:} We use AUCROC and \rev{F1-score} to assess performance in node classification. To evaluate individual and group fairness, we use $Gini$ as well as the metrics used by \guide~\citep{Song22}: individual fairness (IF) = $\text{Tr} (\mathbf{Z}^{T} \mathbf{L} \mathbf{Z})$ and Group disparity (GD). For two groups $g$ and $h$, $GD = \max \{\epsilon_g / \epsilon_h, \epsilon_h/ \epsilon_g \})$ where $\epsilon_g = \text{Tr} (\mathbf{Z}^{T} \mathbf{L}_{g} \mathbf{Z})$ and $\epsilon_h = \text{Tr} (\mathbf{Z}^{T} \mathbf{L}_{h} \mathbf{Z})$. We also used Gini-based fairness metrics: $IF\text{-}Gini = Gini$ (Def.~\ref{def:if}) and $ GD\text{-}Gini = \max \{G_g / G_h, G_h/ G_g \})$ for individual and group disparity, respectively, where $G_{f}$ denotes the Gini for group $f$. We report the average GD across all pairs of groups. For group disparity, $GD =1$ is the ideal case, with higher values indicating poorer performance. We also evaluate \textit{Equal Opportunity (EO)}, a classical measure of group fairness.

\noindent \textbf{Backbones GNNs:}  We evaluate on three distinct GNN backbones: GCN \citep{gcn}, GIN, \citep{GIN}, and Jumping Knowledge (JK) \citep{JK2018}.

\noindent \textbf{Baseline methods:} We benchmark against eleven baselines, namely (1) \guide \citep{Song22}, (2) FairGNN, \citep{dai2021say}, (3) NIFTY \citep{agarwal2021towards}, (4) PFR \citep{Preethi2020}, (5) InFoRM \citep{kang2020inform}, (6) PostProcess\citep{lohia2019bias}, (7) GEI~\citep{gei}, (8) iFairNMTF~\citep{iFairNMTF}, (9) TF-GNN~\citep{TF-GNN}, (10) BeMap~\citep{pmlr-v231-lin24a}, and (11) FairSIN~\citep{YangShi_2024}. \guide is the state of the art in this space. A more detailed summary of each of the baseline algorithms is provided in App.~\ref{app:baselines}. The reproducibility details for baselines, hyperparameter settings, and implementation specifications are given in the Appendix~\ref{implementation}.
 
\noindent \textbf{Similarity matrix:}  We evaluate on two different settings: \textbf{(1)} \textit{topological similarity}, and \textbf{(2)} \textit{attribute similarity}. To instantiate $\s$ for topological similarity, the ($i,j$)-th entry in $\s$ represents the cosine similarity between the $i$-th row and the $j$-th row of the adjacency matrix $\mathbf{A}$. This is aligned with the similarity metric for evaluating fairness in existing works \citep{kang2020inform}. For the setting of attribute similarity, the ($i,j$)-th entry in $\s$ represents the cosine between the attributes of $v_i$ and $v_j$ after masking out the sensitive attributes.
%%%%%%%%%%%%%%%%%%%%%%%%%%%%%%%%%%%%%%%%%%%%%%%%%%%%%%%%%%%%%%%%%%%%%
\begin{table*}[h!]
  \caption{\rev{Impact of fairness regularizer, removing individual fairness constraint, i.e $\beta_2=0$.}}
  \label{tab:regularizer_individual}
  \centering
  \scalebox{0.99}{
  \begin{tabular}{c|ccc|ccc|ccc}
    \toprule
     \multicolumn{1}{c|}{\textbf{Model}} & \multicolumn{3}{c|}{\textbf{GCN}} & \multicolumn{3}{c|}{\textbf{GIN}} & \multicolumn{3}{c}{\textbf{JK}}\\
     \midrule
     & AUC($\uparrow$) & IF($\downarrow$) & GD($\downarrow$) & AUC($\uparrow$) & IF($\downarrow$) & GD($\downarrow$) & \bf AUC($\uparrow$) &\bf IF($\downarrow$) & \bf GD($\downarrow$)\\
    \midrule
    \multicolumn{10}{c}{\textbf{Credit}} \\
    \midrule
    \textbf{\guide}  & \textbf{0.68} & 16.52 & \textbf{1.00} & \textbf{0.68} & 17.46 & \textbf{1.00} & \textbf{0.68}  & 9.38 & \bf 1.00 \\
    \textbf{\GFGNN}  & \textbf{0.68} & \textbf{13.77} & \textbf{1.00} & \textbf{0.68} & \textbf{13.35} & \textbf{1.00} &\textbf{0.68} & \bf 9.28 & \bf 1.00\\
    \midrule
    \multicolumn{10}{c}{\textbf{Income}} \\
    \midrule
    \textbf{\guide}  & \textbf{0.73} & 26.07 & \textbf{1.00} & \textbf{0.74} & 348.69 & \textbf{1.00} & \textbf{0.74} & 73.04 & 1.00\\
    \textbf{\GFGNN}  & \textbf{0.73} & \textbf{25.78} & \textbf{1.00} & \textbf{0.74} & \textbf{184.81} & \textbf{1.00} &\textbf{0.74} & \textbf{66.12} & \textbf{1.00}  \\
    \midrule
    \multicolumn{10}{c}{\textbf{Pokec-n}} \\
    \midrule
    \textbf{\guide}  & \textbf{0.74} & 39.57 & \textbf{1.00} & \textbf{0.74} & 86.23 & \textbf{1.00} & \textbf{0.75} & 62.80 & \textbf{1.00}\\
    \textbf{\GFGNN}  & \textbf{0.74} & \textbf{33.42} & \textbf{1.00} & \textbf{0.74} & \textbf{38.00} & \textbf{1.00} & \textbf{0.76} & \textbf{41.67} & \textbf{1.00}\\
    \bottomrule
  \end{tabular}
  }
\end{table*}
\begin{table*}[h!]
  \caption{\rev{ Impact of fairness regularizer, removing group fairness constraint, i.e $\beta_3=0$.}}
  \label{tab:regularizer_group}
  \centering
  \scalebox{0.99}{
  \begin{tabular}{c|ccc|ccc|ccc}
    \toprule
     \multicolumn{1}{c|}{\textbf{Model}} & \multicolumn{3}{c|}{\textbf{GCN}} & \multicolumn{3}{c}{\textbf{GIN}} & \multicolumn{3}{c}{\textbf{JK}}\\
     \midrule
     & AUC($\uparrow$) & IF($\downarrow$) & GD($\downarrow$) & AUC($\uparrow$) & IF($\downarrow$) & GD($\downarrow$) & AUC($\uparrow$) & IF($\downarrow$) & GD($\downarrow$)\\
    \midrule
    \multicolumn{10}{c}{\textbf{Credit}} \\
    \midrule
    \textbf{\guide}  & \textbf{0.68} & 2.16 & \textbf{1.52} & \textbf{0.68} & 2.40 & 1.46 & \textbf{0.68}  & 2.67 & 1.82 \\
    \textbf{\GFGNN}  & \textbf{0.68} & \textbf{2.11} & \textbf{1.52} & \textbf{0.68} & \textbf{2.37} & \textbf{1.45} & \textbf{0.68} & \bf 2.56 & \bf 1.52\\
    \midrule
    \multicolumn{10}{c}{\textbf{Income}} \\
    \midrule
    \textbf{\guide}  & \textbf{0.72} & 25.38 & \textbf{1.02} & 0.74 & 152.35 & 1.23 & \textbf{0.74} & 439.27 & 1.14\\
    \textbf{\GFGNN}  & \textbf{0.72} & \textbf{21.35} & 1.09 & \textbf{0.75} & \textbf{150.26} & \textbf{1.15} & \textbf{0.74} & \textbf{402.27} & \textbf{1.02} \\
    \midrule
    \multicolumn{10}{c}{\textbf{Pokec-n}} \\
    \midrule
    \textbf{\guide}  & \textbf{0.74} & 36.34 & 1.21 & \textbf{0.74} & 91.47 & 1.56 & \textbf{0.75} & 51.46 & 1.32\\
    \textbf{\GFGNN}  & \textbf{0.74} & \textbf{22.14} & \textbf{1.11} & \textbf{0.74} & \textbf{87.27} & \textbf{1.41} & \textbf{0.76} & \textbf{46.9} & \textbf{1.22}\\
    \bottomrule
  \end{tabular}
  }
\end{table*}
%%%%%%%%%%%%%%%%%%%%%%%%%%%%%%%%%%%%%%%%%%%%%%%
\subsection{RQ1 and RQ2: Efficacy of \GFGNN and Robustness to Architectures}
Table~\ref{table:result1} and ~\ref{table:result1GIN} present a comprehensive evaluation of \GFGNN against state-of-the-art baselines encompassing all three datasets, established metrics in the literature and three distinct GNN architectures (results for JK are presented Table~\ref{apptable:result1_JK} in the Appendix). In this experiment, the similarity matrix is based on topological similarity. A clear trend emerges. While \GFGNN suffers a minor decrease in AUCROC when compared to the vanilla backbone GNN, it comprehensively surpasses all baselines in individual and group fairness. Compared to \guide, which is the most recent and the only work to consider both individual and group fairness, \GFGNN outperforms it across all metrics and datasets. Specifically, \GFGNN is never worse in AUCROC, while always ensuring a higher level of individual and group fairness. 
In terms of numbers, for the Credit dataset, \GFGNN improves individual fairness by $88$\%, $11$\%, and $14$\% as compared to \guide when embeddings are initialised by GCN, GIN and JK backbone architectures, respectively. Similarly, we observe significant improvement in individual fairness for the Income dataset by $36$\%, $33$\%, \& $26$\% and $43$\%, $29$ \%, \& $46$ \% for the Pokec-n dataset without hurting on utility and maintaining group fairness. In terms of Gini-based individual (IF-Gini) and group fairness (GD-Gini) metrics, \GFGNN~ outperforms across all GNN architectures and datasets. While this is not surprising since we explicitly optimize for Gini, we note that this also leads to superior performance in the IF and GD measures when computed with the loss function used by \guide. This highlights that when the metric optimizes over the entire spectrum of outcomes, rather than just the worst-case scenario, \GFGNN~ produces fairer predictions. Beyond \guide, we note that several of the baselines only optimize group fairness. Yet, \GFGNN outperforms all of them in this metric while also optimizing individual fairness. 

\begin{table}[h]
  \caption{Equal Opportunity (EO) comparisons for \GFGNN and baselines on three datasets. $\downarrow$ indicates the smaller the value is, the better. Best performances are in bold.}
   \label{table:eo_table}
  \centering
  \scalebox{1.0}{
  \begin{tabular}{l|ccc}
    \toprule
     \bf Model & \bf Credit & \bf Income & \bf Pokec-n\\
    \toprule
    \textbf{Vanilla GCN} & 13.92 $\pm$ 6.00 & 14.21 $\pm$ 0.15  & 3.17$\pm$1.10 \\
    \textbf{PRF} & 13.96 $\pm$0.79 &  12.15 $\pm$ 0.03& 1.82 $\pm$ 0.18 \\
    \textbf{FairGNN} & 13.77 $\pm$0.91 &  12.01 $\pm$ 0.03& 1.85 $\pm$ 0.11 \\
    \textbf{InFoRM}  & 14.82 $\pm$ 4.18& 11.44 $\pm$ 0.52& 3.58 $\pm$ 1.15\\
    \textbf{NIFTY}  & 13.07 $\pm$ 0.63&14.22 $\pm$ 0.58&7.32 $\pm$ 0.94 \\
    \textbf{GUIDE} & 13.54 $ \pm$ 0.06&12.35 $\pm$ 0.15&0.80 $\pm$ 0.18 \\
    \textbf{PostProcess} & 13.82 $ \pm$ 0.01 &13.21 $\pm$ 0.11& 1.20 $\pm$ 0.10 \\
    \textbf{iFairNMTF} & 15.13 $ \pm$ 0.03 & 14.11 $\pm$ 0.10&2.15 $\pm$ 0.20 \\
    \textbf{GNN GEI} & 15.20 $ \pm$ 0.12& 14.20 $\pm$ 0.25& 4.60 $\pm$ 0.42 \\
    \textbf{TF-GNN} & 14.62 $ \pm$ 0.32& 13.30 $\pm$ 0.18& 3.85 $\pm$ 0.25 \\
    \textbf{BeMAP} & 15.40 $ \pm$ 0.32& 14.45 $\pm$ 0.15& 4.65 $\pm$ 0.10 \\
    \textbf{FairSIN} & 14.82 $ \pm$ 0.64& 12.22 $\pm$ 0.50 & 3.50 $\pm$ 0.52 \\
    \rowcolor{green!15}
    \textbf{\GFGNN} & \bf 12.20 $\pm$ 0.03  &  \bf 9.75 $\pm$ 0.25 &  \bf 0.80 $\pm$0.08  \\
    \bottomrule
  \end{tabular}}
\end{table}
\subsection{Impact on Equal Opportunity (EO)} 
The results in Table~\ref{table:eo_table} support Observation~\ref{claim_gini_eo}, showing that \GFGNN improves on the classical group fairness measure, Equal Opportunity (EO). We see that \GFGNN beats all baseline methods across the datasets on this measure. Overall, this experiment shows that our method is able to connect the study of fairness in GNNs to the Economics literature by optimizing a metric that is considered important in that discipline.
%%%%%%%%%%%%%%%%%%%%%%%%%%%%%%%%%%%%%%%%%%%%%%%%%%%%%%%%%%%%%%%%%%%%%%%%%
\subsection{RQ3: Robustness to Similarity Matrix}
In the next experiment, we use similarity matrix based on attribute similarity. In this case, the groups are created through $k$-means. The number of clusters are selected based on elbow plot (See Fig.~\ref{fig:elbow} in the Appendix for details). The primary objectives in this experiment are threefold. Does \GFGNN continue to outperform \guide, the primary baseline, when similarity is on attributes? How well do these algorithms perform on the metric of Gini coefficient? How is the Gini coefficient distributed across groups (clusters)? 

The findings are summarized in Table \ref{table:result2} (in Appendix)  on all three datasets for three GNN architectures mirroring the trends observed in Table~\ref{table:result1}. \GFGNN consistently exhibits the best balance across all three metrics and outperforms \guide across all datasets and architectures on average. Additionally, we delve into the Gini coefficient analysis for each group (designated as Cl\textit{X}). As evident from Table~\ref{table:result2} (in Appendix), \GFGNN achieves lower Gini coefficients across most clusters, underscoring the effectiveness of the regularizers.
\begin{figure}[b]
    \centering

    \includegraphics[width=2.8in]{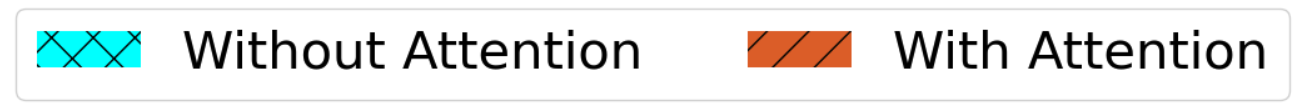}

    \vspace{0.8em}

    \includegraphics[width=1.36in]{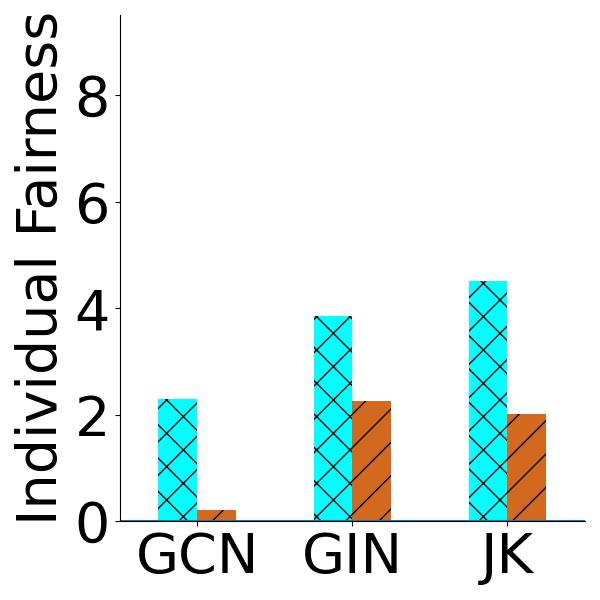}
    \hspace{1em}
    \includegraphics[width=1.36in]{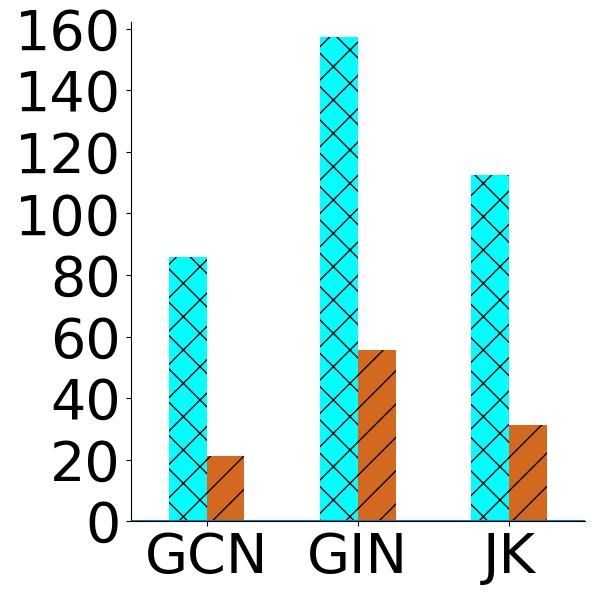}
    \hspace{1em}
    \includegraphics[width=1.36in]{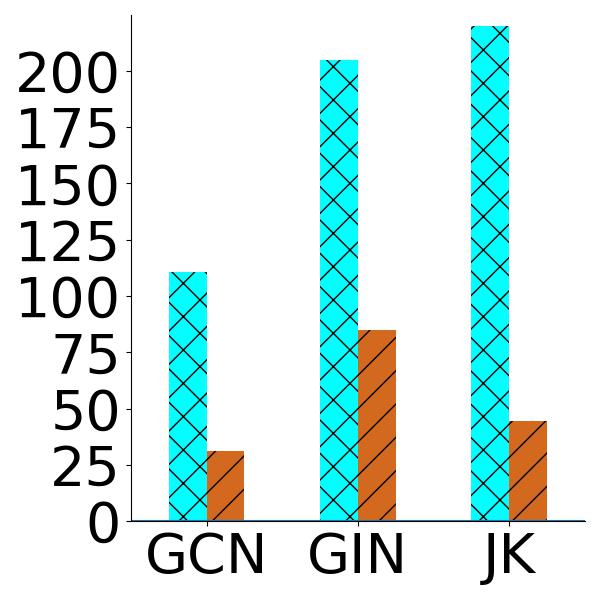}
    \vspace{0.5em}
    \par
    (a) Credit
    \hspace{4.5em}
    (b) Income
    \hspace{4.5em}
    (c) Pokec-n

    \caption{Impact of attention on individual fairness.}
    \label{fig:attention}
\end{figure}
%%%%%%%%%%%%%%%%%%%%%%%%%%%%%%%%%%%%%%%
\subsection{ Utility-Fairness Curves}
We compare the full utility-fairness curves for \GFGNN and baseline models across all three datasets in Figure~\ref{fig:utility-fairness-curve}. Here, utility is measured using AUC. For each baseline, we identified the point where its AUC matches that of \GFGNN and compared the corresponding Fairness values. For baselines that could not reach the AUC of \GFGNN, the IF was measured at their best achievable performance. Across all datasets, \GFGNN consistently demonstrates higher fairness at comparable utility, confirming its effectiveness in achieving a favourable trade-off between predictive performance and fairness.

\begin{figure*}[h!]
\centering

% Row 1
\includegraphics[width=2in]{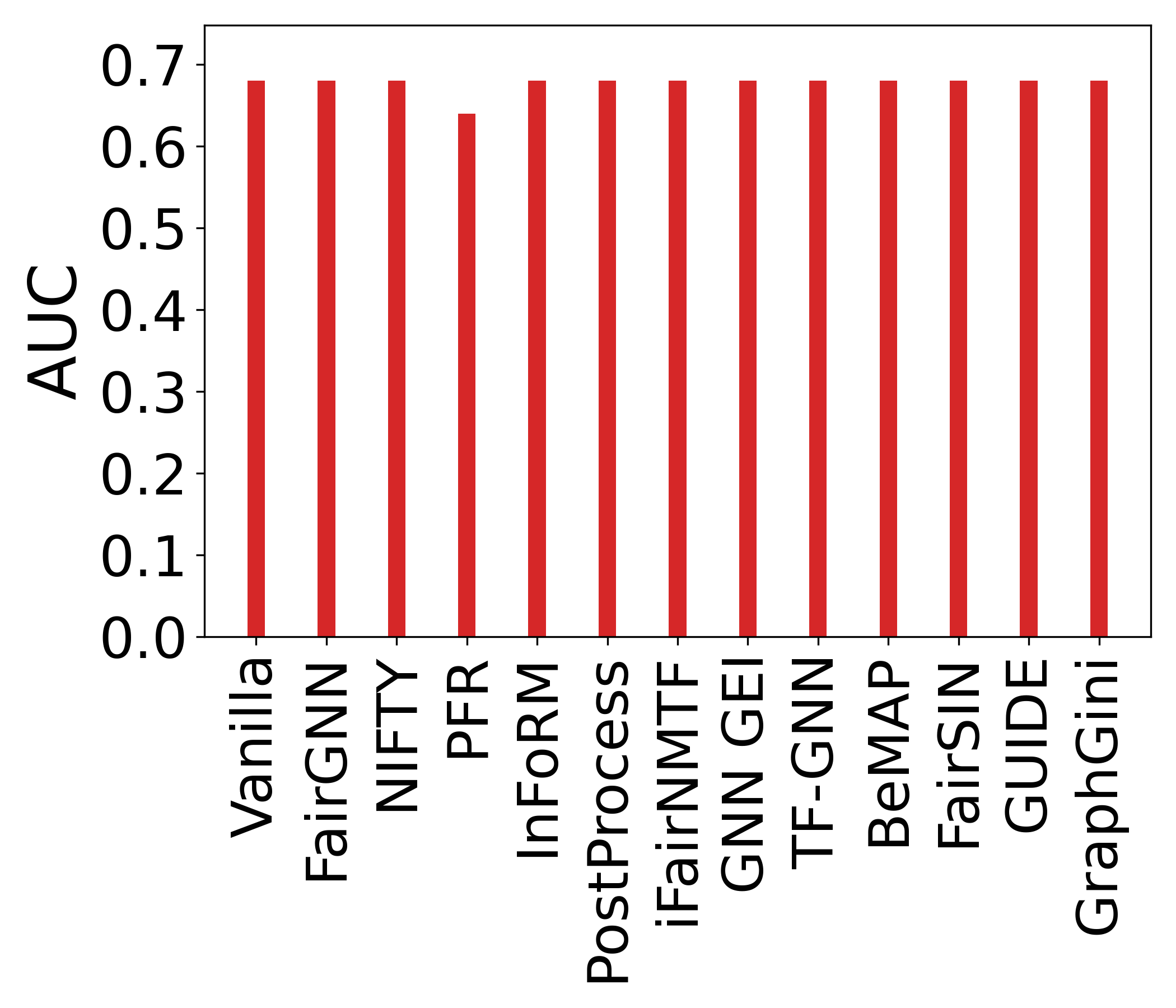}
\hspace{0.5em}
\includegraphics[width=2in]{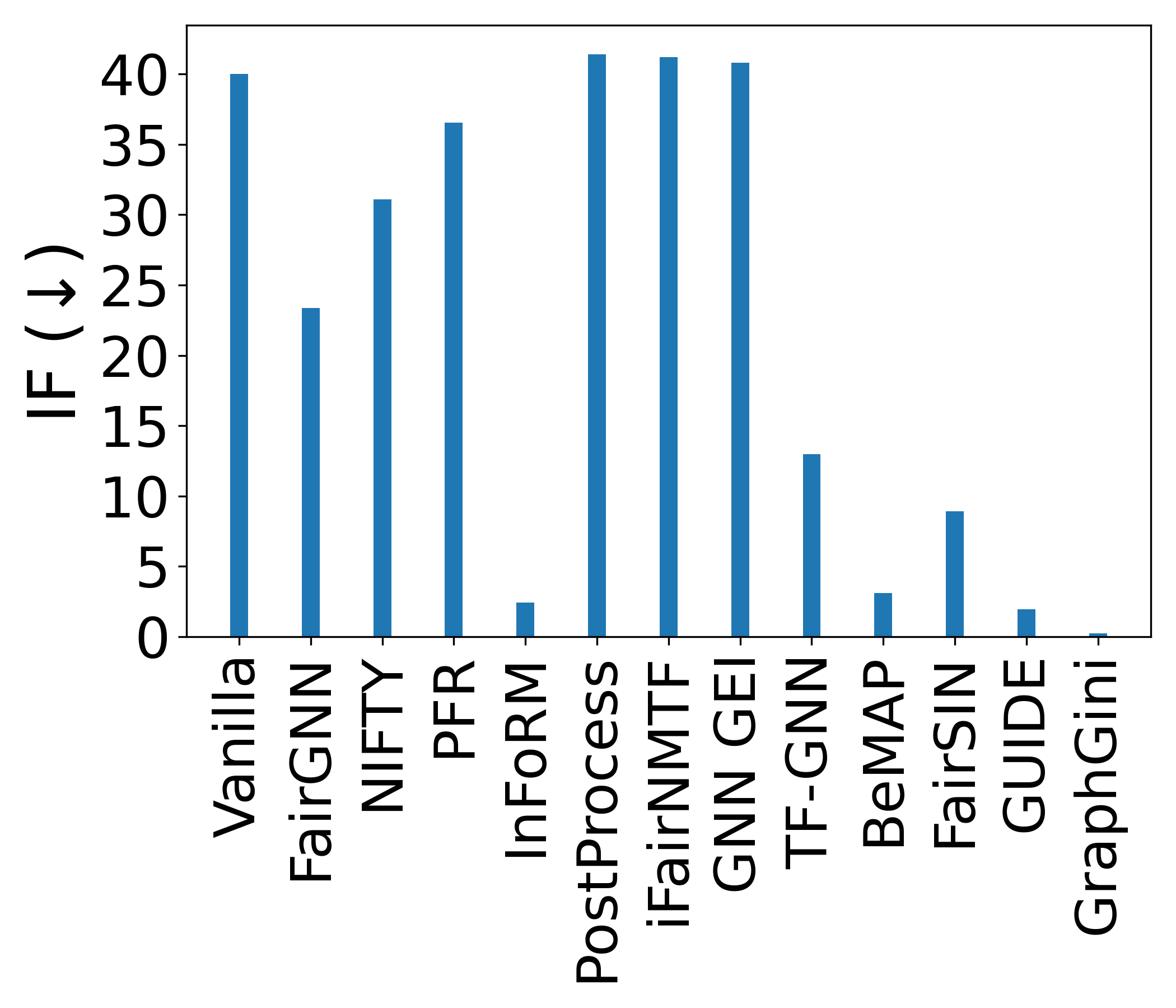}
\hspace{0.5em}
\includegraphics[width=2in]{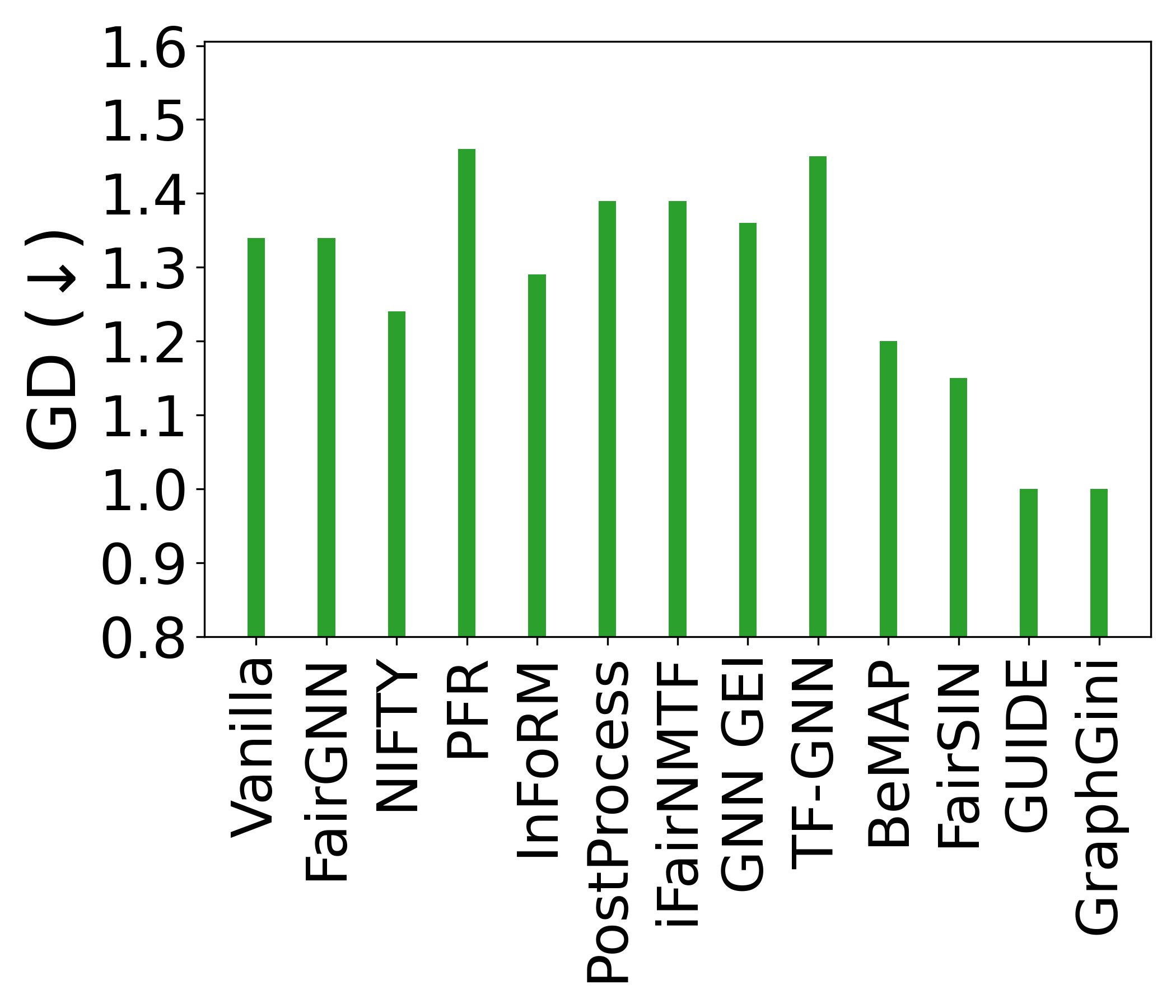}

\vspace{0.5em}
\par
\hspace{6.5em} Credit \hspace{6.5em} 

\vspace{1.0em}

% Row 2
\includegraphics[width=2in]{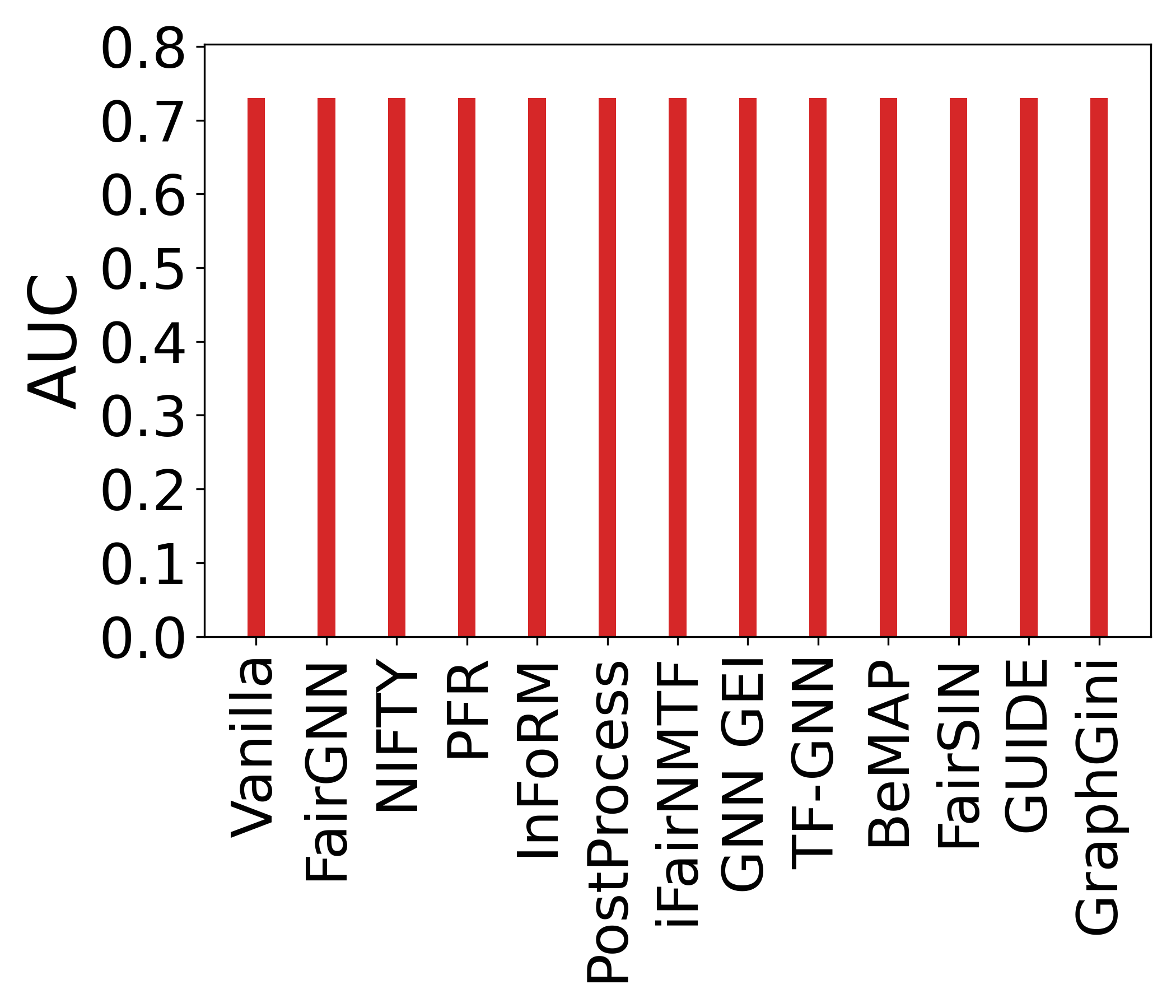}
\hspace{0.5em}
\includegraphics[width=2in]{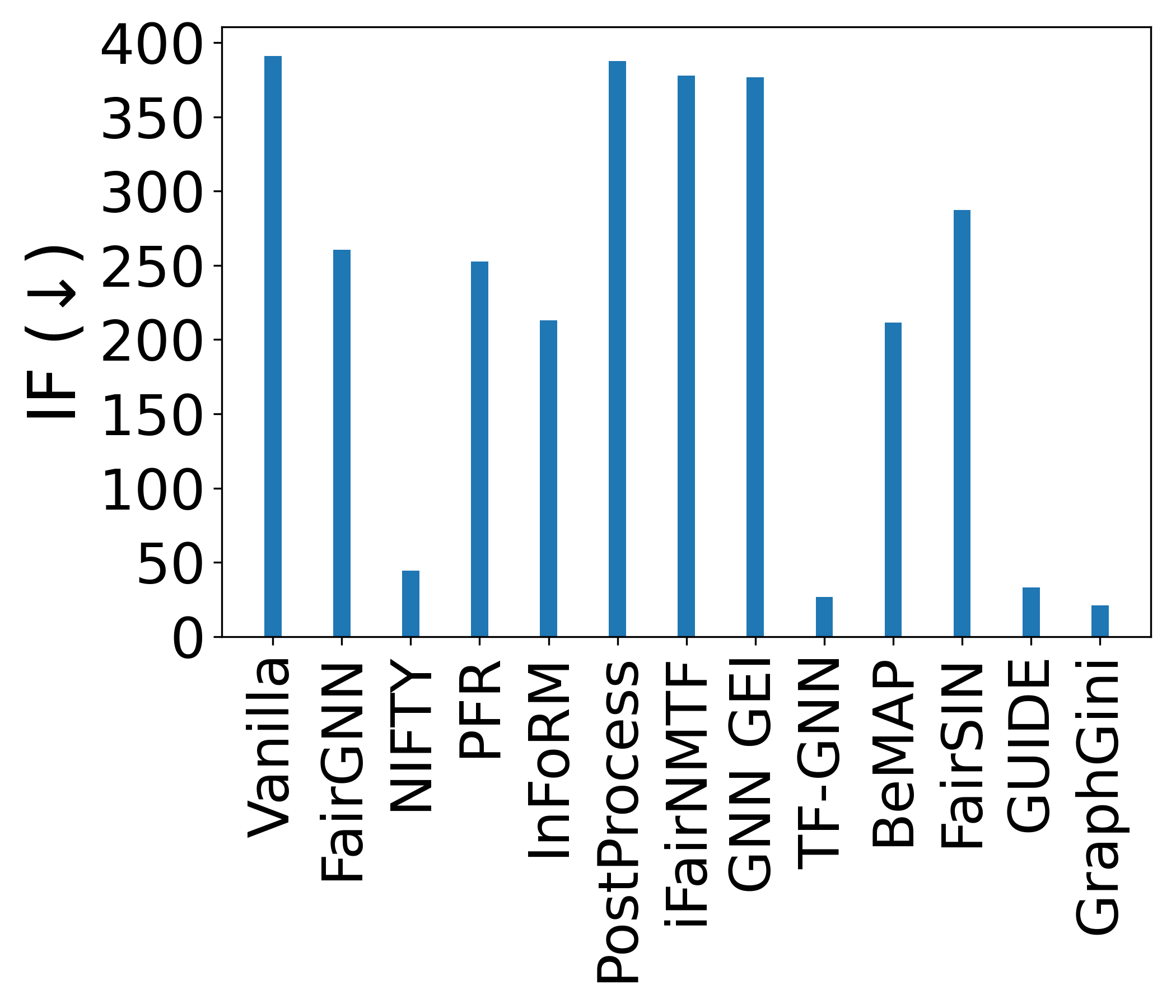}
\hspace{0.5em}
\includegraphics[width=2in]{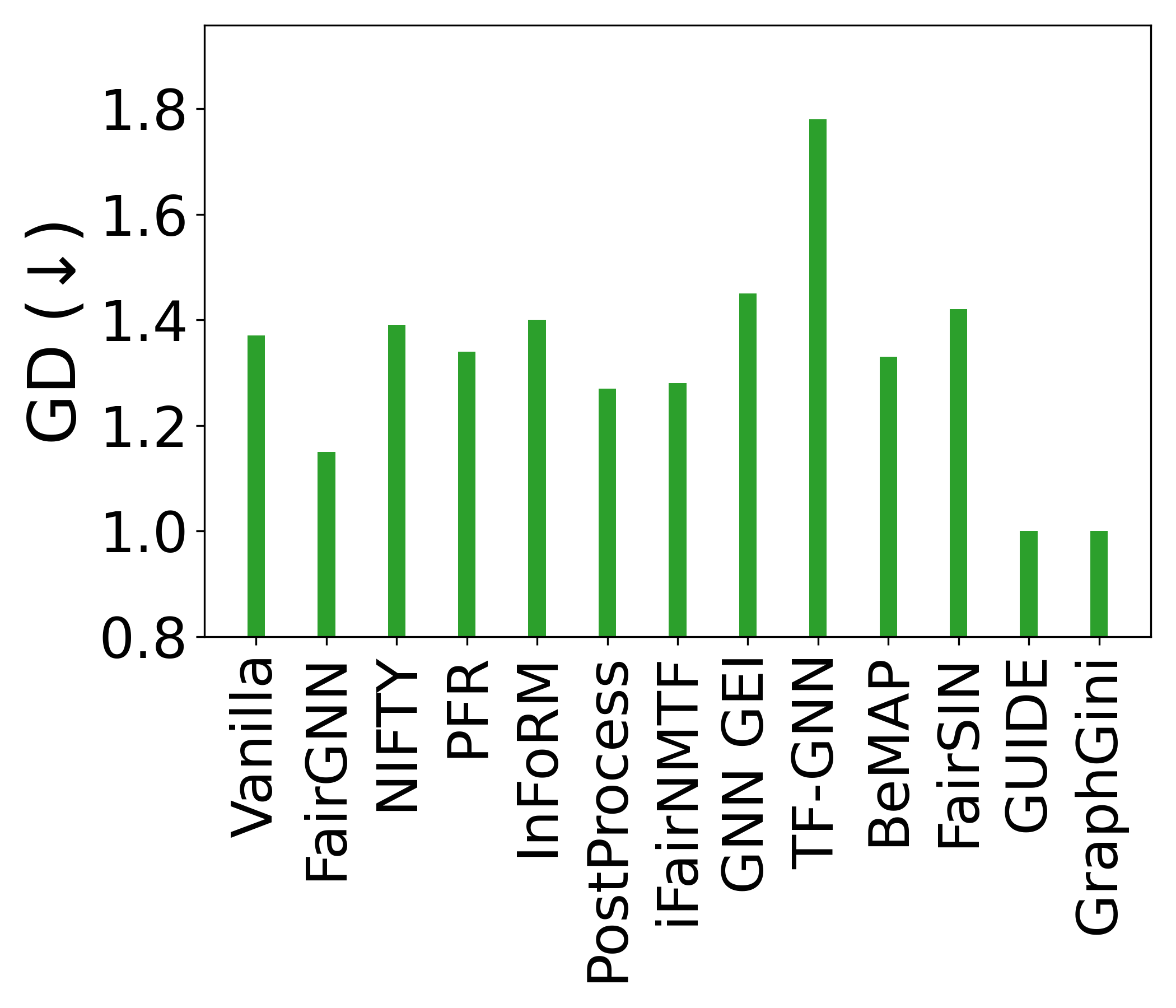}

\vspace{0.5em}
\par
\hspace{6.5em} Income \hspace{6.5em} 

\vspace{1.0em}

% Row 3
\includegraphics[width=2in]{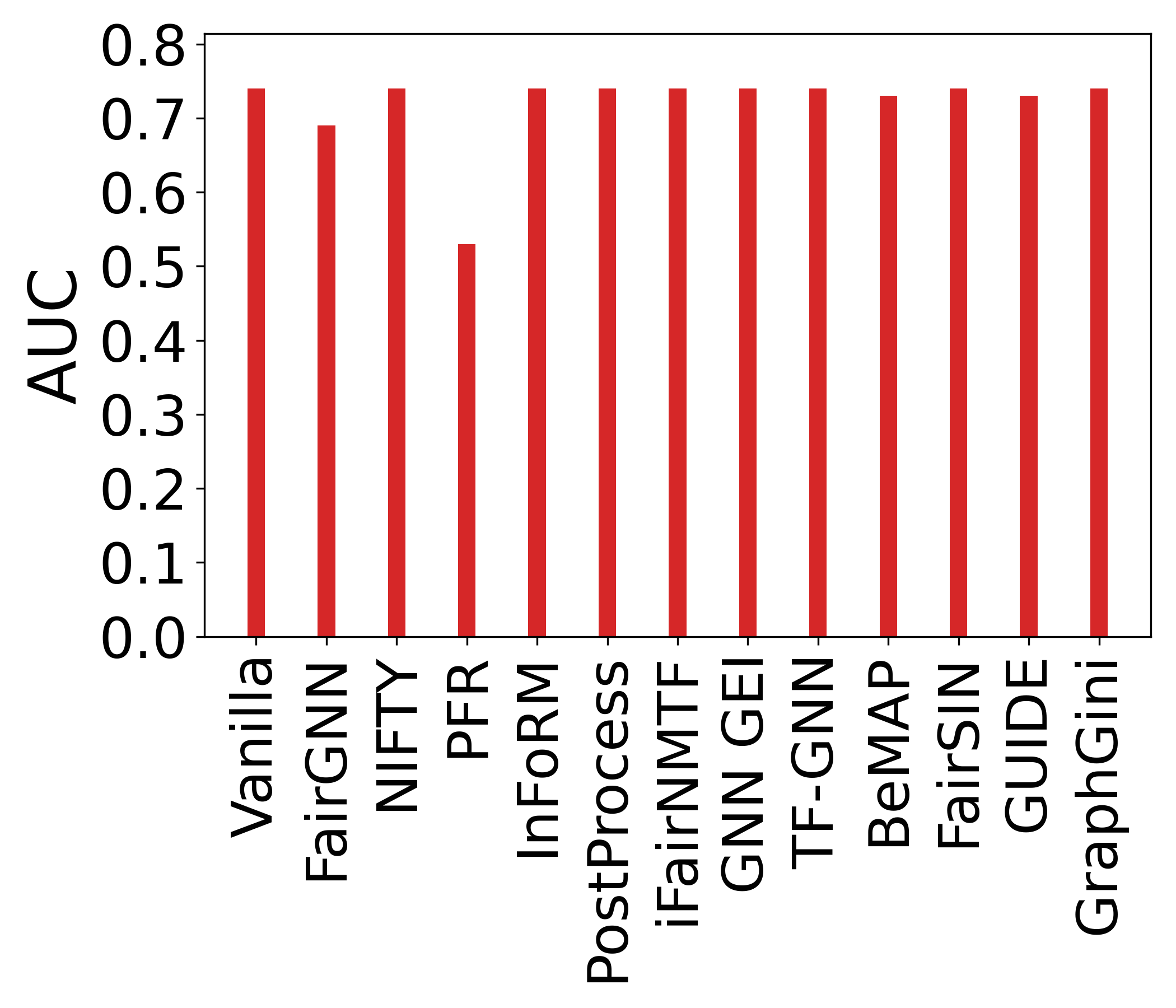}
\hspace{0.5em}
\includegraphics[width=2in]{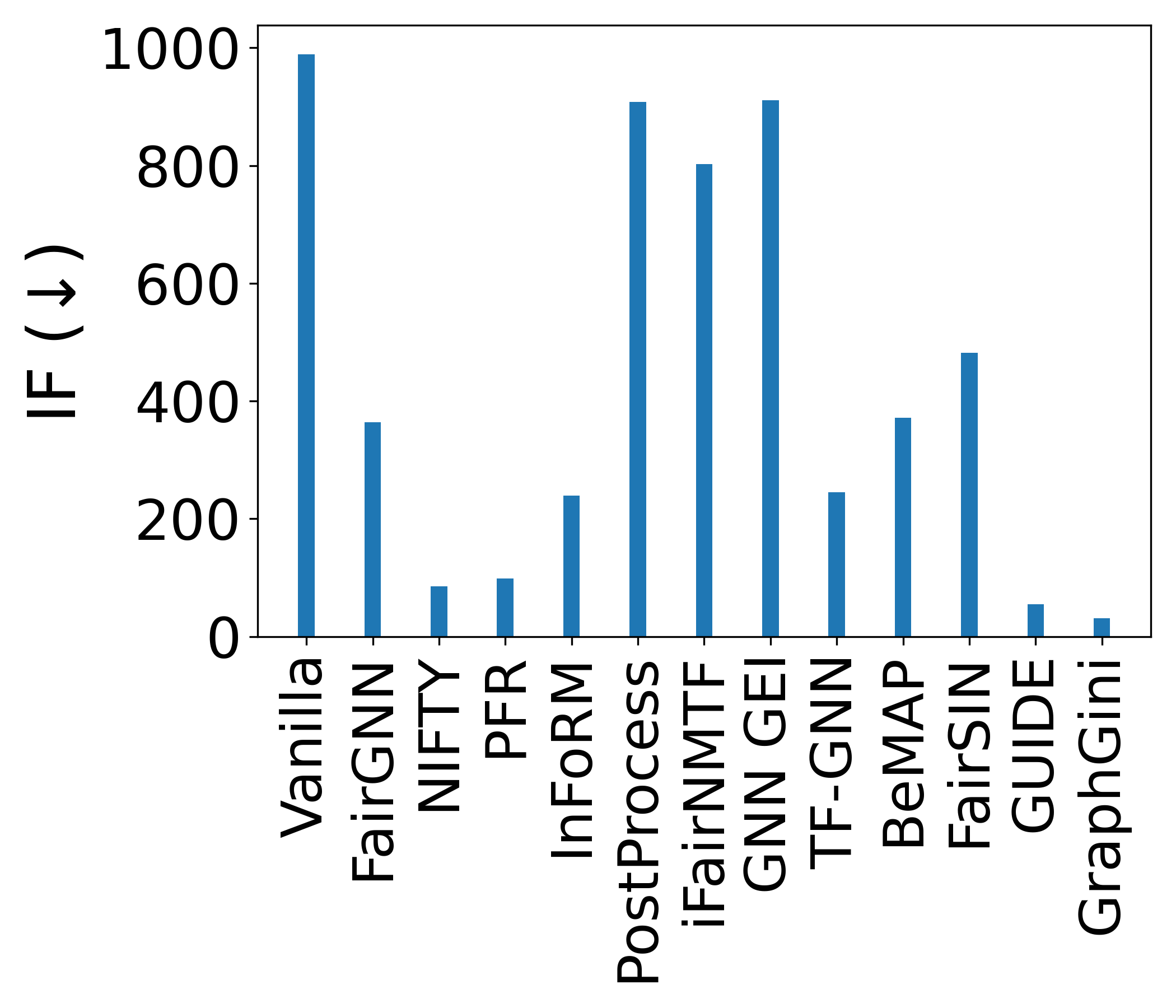}
\hspace{0.5em}
\includegraphics[width=2in]{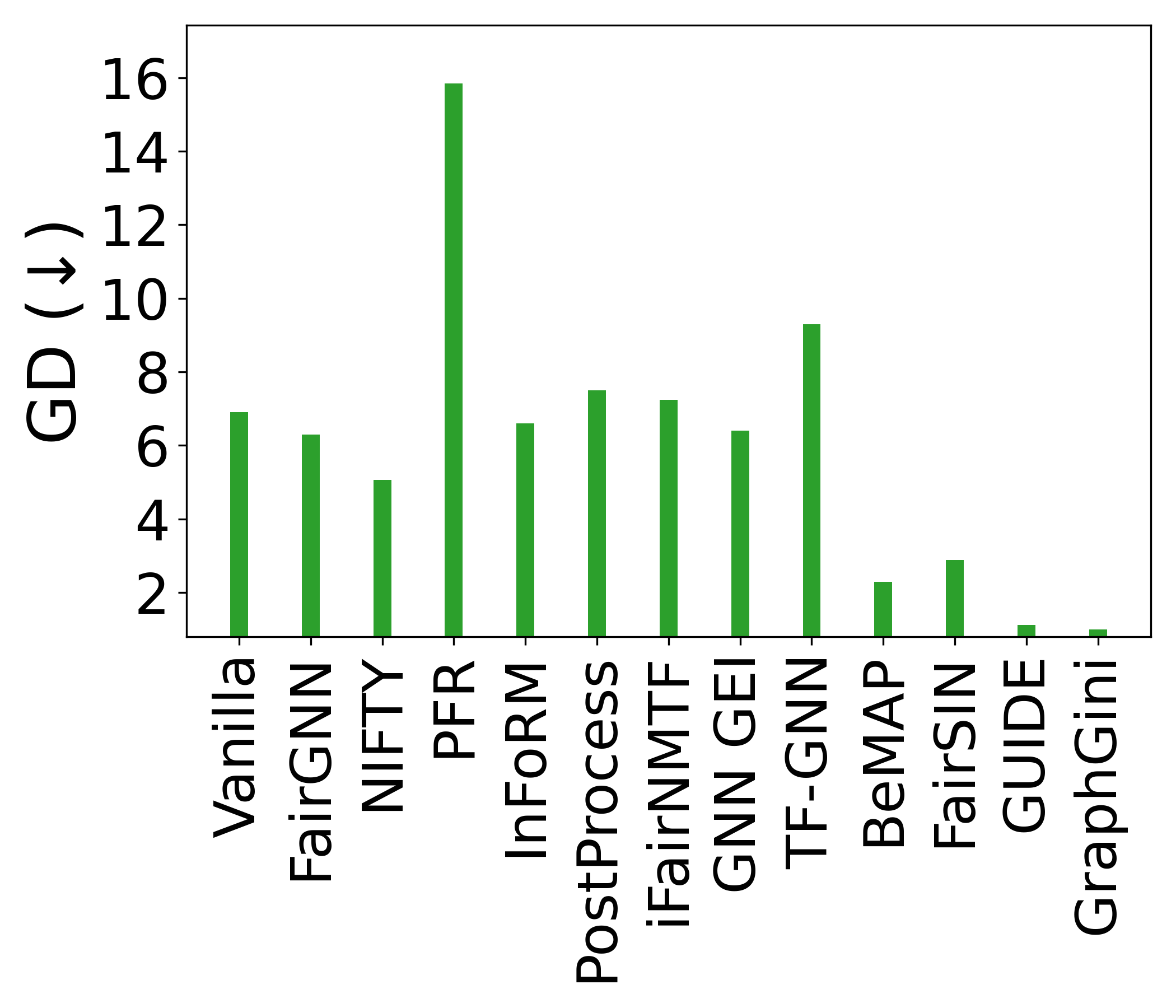}

\vspace{0.5em}
\par
\hspace{6.5em} Pokec-n \hspace{6.5em} 

\vspace{1.0em}
\caption{Utility-Fairness Curve comparison of benchmarked algorithms across datasets using the GCN backbone architecture.}
\label{fig:utility-fairness-curve}
\end{figure*}

%%%%%%%%%%%%%%%%%%%%%%%%%%%%%%%%%%%%%%%
\subsection{RQ4: \rev{Ablation Study}}
\textbf{\rev{Impact of Gradient Normalization:}} \rev{Table~\ref{tab:gradnorm} presents the performance of \GFGNN with manual tuning (denoted as \GFGNN WGN) against automatic tuning through gradient normalization where all weights are initialized to $1$. Gradient normalization imparts significant improvement in individual performance while achieving the same quality in group fairness and accuracy. Individual fairness (IF) benefits the most since the regularizer term corresponding to IF in our loss function is of the smallest magnitude. Hence, when set to equal weights, IF gets dominated by the other two terms in the loss. With gradient normalization, this issue is circumvented. Finally, it is noteworthy that even without gradient normalization, \GFGNN-WGN outperforms \guide (refer to Table~\ref{table:result1}) across all datasets. This underscores that while gradient normalization contributes to improvement, it is not the sole reason for the superiority of \GFGNN over \guide. More detailed insights into the evolution of the automatically-tuned weight parameters via gradient normalization are provided in App.~\ref{app:gradnorm}.}
%%%%%%%%%%%%%%%%%%%%%%%%%%%%%%%%%%%%%%%%

\noindent\rev{\textbf{Impact of attention:} We conduct an ablation study comparing results with and without attention. Our findings show that attention primarily improves individual fairness. Fig.~\ref{fig:attention} illustrates these results on individual fairness. This observation is expected since attention is weighted based on similarity to neighbors, promoting nodes to prioritize similar neighbors in their embeddings. Consequently, individual fairness, which advocates for similar individuals receiving similar outcomes, is reinforced.}

\noindent\rev{\textbf{Impact of regularizers:} Next, we study the impact of the regularizers corresponding to individual and group fairness on the performance of \GFGNN as well as \guide. To turn off a particular regularizer, we fix its weight to $0$. Table~\ref{tab:regularizer_individual} and Table~\ref{tab:regularizer_group} presents the results. Three key observations emerge. Firstly, as anticipated, both individual fairness and group fairness suffer when their respective regularizers are deactivated (compare the metrics of \GFGNN and \guide in Table~\ref{tab:regularizer_group} with Table~\ref{table:result1}). Second, while group fairness remains unaffected from turning off individual fairness, the reverse is not true. This phenomenon occurs since group fairness (Eq.~\ref{eq:gdif}) is a function over individual fairness. Thus, even when individual fairness is not directly optimized, it gets indirect assistance from optimizing group fairness. Finally, \GFGNN maintains its superiority over \guide, even with specific regularizers turned off. A more granular trade-off between utility, individual fairness, and group fairness for \GFGNN is provided in Appendix~\ref{app:hyperparameter}. The results clearly show that significant improvements in fairness metrics can be achieved with minimal impact on accuracy. For example, by compromising 2\% in accuracy, we can achieve a 90\% increase in individual fairness and a 30\% increase in group fairness.} We additionally evaluate the robustness of \GFGNN{} under variations in graph homophily/heterophily, attribute noise levels, group imbalance ratios, and hidden embedding dimensions. A detailed ablation analysis covering these factors is provided in Appendices~\ref{app:homophily}, \ref{app_noise senstivity}, \ref{app:group_balance}, and~\ref{app_embedding_dim}.

\begin{table}[h!]
\caption{Running time comparison for GCN on all three datasets. Time is reported in seconds for one iteration.}
\label{runningtime}
\centering
\scalebox{0.99}{
\begin{tabular}{c|c|c|c}
     \toprule
     Datasets $\rightarrow$& \bf Credit & \bf Income & \bf Pokec-n  \\
     Model $\downarrow$ & && \\ 
     \toprule
    \textbf{Vanila} & 0.014 & 0.046 & 0.082 \\
    \textbf{GUIDE } & 0.041 & 0.051 & 0.094 \\
    \textbf{\GFGNN} & 0.042 & 0.050 &  0.096 \\
\bottomrule
\end{tabular}}
\end{table}
\subsection{Running Time}
Table~\ref{runningtime} compares the running times of \GFGNN with \guide, both of which are similar. The vanilla model is faster since it does not account for the similarity matrix, which \GFGNN and \guide need to incorporate to ensure individual and group fairness. Nonetheless, the running times remain small enough for practical workloads.

\section{Conclusion}
In this work, we have shown how to combine the two key requirements of group fairness and individual fairness in a single GNN architecture \GFGNN. The \GFGNN achieves individual fairness by employing learnable attention scores that facilitate the aggregation of more information from similar nodes. Our major contribution is that we have used the well-accepted Gini coefficient to define fairness, overcoming the difficulty posed by its non-dfferentiability. \textcolor{black}{This particular way of using the Gini coefficient may be of interest to future research, as it provides a bridge between economic inequality measures and machine learning}. Our approach to group fairness incorporates the concept of Nash Social Welfare. \textcolor{black}{We also demonstrate that the gradient-normalization-based technique (GradNorm) provides an effective way to balance utility, individual fairness, and group fairness objectives in graph learning, mitigating the need for extensive manual tuning of objective weights.} Empirical findings show \GFGNN significantly reduces individual unfairness while maintaining group disparity and utility performance after beating all state-of-the-art existing methods.

% \subsubsection*{Broader Impact Statement}
% In this optional section, TMLR encourages authors to discuss possible repercussions of their work,
% notably any potential negative impact that a user of this research should be aware of. 
% Authors should consult the TMLR Ethics Guidelines available on the TMLR website
% for guidance on how to approach this subject.

% \subsubsection*{Author Contributions}
% If you'd like to, you may include a section for author contributions as is done
% in many journals. This is optional and at the discretion of the authors. Only add
% this information once your submission is accepted and deanonymized. 

% \subsubsection*{Acknowledgments}
% Use unnumbered third level headings for the acknowledgments. All
% acknowledgments, including those to funding agencies, go at the end of the paper.
% Only add this information once your submission is accepted and deanonymized. 

\bibliography{main}
\bibliographystyle{tmlr}

\appendix
\newpage
\appendix
 \renewcommand{\thesection}{\Alph{section}}
 \renewcommand{\thefigure}{\Alph{figure}}
 \renewcommand{\thetable}{\Alph{table}}
%%%%%%%%%%%%%%%%%%%%%%%%%%%%%%%%%%%%%%%%%%%%%%%%%%%%%%%%%%%%%%%%%%%%%%%%%%%%%%%%%
\section{Notations}
To ensure clarity and consistency in the mathematical formulations used throughout this work, Table~\ref{tab:notations} summarizes the key notations employed.
\begin{table}[h]
    \centering
    \caption{Notations.}
    \label{tab:notations}
    \scalebox{0.99}{
    \begin{tabular}{ll}
    \toprule
    \textbf{Notation} & \textbf{Description}\\
    \midrule
    $G$ & input graph\\
    $\mathcal{V}$ & set of nodes in graph\\
    $\mathcal{E}$ & set of edges in graph\\
    ${n}$ & number of nodes in a graph\\
    ${d}$ & features dimension\\
    $\mathcal{V}_h$ & $h^{th}$ group in graph\\
    $\mathbf{A}\in \{0,1\}^{n\times n}$ & adjacency matrix of graph $G$\\
    $\mathbf{X}\in \mathbb{R}^{n\times d}$ & node feature matrix of graph $G$\\
    $\Z\in \mathbb{R}^{n\times c}$ & output learning matrix of graph $G$ with $c$ number of features\\
    $\mathbf{S}\in \mathbb{R}^{n\times n}$ & pairwise similarity matrix of graph $G$\\
    $\mathbf{L}\in \mathbb{R}^{n\times n}$ & Laplacian similarity matrix\\
    $Gini(\V)$ & Gini of node set $\V$ \\
    \bottomrule
    \end{tabular}}
\end{table}
%%%%%%%%%%%%%%%%%%%%%%%%%%%%%%%%%%%%%%%%%%%%%%%%%%%%%%%%%
\section{Lorenz curve}
It is the plot of the proportion of the total income of the population (on the y-axis) cumulatively earned by the bottom x (on the x-axis) of the population. The line at 45 degrees thus represents perfect equality of incomes. The further away the Lorenz curve is from the line of perfect equality, the greater the inequality. The Gini coefficient is the area ratio between the line of equality and the Lorenz curve over the total area under the line of quality.
\begin{figure}[h!]
\centering
\includegraphics[width=0.65\textwidth, height =3.5in]{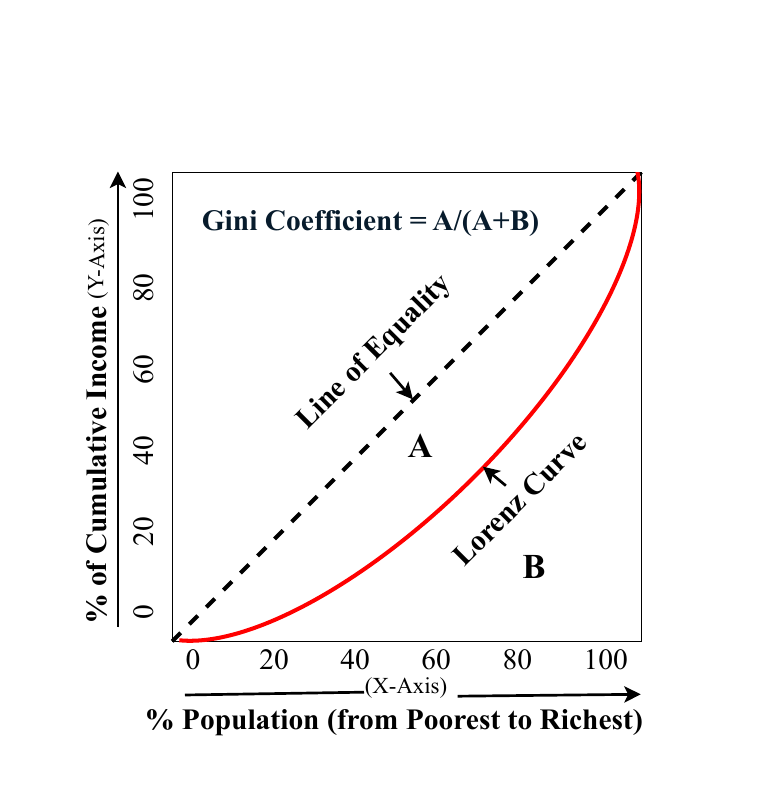}
  \caption{ Lorenz curve}
  \label{fig:app_lorenz}
\end{figure}
%%%%%%%%%%%%%%%%%%%%%%%%%%%%%%%%%%%%%%%%%%%%%%%%%%%%%%%%%%%%%%%%%%%%
\section{Training Procedure of \GFGNN}
\label{sec:appendix_algorithm}
Algorithm~\ref{alg:graphgini} outlines the training process of the proposed \GFGNN{} model. Given an input graph comprising nodes, edges, and node features, along with a pairwise similarity matrix, the model aims to learn node embeddings that are both accurate and fair. The optimization involves three loss components—utility, individual fairness, and group fairness—combined using adaptive weights. The training proceeds iteratively through attention-based message passing over $K$ layers, with GradNorm used to dynamically adjust the influence of each loss term. The model parameters, attention coefficients, and loss weights are updated until convergence.

\begin{algorithm}[h]
\caption{\textbf{\GFGNN}}
\label{alg:graphgini}
{
\begin{flushleft}
    \textbf{Input}: $G = (\mathcal{V}, \mathcal{E}, \mathbf{X})$, $\mathbf{S}$ \\
    \textbf{Output}: Fair Embeddings $\mathbf{Z}$
\end{flushleft}
\begin{algorithmic}[1] 
\STATE $\mathbf{Z} \leftarrow$ Initialize GNN embeddings of $\mathcal{V}$ using utility loss $\mathcal{L}_1$ 
\STATE Initialize $\beta_1 \leftarrow 1$, $\beta_2 \leftarrow 1$, $\beta_3 \leftarrow 1$, $t \leftarrow 0$
\STATE Compute $\mathcal{L}_1$, $\mathcal{L}_2$, and $\mathcal{L} \leftarrow \beta_1 \mathcal{L}_1 + \beta_2 \mathcal{L}_2 + \beta_3 \mathcal{L}_3$
\WHILE{not converged in epoch $t$}
    \FOR{layer $\ell = 1$ to $K$}
        \FOR{each node $v_i \in \mathcal{V}$}
            \STATE $\mathbf{h}^{\ell+1}_{i} = \sigma\left( \sum_{j \in \mathcal{N}_{i}} \alpha_{i,j} \mathbf{W} \mathbf{h}^{\ell}_{j} \right)$ \hfill \texttt{// message passing}
        \ENDFOR
    \ENDFOR
    \STATE $\mathcal{L} \leftarrow \beta_1 \mathcal{L}_1 + \beta_2 \mathcal{L}_2 + \beta_3 \mathcal{L}_3$
    \STATE Backpropagate using GradNorm loss corresponding to $\mathcal{L}$ (Eq.~\ref{eq:gradnorm})
    \STATE Update $\mathbf{W}$, $\alpha_{i,j}$, $\beta_1$, $\beta_2$, $\beta_3$, and $\mathbf{Z}$
    \STATE $t \leftarrow t + 1$
\ENDWHILE
\STATE \textbf{Return} $\mathbf{Z}$
\end{algorithmic}}
\end{algorithm}
%%%%%%%%%%%%%%%%%%%%%%%%%%%%%%%%%%%%%%%%%%%%%%%%%%%%%%%%%%%
\section{Baselines}
\label{app:baselines}
\begin{itemize}
 \item \textbf{\guide} \cite{Song22}: This method is the state-of-the-art, which minimizes the average Lipschitz constant for individual fairness and proposes a new group disparity measure based on the ratios of individual fairness among the groups. 
    \item \textbf{FairGNN} \cite{dai2021say}: This model leverages adversarial learning to ensure that GNNs achieve fair node classifications, adhering to group fairness criteria. %We directly apply FairGNN to various GNN backbones and evaluate its optimization with respect to our defined metrics.    
    \item \textbf{NIFTY} \cite{agarwal2021towards}: Addressing counterfactual fairness along with stability problem, NIFTY perturbs attributes and employs Lipschitz constants to normalize layer weights. Training incorporates contrastive learning techniques, and we adopt this model directly for various GNN backbone architectures.
    \item \textbf{PFR} \cite{Preethi2020}:  PFR learns fair node embeddings as a pre-processing step, ensuring individual fairness in downstream tasks. The acquired embeddings serve as inputs for GNN backbones.\looseness=-1
    \item \textbf{InFoRM} \cite{kang2020inform}: This model formulates an individual fairness loss within a graph framework based on the Lipschitz condition. We integrate the proposed individual fairness loss into the training process of GNN backbones.
    \item \rev{\textbf{PostProcess\cite{lohia2019bias}:} \citet{lohia2019bias} proposes a post-processing based method to enhance individual and group fairness. The method employs a bias detector to assess disparity in outcomes, and when such biases are detected, it changes the model output to a different outcome. This algorithm is topology-agnostic.}
    \item \rev{{\bf GEI~\cite{gei}}: GEI considers diversity of outcomes within a group, determined by sensitive attributes, as a measure of inequality. This implies the assumption that outcomes are independent of non-sensitive attributes within the group. In contrast, Gini allows weighting outcomes proportional to an input similarity measure (Eq.~\ref{eq:gini}), leading to a more nuanced calculation of inequality based on similarity in non-sensitive attributes. To quantify this effect, we use GEI as the regularizer instead of Gini.} 
    \item \rev{{\bf iFairNMTF~\cite{iFairNMTF}}: iFairNMTF is a fair clustering model that uses individual Fairness Nonnegative Matrix Tri-Factorization technique with contrastive fairness regularization to get balanced and cohesive clusters. We adapt iFairNMTF in our setting by plugging their fairness regularizer term with our GNN loss.} 
    \item \rev{{\bf TF-GNN~\cite{TF-GNN}}: TF-GNN presents an individual fair Graph Neural Networks (GNNs) tailored for the analysis of temporal financial transaction network data.  In our specific context, we integrate TF-GNN into our framework by incorporating their fairness regularizer term into our GNN loss function.} 
    \item \rev{{\bf BeMap~\cite{pmlr-v231-lin24a}}: BeMap is a group fair Graph Neural Networks (GNNs) method based on fair message passing algorithm on the basis of 1-hop neighbours from different sensitive groups. We used this method in our setting of different groups.} 
    \item \rev{{\bf FairSIN~\cite{YangShi_2024}}: FairSIN is also a group fairness enforcing framework for GNNs which proposes a neutralization-centered procedure, and using supplementary Fairness-facilitating Features (F3). These features are integrated into node representations prior to message passing. We also adopt this method to compare our method } 
\end{itemize}
%%%%%%%%%%%%%%%%%%%%%%%%%%%%%%%%%%%%%%%%%%%%%%%%%%%%%%%%%%
\section{Tightness of upper bound in Eq.~\ref{ifgini3}.}% and differentiability}
\label{app:tightness_diff}
\begin{proof}[Proof of Proposition~\ref{prp:gini-hat}]
Consider any $\mathbf{Z}$ such that each entry of $\mathbf{Z}$ is identical. In this case both $\sum_{j=1}^{|\mathcal{E}|}  \mathbf{S}[i,j]\|\z_i - \z_j\|_{1}$ and $\sum_{j=1}^{|\mathcal{E}|}  \mathbf{S}[i,j]\|\z_i - \z_j\|_{2}$ are 0
\end{proof}

Let $\mathbf{z} = (z_1, \ldots, z_c) \in \mathbb{R}^c$ be a $c$-dimensional vector. We consider the relationship between its $\ell_1$ and $\ell_2$ norms.
 When $\mathbf{z}$ has a single non-zero entry (e.g., $(1,0,\ldots,0)$), the bound is loose:
$||\mathbf{z}||_{1} = ||\mathbf{z}||_{2} = 1$, while $\sqrt{c}||\mathbf{z}||_{2} = \sqrt{c}$.
When all entries of $\mathbf{z}$ are equal (e.g., $(1,1,\ldots,1)$), the bound is tight:
$||\mathbf{z}||_{1} = c$, $||\mathbf{z}||_{2} = \sqrt{c}$, and $\sqrt{c}||\mathbf{z}||_{2} = c$.

Since minimizing $\widehat{Gini}(\V)$ minimizes the $\ell_2$-norm, we have:
$\min_{\mathbf{z}} (\widehat{Gini}(\V)) \implies \min_{\mathbf{z}} ||\mathbf{z}||_{2}$
While this does not directly minimize $||\mathbf{z}||_{1}$, we can establish:
$\min_{\mathbf{z}} ||\mathbf{z}||_{2} \implies \min_{\mathbf{z}} \sqrt{c}||\mathbf{z}||_{2} \geq \min_{\mathbf{z}} ||\mathbf{z}||_{1}$
This inequality shows that minimizing $||\mathbf{z}||_{2}$ provides an upper bound on the minimum of $||\mathbf{z}||_{1}$.
As $||\mathbf{z}||_{2} \to 0$ during optimization, both $||\mathbf{z}||_{1}$ and $\sqrt{c}||\mathbf{z}||_{2}$ approach $0$.%, potentially at different rates. The ratio $\frac{|\mathbf{z}|_1}{\sqrt{c}||\mathbf{z}||_{2}}$ approaches a constant $c \in [1/\sqrt{c}, 1]$, depending on the limiting behavior of $\mathbf{z}$.

 \begin{lemma}
\label{lemma:convexity}
The upper bound of the Gini coefficient, defined as 
\[
\widehat{Gini}(\mathcal{V}) = \operatorname{Tr}(\mathbf{Z}^{T} \mathbf{L} \mathbf{Z}),
\]
is a convex function in \(\mathbf{Z}\).
\end{lemma}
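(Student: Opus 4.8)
\textbf{Proof proposal for Lemma~\ref{lemma:convexity}.}

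The plan is to reduce the matrix-valued quadratic form to a sum of ordinary scalar quadratic forms, each governed by the Laplacian $\mathbf{L}$, and then invoke positive semidefiniteness of $\mathbf{L}$. First I would record the key structural fact about $\mathbf{L}$: since $\mathbf{S}$ has non-negative entries and $\mathbf{L} = \mathbf{D} - \mathbf{S}$ with $\mathbf{D}[i,i] = \sum_{j \ne i} \mathbf{S}[i,j]$ (Def.~\ref{def:lap}), $\mathbf{L}$ is symmetric positive semidefinite; indeed, for any vector $\mathbf{v} \in \mathbb{R}^n$ one has the standard identity $\mathbf{v}^T \mathbf{L} \mathbf{v} = \tfrac12 \sum_{i,j} \mathbf{S}[i,j](v_i - v_j)^2 \ge 0$. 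This is the one substantive ingredient, and it is immediate from the assumption that similarities lie in $[0,1]$, hence are non-negative.

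Next I would decompose the trace column-wise. Writing $\mathbf{z}^{(k)} \in \mathbb{R}^n$ for the $k$-th column of $\mathbf{Z}$ (the $k$-th coordinate across all nodes), we have
\begin{equation}
\widehat{Gini}(\mathcal{V}) = \operatorname{Tr}(\mathbf{Z}^T \mathbf{L}\mathbf{Z}) = \sum_{k=1}^{c} (\mathbf{z}^{(k)})^T \mathbf{L}\, \mathbf{z}^{(k)}.
\end{equation}
Each summand $(\mathbf{z}^{(k)})^T \mathbf{L}\, \mathbf{z}^{(k)}$ is a quadratic form in the entries of $\mathbf{Z}$ with PSD coefficient matrix $\mathbf{L}$, hence convex as a function of $\mathbf{z}^{(k)}$, and therefore convex as a function of the full matrix $\mathbf{Z}$ (it depends only on one block of coordinates). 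A finite sum of convex functions is convex, which gives the claim. Equivalently, one can note that the Hessian of $\widehat{Gini}$ with respect to $\operatorname{vec}(\mathbf{Z})$ is the constant matrix $2(\mathbf{I}_c \otimes \mathbf{L})$, which is PSD because the Kronecker product of PSD matrices is PSD; a function with everywhere-PSD Hessian is convex.

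As an alternative route that ties directly to the proof of Proposition~\ref{prop:ub}, I could instead use the identity $\operatorname{Tr}(\mathbf{Z}^T \mathbf{L}\mathbf{Z}) = \tfrac12 \sum_{i=1}^n \sum_{j=1}^n \mathbf{S}[i,j]\,\|\mathbf{z}_i - \mathbf{z}_j\|_2^2$ already exhibited in Eq.~\ref{ifgini3}: the map $\mathbf{Z} \mapsto \mathbf{z}_i - \mathbf{z}_j$ is linear, the squared Euclidean norm is convex, their composition is convex, and since every weight $\mathbf{S}[i,j] \ge 0$ the expression is a non-negative combination of convex functions, hence convex. I do not anticipate any real obstacle here — the only thing to be careful about is justifying $\mathbf{L} \succeq 0$ (equivalently, that the weights $\mathbf{S}[i,j]$ are non-negative) rather than treating it as automatic; everything else is a routine application of standard convexity-preserving operations.
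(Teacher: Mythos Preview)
Your proposal is correct and essentially matches the paper's proof: the paper computes the Hessian as $2\,\mathbf{L}\otimes\mathbf{I}$ and appeals to positive semidefiniteness of $\mathbf{L}$, which is exactly one of the routes you outline (your $2(\mathbf{I}_c\otimes\mathbf{L})$ differs only by the vectorization convention). Your additional column-wise decomposition and the weighted-sum-of-squared-norms argument are valid, slightly more elementary alternatives that the paper does not spell out.
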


\begin{proof}
To prove convexity, we show that the Hessian of \(\widehat{Gini}(\mathcal{V})\) is positive semidefinite.
   The function is given by:
   \[
   f(\mathbf{Z}) = \operatorname{Tr}(\mathbf{Z}^{T} \mathbf{L} \mathbf{Z}) = \sum_{i=1}^{n} \sum_{j=1}^{n} \mathbf{S}[i,j] \|\mathbf{z}_i - \mathbf{z}_j\|_2^2.
   \]
   This is a quadratic form defined by the Laplacian \(\mathbf{L} = \mathbf{D} - \mathbf{S}\), where \(\mathbf{D}\) is the diagonal matrix.

\textbf{Gradient Computation:}  
   Taking the gradient with respect to \(\mathbf{Z}\):
   \[
   \nabla f(\mathbf{Z}) = 2 \mathbf{L} \mathbf{Z}.
   \]

\textbf{Hessian Analysis:}  
   The Hessian of \(f(\mathbf{Z})\) is:
   \[
   H = 2 \mathbf{L} \otimes \mathbf{I}_d,
   \]
   where \(\otimes\) denotes the Kronecker product, and \(\mathbf{I}_d\) is the \(d \times d\) identity matrix.

\textbf{Positive Semidefiniteness of Hessian: }
   The Laplacian \(\mathbf{L}\) is known to be positive semidefinite, meaning for any vector \(\mathbf{v} \in \mathbb{R}^n\),
   \[
   \mathbf{v}^{T} \mathbf{L} \mathbf{v} \geq 0.
   \]
   Since the Kronecker product of a positive semidefinite matrix with \(\mathbf{I}_d\) preserves positive semidefiniteness, we conclude that:
   \[
   H = 2 \mathbf{L} \otimes \mathbf{I}_d \succeq 0.
   \]
   This proves that \(f(\mathbf{Z})\) is a convex function in \(\mathbf{Z}\).
\end{proof}

\subsection{The need for differentiability}
\label{app:differentiability}
For deep learning methods, if the loss function is differentiable, then its gradient can be obtained using inbuilt packages such as PyTorch to update the model's weights. In practice, these packages can handle the non-differentiability of activation functions such as ReLU by using hand-coded values at non-differentiable points. However, such values for generic loss functions are not available a priori; therefore, differentiability of the loss function is essential. To elaborate, consider the Gini index. If $\mathbf{x} = (x_{1}, \ldots, x_{n})$ and $f(\mathbf{x}) = \sum_{i=1}^{n} \sum_{j=1}^{n} |x_{i} - x_{j}|$, for a fixed $j \in [n]$, let us define
$$m_{j}(\mathbf{x}) = |\{i \in [n] : i \neq j, x_{i} < x_{j}\}|.$$
In the case where $x_{i} \neq x_{j}$ for all $i \neq j$, clearly
$$\frac{\partial f(\mathbf{x})}{\partial x_{j}} = n - 1 - 2m_{j}(\mathbf{x}).$$
Let $k$ be the index such that $x_{k} < x_{j}$ and there is no other coordinate value between them. Suppose we decrease $x_{j}$ until it reaches $x_{k}$, then we reach a point of discontinuity, which we will call $\mathbf{y}$. We note that the left partial derivative with respect to $x_{j}$ is $n - 3 - 2m_{j}(\mathbf{x})$, which is $2$ less than its right partial derivative. Unlike the case of ReLU, there is no straightforward way to decide which of these two values should be assumed at the point of non-differentiability. Furthermore, unlike ReLU, these two values cannot be determined a priori since they depend on $\mathbf{x}$.
%%%%%%%%%%%%%%%%%%%%%%%%%%%%%%%%%%%%%%%%%%%%%%%%%%%%%%%%%%%%%%%%%%%%%%

\section{Pareto Optimality Proof} \label{app:parito_optimal_proof} 
\begin{proof}[Proof of Proposition~\ref{prp:pareto_opt}]
     Let the Gini upper bounds minimizing Eq.~\ref{eq:nash} be $\widehat{Gini}(\V_g)=\epsilon_g$ and $\widehat{Gini}(\V_h)=\epsilon_h$. The solution is Pareto optimal if we cannot reduce $\epsilon_h$ without increasing $\epsilon_g$ and vice-versa. We establish Pareto optimality through proof by contradiction. Suppose that there exists an assignment of node embeddings for which we get $\widehat{Gini}(\V_g)=\epsilon^*_g$, $\widehat{Gini}(\V_h)=\epsilon_h$ and $\epsilon^*_g>\epsilon_g$. Since Eq.~\ref{eq:nash} is minimized at $\epsilon_g$ and $\epsilon_h$, this means that $-\left(\frac{\epsilon_g}{\epsilon_h}-1\right)\left(\frac{\epsilon_h}{\epsilon_g}-1\right)<-\left(\frac{\epsilon^*_g}{\epsilon_h}-1\right)\left(\frac{\epsilon_g}{\epsilon_h}-1\right)$, which is a contradiction.
\end{proof}

\section{Convergence and Time Complexity} \label{app:convergence_time} 

%\begin{proof}[Proof of Proposition~\ref{convergence_prop}.]
\textbf{Convergence.} As our framework considers only standard GNNs, the convergence analysis of gradient descent on GNNs has been conducted in \cite{awasthi2021convergence} under the assumption of the smoothness of a bounded loss function. Our overall loss is smooth, as the fairness regularizers for individual and group fairness are designed to ensure differentiability. Additionally, each term is bounded (loss $\mathcal{L}_1$: CE loss, $\mathcal{L}_2 \leq tr(\mathbf{X}^{T}L\mathbf{X})$, and $\mathcal{L}_3 \leq g$ [finite value of group disparity]); hence, the overall loss is smooth and bounded. Therefore, the analysis in \cite{awasthi2021convergence} directly applies to our framework.
%\end{proof}

\textbf{Runtime.} Table~\ref{runningtime} presents the runtime for \GFGNN and the baselines. Empirical analysis shows that the inference time (i.e., the time taken for a forward pass) is comparable between \GFGNN, GradNorm, and GUIDE across all three datasets tested. Nonetheless, it's important to note that the absolute inference times for all methods are very small, suggesting that the computational overhead should not be a significant concern for the practical deployment of these fair GNN approaches.
%%%%%%%%%%%%%%%%%%%%%%%%%%%%%%%%%%%%%%%%%%%%%%%%%%%%%%%%%%%%%%%%%%%%%%%%%
\begin{table}[ht!]
\caption{Baseline hyper-parameters. $-$ indicates the parameter not used to train the model.}
\label{hyperparametres}
\centering
\scalebox{1.0}{
\begin{tabular}{c|cc|cc|cc}
\toprule
     \toprule
     Datasets $\rightarrow$&  \multicolumn{2}{c|}{\bf Credit} &  \multicolumn{2}{c}{\bf Income| } &  \multicolumn{2}{c}{\bf Pokec-n}   \\
     % &  & \bf Credit &    & \bf Income &  &  \bf Pokec-n  & \\
     Model $\downarrow$ & $\beta_2$ & $\beta_3$ &$\beta_2$   & $\beta_3$ & $\beta_2$ & $\beta_2$ \\ 
     \toprule
    \textbf{FairGNN} & 4 & 1000 & 4 &10  &4  &100\\
    \textbf{NIFTY} & - &  -& - & - & - &-\\
    \textbf{PFR} &  -&  -&  -&  -& - &-\\
    \textbf{PostProcess} &  -&  -&  -&  -& - &-\\
    \textbf{iFairNMTF} &  1e-7&  -&  1e-7&  -& 1e-7 &-\\
    \textbf{GNN GEI} &  1&  -&  1&  -& 1 &-\\
    \textbf{TF-GNN} &  1e-6&  -&  1e-7&  -& 1e-7 &-\\
    \textbf{InFoRM} & 5e-6  & - & 1e-7 & - &1e-7  &-\\
    \textbf{GUIDE} & 5e-6 & 1 &  1e-7 & 0.25 & 2.5e-7  &0.05\\
\bottomrule
\end{tabular}}
\end{table}
\section{Implementation details for Reproducibility} \label{implementation}
Each experiment is conducted five times, and the reported results consist of averages accompanied by standard deviations. Our experiments are performed on a machine with Intel(R) Core(TM) CPU @ 2.30GHz with 16GB RAM, RTX A4000 GPU having 16GB memory on Microsoft Windows 11 HSL. 

For all three datasets, we employ a random node shuffling approach and designate 25\% of the labeled nodes for validation and an additional 25\% for testing purposes. The training set sizes are set at 6,000 labeled nodes (25\%) for the Credit dataset, 3,000 labeled nodes (20\%) for the Income dataset, and 4,398 labeled nodes (6\%) for Pokec-n. For the Pokec-n dataset, friendship linkages serve as edges, while for the remaining datasets, edges are not predefined, necessitating their construction based on feature similarity. More precisely, we establish a connection for any given pair of nodes if the Euclidean distances between their features surpass a predetermined threshold.  The fine-tuned hyper-parameters used to train baselines are given in Table \ref{hyperparametres}. $\beta_1$ is $1$ for all baselines. For \GFGNN, the backbone GNN architecture parameters are the same for all datasets, i.e. one hidden layer with hidden dimensions 16.
%%%%%%%%%%%%%%%%%%%%%%%%%%%%%%%%%%%%%%%%%%%%%%%%%%%%%%%%%%%%%%%%%%%%%%%%
\section{Number of clusters in datasets}
To evaluate the Gini, we divide the test dataset into a number of clusters based on the elbow graph (Figure.~\ref{fig:elbow}). Specifically, we employ K-means clustering to group samples based on their features, aiming to capture structural or distributional disparities that may exist in the data. The number of clusters for each dataset is determined using the elbow method, which evaluates the within-cluster sum of squares (WCSS) as a function of cluster count. As shown in Figure~\ref{fig:elbow}, the elbow point—where the marginal gain in reducing WCSS begins to diminish—indicates the optimal number of clusters for the Credit, Income, and Pokec-n datasets.

\begin{figure}[h]
\centering 
\includegraphics[width=1.46in]{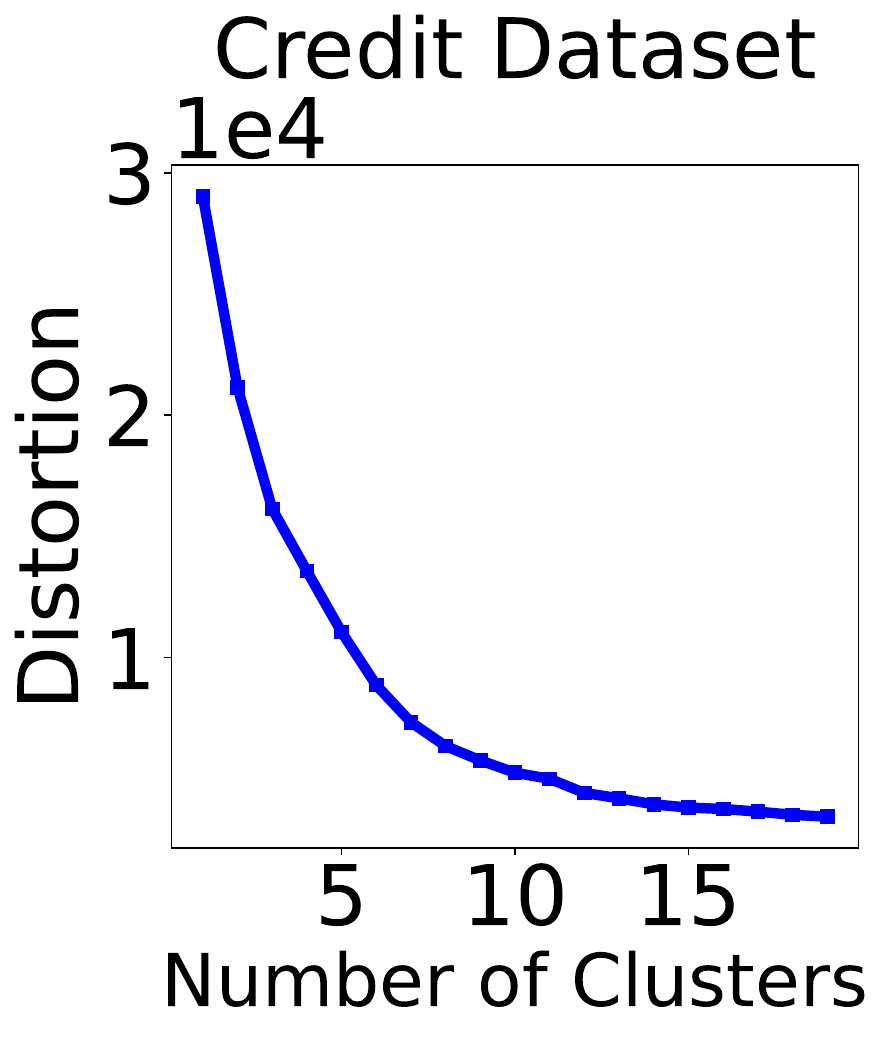} 
\includegraphics[width=1.60in]{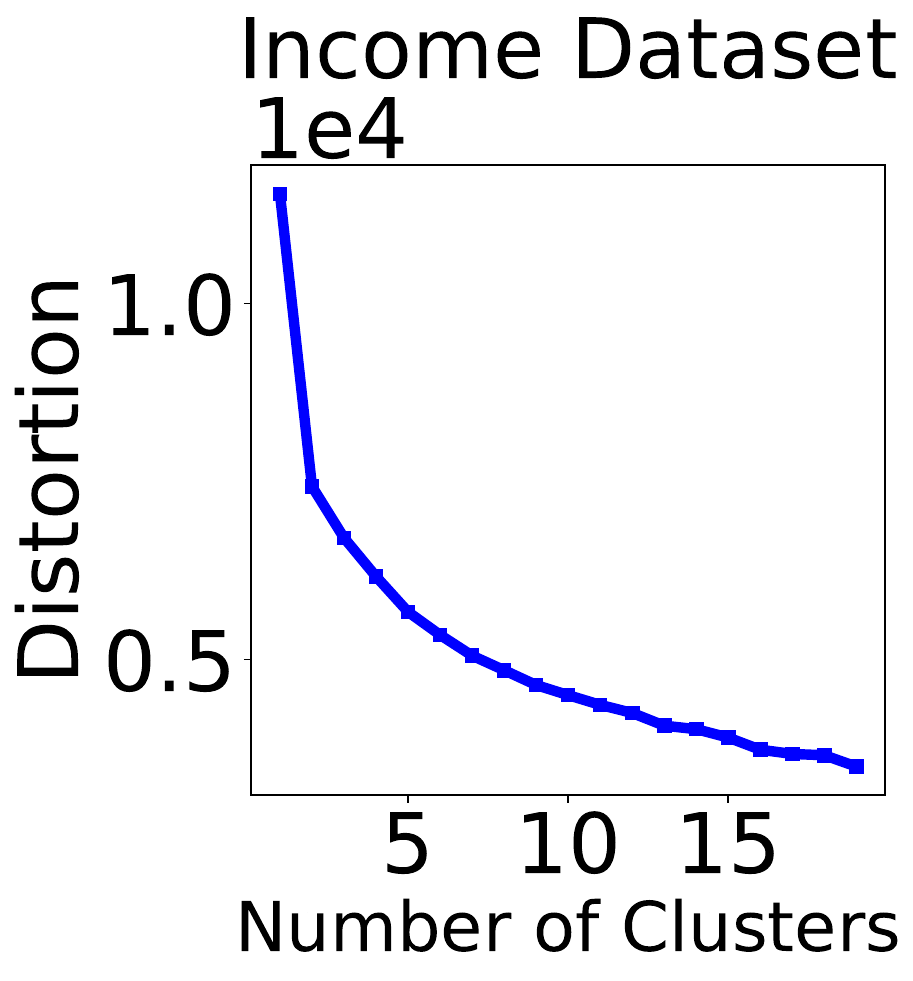}
\includegraphics[width=1.54in]{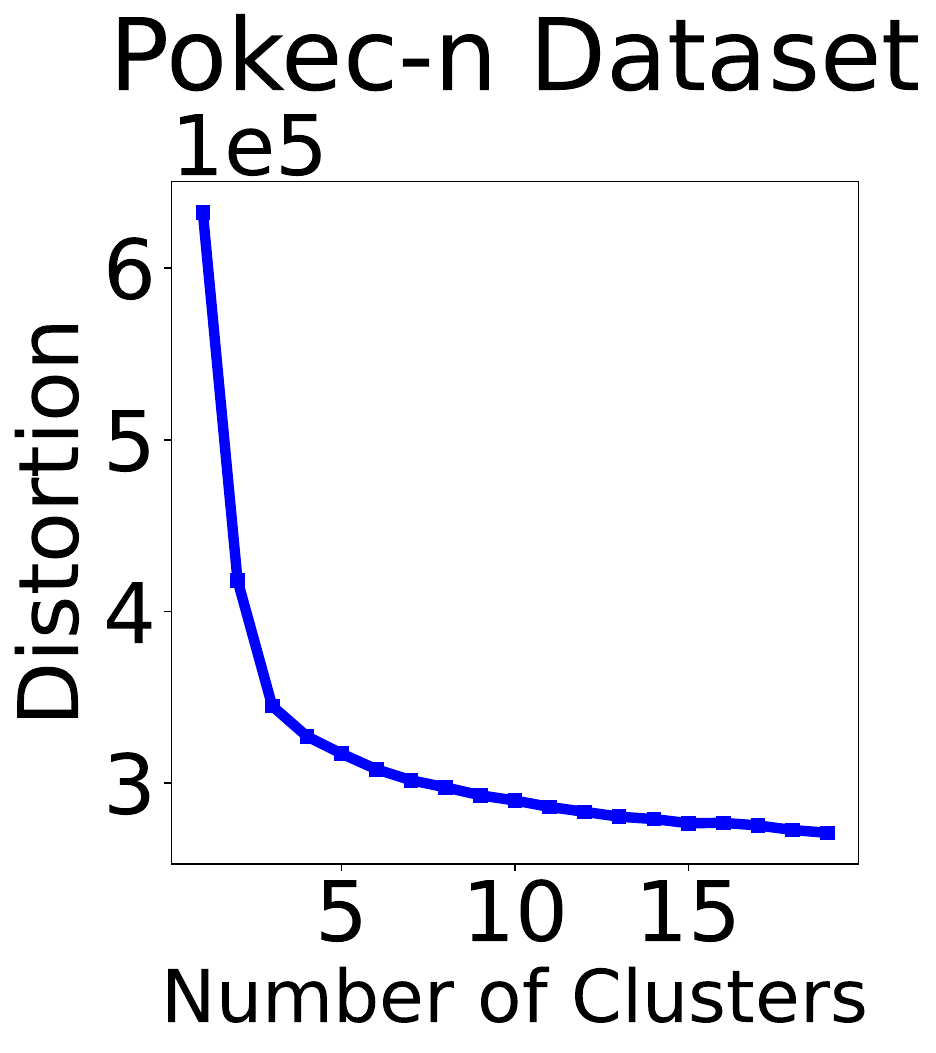}
\caption{The Elbow plots show the optimal number of clusters in each dataset on K-means clustering.}
\label{fig:elbow}
\end{figure}
Table~\ref{table:result2} presents a detailed comparison of the proposed \GFGNN{} model against baseline methods across three benchmark datasets. The results include performance metrics such as AUC and F1-score, alongside fairness indicators—individual fairness (IF), group disparity (GD), and Gini coefficients computed for multiple subpopulations (clusters). The term Vanilla refers to a baseline model trained without any fairness enhancement. Across all architectures (GCN, GIN, JK), \GFGNN{} consistently achieves comparable or superior predictive performance while significantly reducing fairness violations, as reflected in both the IF and Gini values. 
\begin{table*}[h]
\caption{Gini coefficient for different clusters on Credit, Income, and Pokec-n datasets. The model indicates the algorithm, and Vanilla represents that no fairness mechanism has been used. Best performances are in bold. Individual (un)fairness numbers are reported in thousands.}
\label{table:result2}
\centering
\scalebox{0.90}{
\begin{tabular}{l|ccc|ccc|ccc}
\toprule
 Model $\rightarrow$ & \bf Vanilla & \bf \guide & \bf \GFGNN & \bf Vanilla & \bf \guide & \bf \GFGNN & \bf Vanilla & \bf \guide & \bf \GFGNN\\
      \toprule
     &  & \bf GCN &  &  & \bf GIN & & & \bf JK & \\
\toprule
    \multicolumn{10}{c}{\bf Credit} \\
       \toprule  \textbf{AUC($\uparrow$)} & 0.68 & 0.67 & \bf 0.69 & \bf 0.71 & 0.69 & 0.69 & 0.69 & 0.68  & 0.68 \\
    \textbf{\rev{F1-score($\uparrow$)}}&  0.71 & 0.72 & \bf 0.73 & 0.72 & 0.73 & \bf 0.74 & 0.70 & 0.71  & \bf0.73 \\
    \textbf{IF($\downarrow$)} & 17.38 & 1.09 & \bf 1.01 & 27.47 & 1.97 & \bf 1.78 & 15.55 & 1.48  & \bf 1.42 \\
    \textbf{GD($\downarrow$)} & 1.35 & \bf 1.00 & \bf 1.00 & 1.87 & \bf 1.00 & \bf 1.00  &  1.24 & \bf 1.00  & \bf 1.00 \\
    \textbf{Gini Cl 1 ($\downarrow$)} & 0.08 & \bf 0.04 & 0.04 & 0.11 & 0.08 & \bf 0.05 & 0.11 & 0.04  & \bf 0.03 \\
    \textbf{Gini Cl 2 ($\downarrow$)} & 0.10 & 0.08 & \bf 0.08 & 0.13 & 0.04 & \bf 0.04 & 0.14 & \bf 0.04 & 0.05 \\
    \textbf{Gini Cl 3 ($\downarrow$)} & 0.27 & \bf 0.26 & 0.26 & 0.29 & 0.06 & \bf 0.06 & 0.25 & \bf 0.07 & 0.08 \\
    \textbf{Gini Cl 4 ($\downarrow$)} & 0.20 & 0.19 & \bf 0.19 & 0.17 & 0.13 & \bf 0.10 & 0.19 & 0.08 & \bf 0.08  \\ 
    \textbf{Gini Cl 5 ($\downarrow$)} & 0.09 & 0.07 & \bf 0.07 & 0.11 & \bf 0.04 & 0.05 & 0.13 & 0.04 & \bf 0.04\\
    \textbf{Gini Cl 6 ($\downarrow$)} & 0.08 & 0.05 & \bf 0.05 & 0.11 & \bf 0.05 & 0.06 & 0.13 & 0.03 & \bf 0.03 \\
\bottomrule
    \multicolumn{10}{c}{\bf Income} \\
    \toprule

    \textbf{AUC($\uparrow$)} & 0.77 & 0.74 & \bf 0.78 & \bf 0.81 &  0.80 & 0.80 & \bf 0.80 & 0.74 &  0.75\\
\textbf{\rev{F1-score($\uparrow$)}} & 0.78 & \bf0.79 & 0.78 & 0.80 & \bf0.81 &  \bf0.81 & 0.79 & \bf0.80  & \bf 0.80\\
    \textbf{IF($\downarrow$)} & 111.03 & 9.75 & \bf 6.85 & 421.81 & 23.46 & \bf 22.31 & 439.38 & 27.14 & \bf 24.39   \\
    \textbf{GD($\downarrow$)} & 1.23 & \bf 1.00 & \bf 1.00 & 1.16 & \bf 1.00  & \bf 1.00 & 1.29 & \bf 1.00 & \bf 1.00 \\
    \textbf{Gini Cl 1 ($\downarrow$)} & 0.34 & 0.19 & \bf 0.17 &  0.43 & \bf 0.16 & 0.17  & 0.40 & \bf 0.11 & 0.11 \\
    \textbf{Gini Cl 2 ($\downarrow$)} & 0.28 & 0.18 & \bf 0.18 &  0.38 & 0.02 & 0.17  & 0.34 & \bf 0.10 & 0.15 \\
    \textbf{Gini Cl 3 ($\downarrow$)} & 0.32 & 0.15 & \bf 0.12 &  0.56 & 0.15 & \bf 0.15 & 0.47 & \bf 0.12 & 0.16 \\
    \textbf{Gini Cl 4 ($\downarrow$)} & 0.21 & 0.08 & \bf 0.07 &  0.21 & 0.15 & \bf 0.15 & 0.24 & 0.12 & \bf 0.11 \\
    \textbf{Gini Cl 5 ($\downarrow$)} & 0.37 & \bf 0.13 &  0.13 & 0.60 & 0.25 & \bf 0.24 & 0.49 & 0.39 & \bf 0.32\\
    \textbf{Gini Cl 6 ($\downarrow$)} & 0.22 & \bf 0.12 &  0.13 & 0.24 & \bf 0.16 & \bf 0.16 & 0.25 & 0.16 & \bf 0.13\\
\bottomrule
    \multicolumn{10}{c}{\bf Pokec-n} \\
    \toprule
\textbf{AUC($\uparrow$)} & \bf 0.77 & 0.74 &  0.74  &  \bf 0.76 & 0.74 &  0.74 & \bf 0.79 & 0.78 &  0.78  \\
\textbf{\rev{F1-score($\uparrow$)}}& 0.75 & 0.77 &  \bf0.79 & 0.76 & 0.77 &  \bf0.79 & 0.76 & \bf0.78  &  \bf0.78\\
    \textbf{IF($\downarrow$)}  & 859.67& 46.60 & \bf 26.16  &  1589.50& 96.90 & \bf 33.70 & 1450.98 & 59.42 & \bf 44.51  \\
    \textbf{GD($\downarrow$)}& 2.55 & \bf 1.00 & \bf 1.00  &  3.50 & \bf 1.00 & \bf 1.00  & 3.07 & \bf 1.00 & \bf 1.00 \\
    \textbf{Gini Cl 1} & 0.27 & \bf 0.10 & 0.10  & 0.22 & 0.11 & \bf 0.10 & 0.38 & 0.08 & \bf 0.07\\
    \textbf{Gini Cl 2} & 0.17 & 0.09 & \bf 0.08  & 0.13 & \bf 0.13 & 0.14 & 0.34 & 0.09 & \bf 0.08\\
    \textbf{Gini Cl 3} & 0.06 & 0.03 & \bf 0.02  & 0.09 & \bf 0.06 & 0.06 & 0.07 & 0.05 & \bf 0.04 \\
    \textbf{Gini Cl 4} & 0.27 & \bf 0.11 & 0.13  & 0.15 & 0.10 & \bf 0.09 & 0.35 & 0.07 &  \bf 0.06 \\
    \textbf{Gini Cl 5} & 0.16 & 0.07 & \bf 0.06 & 0.11 & \bf 0.11 & 0.11 & 0.09 & 0.07 & \bf 0.07\\
    \textbf{Gini Cl 6} & 0.27 & \bf 0.10 & 0.11  &  0.21 & 0.10 & \bf 0.09 & 0.08 & 0.05 & \bf 0.05 \\
    \textbf{Gini Cl 7} & 0.28 & 0.12 & \bf 0.10  &  0.24 & 0.10 & \bf 0.10 & 0.33 & \bf 0.10  &  0.11 \\
    \textbf{Gini Cl 8} & 0.27 & 0.16 & \bf 0.12  &  0.29 & 0.12 & \bf 0.09 & 0.33 & 0.10  & \bf 0.08 \\
\bottomrule
\end{tabular}}
\end{table*}
%%%%%%%%%%%%%%%%%%%%%%%%%%%%%%%%%%%%%%%%%%%%%%%%%%%%%%%%%%%%%%%%%%%%%%%%%%%%%%%%%%%%
%%%%%%%%%%%%%%%%%%%%%%%%%%%%%%%%%%%%%%%%%%%%%%%%%%%%%%%%%%%%%%%%%%%%%%%%%%%%%%%%%%%%
\begin{table*}[h]
\caption{ Performance comparison of benchmarked algorithms across datasets using the JK backbone architecture. "Vanilla" indicates no debiasing. 
Arrows: $\uparrow$ = higher is better, $\downarrow$ = lower is better. 
Best performance per column is in \textbf{bold}. Individual fairness (IF) is reported in thousands.}
\label{apptable:result1_JK}
\centering
\begin{tabular}{l|ccccc}
\toprule
\rowcolor{gray!25}
\textbf{Model} & \textbf{AUC} ($\uparrow$) & \textbf{IF} ($\downarrow$) & \textbf{GD} ($\downarrow$) & \textbf{IF-Gini} ($\downarrow$) & \textbf{GD-Gini} ($\downarrow$) \\
\midrule
\multicolumn{6}{c}{\textbf{Credit}} \\
\midrule
Vanilla &  0.64$\pm$0.10 & 31.50$\pm$13.25 & 1.35$\pm$0.05 &0.28 $\pm$ 0.03&1.71 $\pm$ 0.01\\
FairGNN &  0.66$\pm$0.05 & 2.65$\pm$1.85 & 1.54$\pm$0.28 &0.19 $\pm$ 0.02& 1.82 $\pm$ 0.02\\
NIFTY   & 0.69$\pm$0.02 & 30.12$\pm$2.25 & 1.26$\pm$0.02 &0.27 $\pm$ 0.01&1.73 $\pm$ 0.00\\
PFR     &  0.67$\pm$0.02 & 36.24$\pm$19.10 & 1.40$\pm$0.02 &0.27 $\pm$ 0.01&1.65 $\pm$ 0.01\\
InFoRM  &  0.67$\pm$0.04 & 5.70$\pm$5.2 & 1.46$\pm$0.14 &0.16 $\pm$ 0.02&1.44 $\pm$ 0.00\\
\rev{PostProcess}& \bf0.70$\pm$0.00 & 43.57$\pm$7.80 & 1.44$\pm$0.01 &0.28 $\pm$ 0.01 & 1.69 $\pm$ 0.01\\
\rev{iFairNMTF}  & \bf 0.71$\pm$0.01 & 107.72$\pm$9.04 & 1.54$\pm$0.23 &0.28 $\pm$ 0.02 &1.66 $\pm$ 0.02\\
\rev{GNN GEI}    & 0.68$\pm$0.00 & 43.16 $\pm$ 4.15   &  1.41$\pm$0.01&0.26 $\pm$ 0.00& 1.53 $\pm$ 0.02\\
\rev{TF-GNN}     & \bf0.70$\pm$0.00 & 13.20$\pm$0.03 & 1.43$\pm$0.01 &0.20 $\pm$ 0.01& 1.49 $\pm$ 0.02\\
\rev{BeMAP}      &  0.68$\pm$0.00 & 11.36$\pm$2.07 & 1.45$\pm$0.0 &0.22 $\pm$ 0.03&1.63 $\pm$ 0.01\\
\rev{FairSIN}    &  0.68$\pm$0.00 & 18.39$\pm$8.80 & 1.56$\pm$0.0 &0.21 $\pm$ 0.02& 1.56 $\pm$ 0.02\\
\guide &  0.68$\pm$0.00 & 2.35$\pm$0.10 & \textbf{1.00$\pm$0.00} &0.12 $\pm$ 0.02&1.30 $\pm$ 0.01\\
\rowcolor{green!15}
\GFGNN &  0.68$\pm$0.00 & \bf1.88$\pm$0.02 & \bf1.00$\pm$0.00& \bf 0.10 $\pm$ 0.01& \bf 1.08 $\pm$ 0.01\\
\midrule
\multicolumn{6}{c}{\textbf{Income}} \\
\midrule
Vanilla & \bf 0.80$\pm$0.01 & 490.70$\pm$165.33 & 1.18$\pm$0.15 & 0.38 $\pm$ 0.02 &1.80 $\pm$ 0.00\\
FairGNN & 0.77$\pm$0.01 & 230.12$\pm$40.88 & 1.32$\pm$0.14& 0.35 $\pm$ 0.01 &1.81 $\pm$ 0.01\\
NIFTY   & 0.73$\pm$0.01 & 47.40$\pm$12.20 & 1.40$\pm$0.05& 0.20 $\pm$ 0.02 &1.83 $\pm$ 0.01\\
PFR     & 0.73$\pm$0.12 & 330.40$\pm$150.25 & 1.13$\pm$0.21& 0.36 $\pm$ 0.02 &1.72 $\pm$ 0.01\\
InFoRM  & 0.79$\pm$0.00 & 195.61$\pm$11.78 & 1.36$\pm$0.14& 0.33 $\pm$ 0.01 &1.81 $\pm$ 0.02\\
\rev{PostProcess}& 0.79$\pm$0.00 & 520.23$\pm$20 & 1.27$\pm$0.01 & 0.40 $\pm$ 0.02 &1.54 $\pm$ 0.01\\
\rev{iFairNMTF}  & 0.78$\pm$0.01 & 604.89$\pm$24.32 & 1.43$\pm$0.26& 0.44$\pm$ 0.01 &1.76 $\pm$ 0.03\\
\rev{GNN GEI}    &  0.79$\pm$0.00 & 497.29$\pm$19.34 & 1.37$\pm$0.12& 0.37 $\pm$ 0.02 &1.84 $\pm$ 0.02\\
\rev{TF-GNN}     & 0.78$\pm$0.01 & 205.08$\pm$03.25 & 1.48$\pm$0.01& 0.34 $\pm$ 0.01 &1.90 $\pm$ 0.01\\
\rev{BeMAP}      & 0.75$\pm$0.00 & 297.25$\pm$41.65 & 1.39$\pm$0.0 & 0.35 $\pm$ 0.01 &1.83 $\pm$ 0.02\\
\rev{FairSIN}    & 0.75$\pm$0.00 & 282.45$\pm$49.55 & 1.46$\pm$0.0 & 0.33 $\pm$ 0.01 &1.91 $\pm$ 0.01\\
\guide &   0.74$\pm$0.01 & 42.50$\pm$22.10 & \bf1.00$\pm$0.00 & 0.22 $\pm$ 0.02 &1.13 $\pm$ 0.00\\
\rowcolor{green!15}
\GFGNN &   0.75$\pm$0.00 & \bf29.47$\pm$ \bf4.01 & \bf1.00$\pm$0.00 & \bf 0.18 $\pm$ 0.01 & \bf 1.01 $\pm$ 0.01\\
\midrule
\multicolumn{6}{c}{\textbf{Pokec-n}} \\
\midrule
Vanilla &  \bf 0.79$\pm$0.01 & 1639.30$\pm$95.74 & 8.48$\pm$0.51&0.42 $\pm$ 0.01&1.77 $\pm$ 0.01\\
FairGNN &  0.70$\pm$0.00 & 807.79$\pm$281.26 & 11.68$\pm$2.89  &0.35 $\pm$ 0.01&1.79 $\pm$ 0.02\\
NIFTY   &  0.73$\pm$0.01 & 477.31$\pm$165.68 & 8.20$\pm$1.33 &0.31 $\pm$ 0.01&1.76 $\pm$ 0.01\\
PFR     &  0.68$\pm$0.00 & 729.77$\pm$74.62 & 15.66$\pm$5.47&0.35 $\pm$ 0.02&1.80 $\pm$ 0.03 \\
InFoRM  &  0.78$\pm$0.01 &  315.27$\pm$25.21 & 6.80$\pm$0.54&0.30 $\pm$ 0.01&1.74 $\pm$ 0.01 \\
\rev{PostProcess} & 0.78$\pm$0.00 & 1721.42$\pm$83.91 & 10.22$\pm$0.43 &0.42 $\pm$ 0.03&1.80 $\pm$ 0.01\\
\rev{iFairNMTF}   & 0.77$\pm$0.00 & 1602.52$\pm$92.73 & 9.37$\pm$0.10&0.41 $\pm$ 0.01&1.79 $\pm$ 0.03\\
\rev{GNN GEI}     &  0.78$\pm$0.00 & 1788.65$\pm$56.39 & 9.21$\pm$0.55 &0.43 $\pm$ 0.01&1.80 $\pm$ 0.01\\
\rev{TF-GNN}      &  0.76$\pm$0.00 & 418.31$\pm$54.26 & 10.20$\pm$1.45 &0.30 $\pm$ 0.02&1.81 $\pm$ 0.02\\
\rev{BeMAP}       &  0.76$\pm$0.00 & 456.95$\pm$125.25 & 5.56$\pm$0.45 &0.30 $\pm$ 0.01&1.53 $\pm$ 0.01\\
\rev{FairSIN}     &  0.76$\pm$0.00 & 282.45$\pm$49.55 & 1.46$\pm$0.0&0.26 $\pm$ 0.01&1.27 $\pm$ 0.01\\
\guide   &  0.75$\pm$0.02 & 83.09$\pm$18.70 & 1.13$\pm$0.02& 0.25 $\pm$ 0.01&1.25 $\pm$ 0.01\\
\rowcolor{green!15}
\GFGNN  &  0.78$\pm$0.10 & \bf 43.87$\pm$2.36 & \bf 1.00$\pm$ 0.00 & \bf 0.23 $\pm$ 0.01& \bf 1.18 $\pm$ 0.01\\
\bottomrule
\end{tabular}
\end{table*}
\section{Impact of Gradient Normalization} \label{app:gradnorm}
In Fig.~\ref{fig:gradnormplots}, we plot the trajectory of the weights against training epochs. In the credit dataset, initially, utility loss is given higher weightage than group fairness loss, but after certain iterations, group fairness loss weightage overcomes utility loss weightage. Meanwhile, the utility loss is always given the higher weightage in the other two datasets. These behaviors indicate the sensitivity of $\beta_i$. With the increase in the number of iterations/epochs, initially, $\beta_i$s' are changing at different rates, but after a certain iteration, the values of $\beta_i$s' get stabilized, indicating similar training rates across all three losses.

\begin{figure*}[h]
\centering

% Row of plots
\includegraphics[width=2in, height=2in]{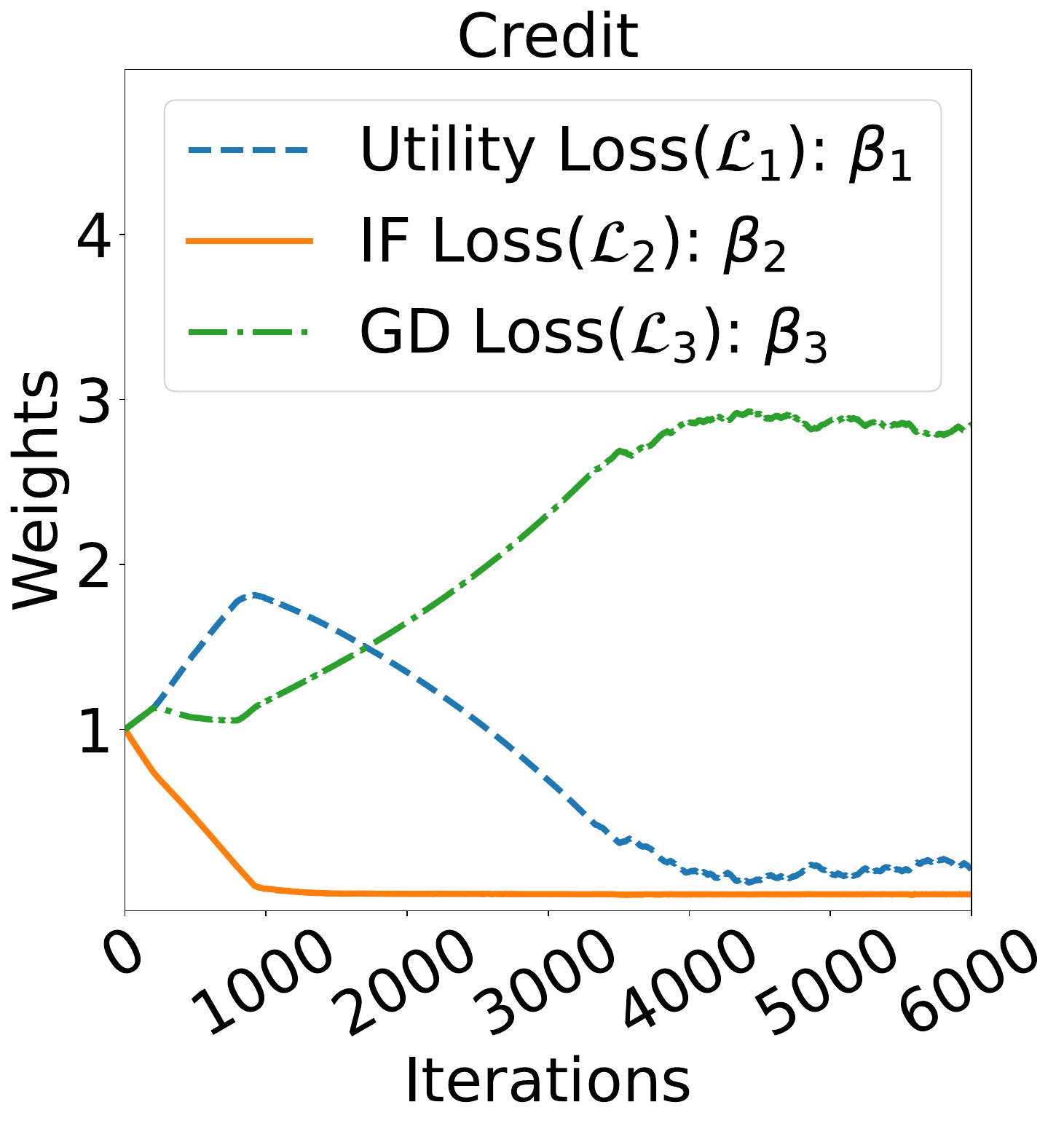}
\hspace{1em}
\includegraphics[width=2in, height=2in]{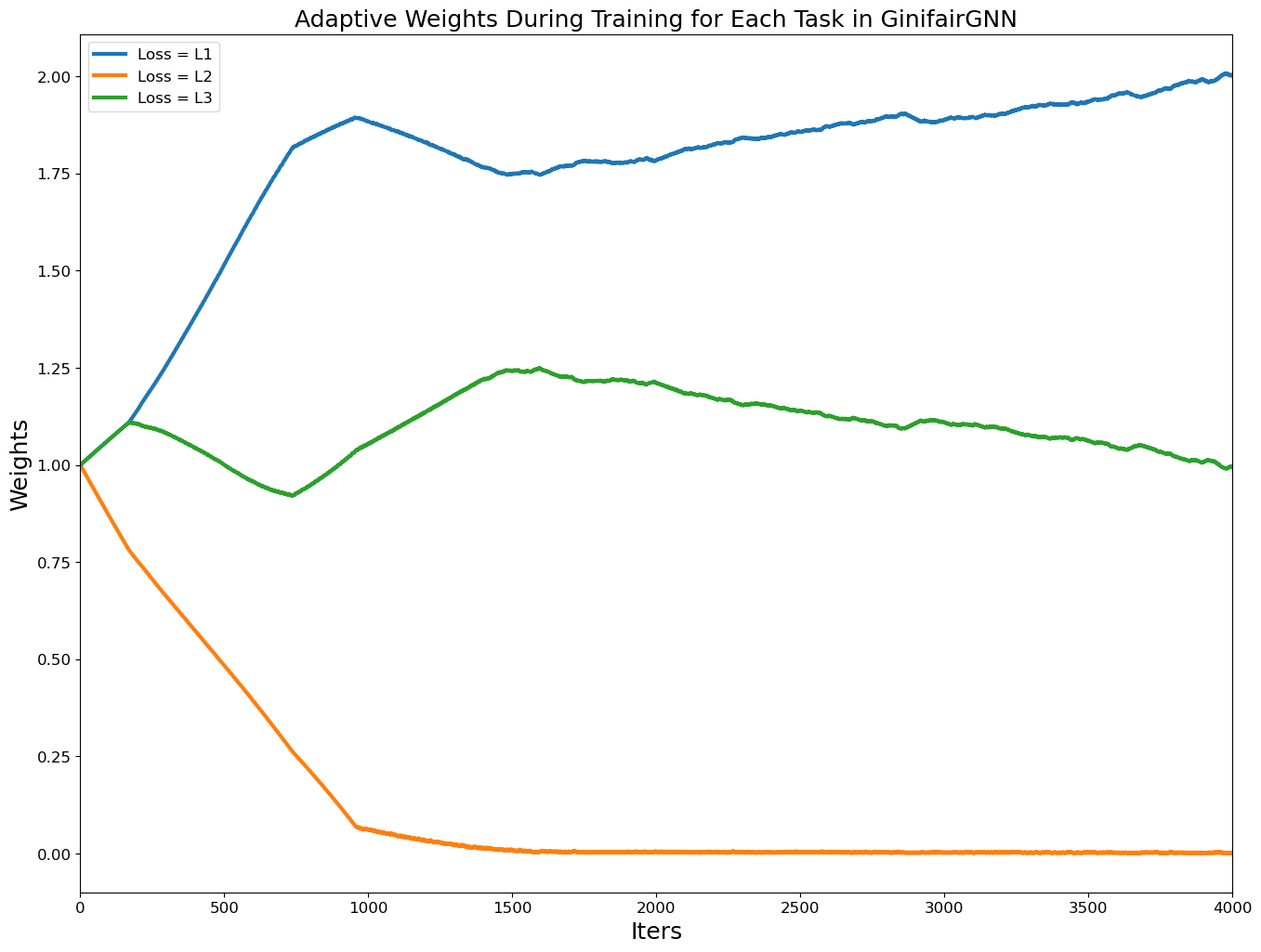}
\hspace{1em}
\includegraphics[width=2in, height=2in]{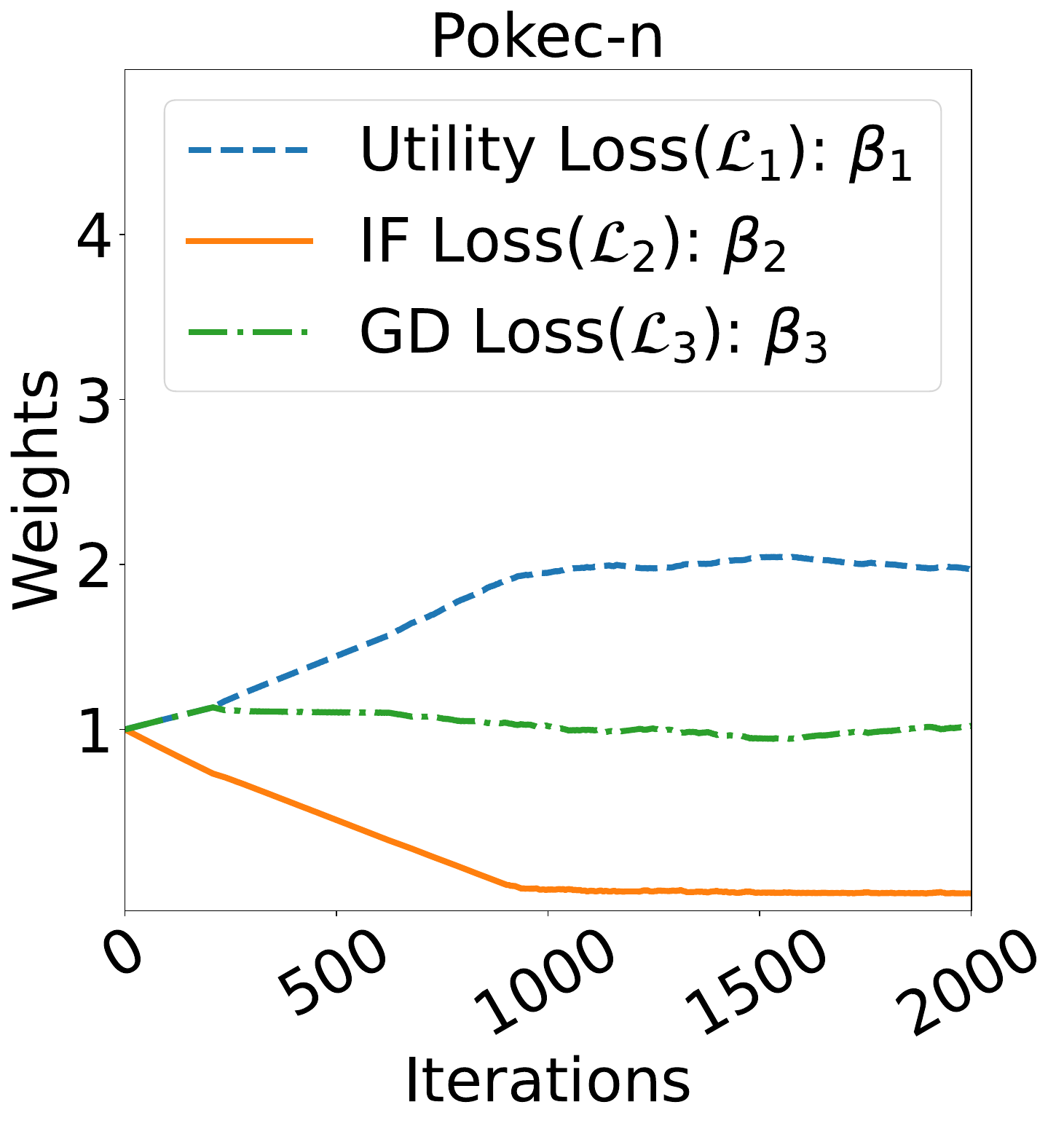}

\vspace{0.3em}

% Manual subfigure labels
\par
(a) Credit
\hspace{5.5em}
(b) Income
\hspace{5.5em}
(c) Pokec-n

\vspace{0.5em}
\caption{Adaptive loss weights learned via GradNorm during training for the \GFGNN{} model with a GCN backbone across three datasets.}
\label{fig:gradnormplots}
\end{figure*}

%%%%%%%%%%%%%%%%%%%%%%%%%%%%%%%%%%%%%%%%%%%%%%%%%%%%%%%%%%%%%%%%%%%%%%%%%%%%%%
\section{ Multidimensional Gini and Comparison of its Differentiable Upper Bound with Smooth Pooling Operators}\label{pooling}
Let $\Delta(Z) \in \mathbb{R}^{M}_{\ge 0}$ collect pairwise embedding disparities
$\Delta_{ij} := \|z_i - z_j\|$ for $M = \binom{n}{2}$ unordered pairs.
Define the (unnormalized) \emph{Gini fairness} functional
\begin{equation}
\mathcal{G}ini(Z) \;=\; \frac{1}{M}\sum_{(i,j)} \Delta_{ij},
\end{equation}
i.e., the mean absolute pairwise disparity. By contrast, conventional pooling surrogates optimize
\begin{equation}
\frac{1}{M} \sum_{i,j} \mathbb{I}\!\left[\Delta_{ij} \ge \varepsilon\right]
\; .
\end{equation}

for some $ \varepsilon$. However, for any $\varepsilon>0$,  using Markov’s inequality on the nonnegative random variable $\Delta_{ij}$ sampled uniformly from pairs yields
$\Pr[\Delta_{ij}\ge \varepsilon] \le \mathbb{E}[\Delta_{ij}]/\varepsilon
= \mathcal{G}(Z)/\varepsilon$. Hence, we have,
\begin{equation}
\label{eq:tail}
\frac{1}{M} \sum_{i,j} \mathbb{I}\!\left[\Delta_{ij} \ge \varepsilon\right]
\;\le\; \frac{\mathcal{G}ini(Z)}{\varepsilon} \; .
\end{equation}

Hence, minimizing $\mathcal{G}ini(Z)$ directly controls the \emph{fraction} of unfair pairs at any scale $\varepsilon$. Pooling substitutes (max, smooth-max, top-$k$, $p$-norm) do not provide such calibrated distributional guarantees, they may keep the maximum small while
still allowing a large fraction of moderate violations. Further, let $L$ be the graph Laplacian built from the same pairwise weights used in fairness evaluation. Then
\begin{equation}
\label{eq:upper}
\mathcal{G}ini(Z) \;\le\; C \cdot \mathrm{Tr}(Z^\top L Z)
\end{equation}
for an explicit constant $C>0$ depending only on the weight normalization and
feature bounds. By Jensen’s inequality and the Cauchy–Schwarz bound
$\|u\|\le \sqrt{\|u\|^2}$ relate the mean absolute disparity to the mean squared
disparity; the latter equals a Laplacian quadratic form. Therefore
$\mathcal{G}ini(Z)$ admits a convex, smooth upper bound in $\mathrm{Tr}(Z^\top L Z)$.

To further validate our approach empirically, we compare \GFGNN{} with smooth pooling-operator surrogates. Specifically, we employ two commonly used surrogates, softmax and top-$k$ as individual-fairness regularizers in conjunction with the proposed Gini-based individual fairness. We conduct an ablation study on all three datasets using identical train–validation–test splits and a GCN backbone. The results are presented
in Tables~\ref{tab:pool_credit}, \ref{tab:pool_income}, and \ref{tab:pool_pokec}. These results clearly indicate that modifying pooling surrogates alone is insufficient to achieve fair graph embeddings. It is also important to emphasize that group fairness plays a crucial role in graph learning alongside individual fairness, as demonstrated in Table~6 of the manuscript.

\begin{table}[h!]
\centering
\caption{Performance comparison of pooling surrogates and \GFGNN on Credit dataset. Arrows: $\uparrow$ higher better, $\downarrow$ lower better.} \label{tab:pool_credit}
\begin{tabular}{lccccc}
\toprule
Method & AUC ($\uparrow$) & IF ($\downarrow$) & GD ($\downarrow$) & IF-Gini ($\downarrow$) & GD-Gini ($\downarrow$) \\
\midrule
Softmax  & 0.68$\pm$0.01 & 5.12$\pm$0.01 & 1.20$\pm$0.02 & 0.24$\pm$0.01 & 1.11$\pm$0.03 \\
Top-$k$ (5\%)  & 0.68$\pm$0.01 & 19.26$\pm$0.01 & 1.34$\pm$0.01 & 0.25$\pm$0.01 & 1.11$\pm$0.01 \\
\rowcolor{green!15}
\GFGNN   & 0.68$\pm$0.01 & \textbf{0.22}$\pm$0.01 & 1.01$\pm$0.01 & \textbf{0.12}$\pm$0.00 & 1.09$\pm$0.01 \\
\bottomrule
\end{tabular} 
\end{table}

\begin{table}[h!]
\centering
\caption{Performance comparison of pooling surrogates and \GFGNN on Income dataset. Arrows: $\uparrow$ higher better, $\downarrow$ lower better.}
\label{tab:pool_income}
\begin{tabular}{lccccc}
\toprule
Method & AUC ($\uparrow$) & IF ($\downarrow$) & GD ($\downarrow$) & IF-Gini ($\downarrow$) & GD-Gini ($\downarrow$) \\
\midrule
Softmax  & 0.73$\pm$0.01 & 290.1$\pm$6.70 & 1.33$\pm$0.01 & 0.21$\pm$0.01& 1.19$\pm$0.01 \\
Top-$k$ (5\%) & 0.73$\pm$0.01 & 213.4$\pm$8.04 & 1.25$\pm$0.01 & 0.22$\pm$0.01 & 1.17$\pm$0.01 \\
\rowcolor{green!15}
\GFGNN    & 0.73$\pm$0.01 & \textbf{21.12}$\pm$5.20 & \textbf{1.00}$\pm$0.01 & \textbf{0.17}$\pm$0.00 & \textbf{1.03}$\pm$0.01 \\
\bottomrule
\end{tabular}
\end{table}

\begin{table}[h!]
\centering
\caption{Performance comparison of pooling surrogates and \GFGNN on Pokec-n dataset. Arrows: $\uparrow$ higher better, $\downarrow$ lower better.}
\label{tab:pool_pokec}
\begin{tabular}{lccccc}
\toprule
Method & AUC ($\uparrow$) & IF ($\downarrow$) & GD ($\downarrow$) & IF-Gini ($\downarrow$) & GD-Gini ($\downarrow$) \\
\midrule
Softmax  & 0.74$\pm$0.01 & 345.81$\pm$8.9 & 3.07$\pm$0.02 & 0.31$\pm$0.01& 1.15$\pm$0.01 \\
Top-$k$(5\%)  & 0.74$\pm$0.01 & 276.57$\pm$11.0& 5.09$\pm$0.02 & 0.29$\pm$0.01 &1.18$\pm$0.01 \\
\rowcolor{green!15}
\GFGNN      & 0.74$\pm$0.01 & \textbf{31.10}$\pm$5.19 & \textbf{1.00}$\pm$0.01 & \textbf{0.21}$\pm$0.01 & \textbf{1.14}$\pm$0.01 \\
\bottomrule
\end{tabular}
\end{table}

%%%%%%%%%%%%%%%%%%%%%%%%%%%%%%%%%%%%%%%%%%%%%%%%%%
\section{Comparison with Ranking-Based Approach} 
We also compared against REDRESS~\cite{dong2021individual}, which is a ranking-based approach. As shown in the Table~\ref{tab:graphgini_redress}, \GFGNN consistently outperforms REDRESS by a significant margin, demonstrating the effectiveness of our framework even beyond the Lipschitz setting.

\begin{table*}[ht]
\caption{Performance comparison of REDRESS and \GFGNN across datasets using the GCN backbone architecture. 
Arrows: $\uparrow$ = higher is better, $\downarrow$ = lower is better. 
Individual fairness (IF) is reported in thousands.}
\label{tab:graphgini_redress}
\centering
\begin{tabular}{l|ccccc}
\toprule
\rowcolor{gray!25}
\textbf{Model} & \textbf{AUC} ($\uparrow$) & \textbf{IF} ($\downarrow$) & \textbf{GD} ($\downarrow$) & \textbf{IF-Gini} ($\downarrow$) & \textbf{GD-Gini} ($\downarrow$) \\
\midrule

%=========== CREDIT ===========%
\multicolumn{6}{c}{\textbf{Credit}} \\
\midrule
REDRESS     & 0.68$\pm$0.01 & 2.33$\pm$0.01 & 1.52$\pm$0.01 & 0.17$\pm$0.01 & 1.77$\pm$0.01 \\
\rowcolor{green!15}
\GFGNN   & 0.68$\pm$0.00 & 0.22$\pm$0.06 & 1.00$\pm$0.00 & 0.12$\pm$0.01 & 1.10$\pm$0.01 \\
\midrule

%=========== INCOME ===========%
\multicolumn{6}{c}{\textbf{Income}} \\
\midrule
REDRESS     & 0.73$\pm$0.03 & 187.10$\pm$0.02 & 1.39$\pm$0.02 & 0.25$\pm$0.01 & 1.56$\pm$0.02 \\
\rowcolor{green!15}
\GFGNN   & 0.73$\pm$0.09 & 21.12$\pm$5.22  & 1.00$\pm$0.00 & 0.17$\pm$0.01 & 1.02$\pm$0.01 \\
\midrule

%=========== POKEC-N ===========%
\multicolumn{6}{c}{\textbf{Pokec-n}} \\
\midrule
REDRESS     & 0.74$\pm$0.00 & 350.22$\pm$9.10 & 7.10$\pm$0.15 & 0.39$\pm$0.02 & 1.85$\pm$0.02 \\
\rowcolor{green!15}
\GFGNN   & 0.74$\pm$0.00 & 31.10$\pm$5.22  & 1.00$\pm$0.00 & 0.21$\pm$0.01 & 1.14$\pm$0.01 \\
\bottomrule
\end{tabular}
\end{table*}

\section{Hyperparameter Sensitivity Analysis}\label{app:hyperparameter}
In the \GFGNN framework, the two key parameters, $\beta_2$ and $\beta_3$, must be fine-tuned to achieve optimal performance while balancing both fairness objectives. We evaluate their impact on the performance of \GFGNN by varying these parameters simultaneously and independently. Figure~\ref{fig:param_var_3d} illustrates the effect of \GFGNN regularizers on accuracy (AUC), individual fairness (IF), and group fairness (GD) on the Credit dataset using the GCN backbone. In Figures (a-c), the x-axis represents $\beta_2$, the y-axis represents $\beta_3$, and the z-axis shows AUC, IF, and GD in Figures (a), (b), and (c), respectively. The trends indicate that increasing both $\beta_2$ and $\beta_3$ results in a decrease in AUC, while individual and group fairness improves. In all cases, $\beta_1$ is fixed at 1, and lower IF and GD values correspond to better individual and group fairness, respectively. Figures (d-i) present when one fairness regularizer is varied while the other is fixed at 0. In Figures (d-f), $\beta_3$ is set to 0, showing that increasing $\beta_2$ decreases accuracy but improves individual fairness. However, group fairness may fluctuate since optimization focuses solely on individual fairness. In Figures (g-i), $\beta_2$ is set to 0, demonstrating that increasing $\beta_3$ reduces accuracy but enhances group fairness, with potential variations in individual fairness due to the focus on group fairness optimization.
\begin{figure*}[h!]
\centering

% Row 1
\includegraphics[width=1.8in]{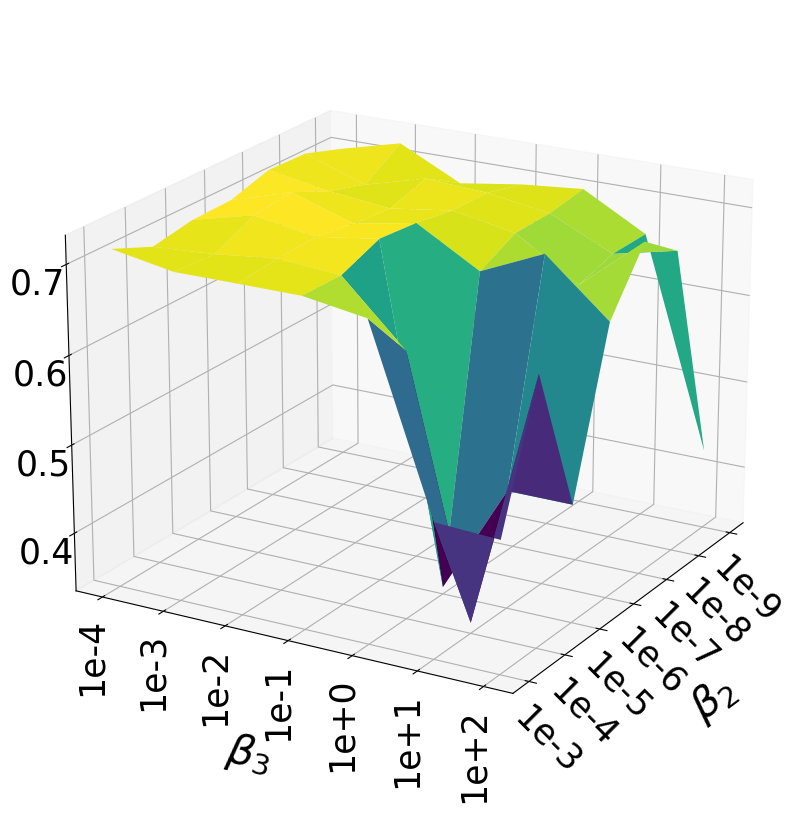}
\hspace{0.5em}
\includegraphics[width=1.8in]{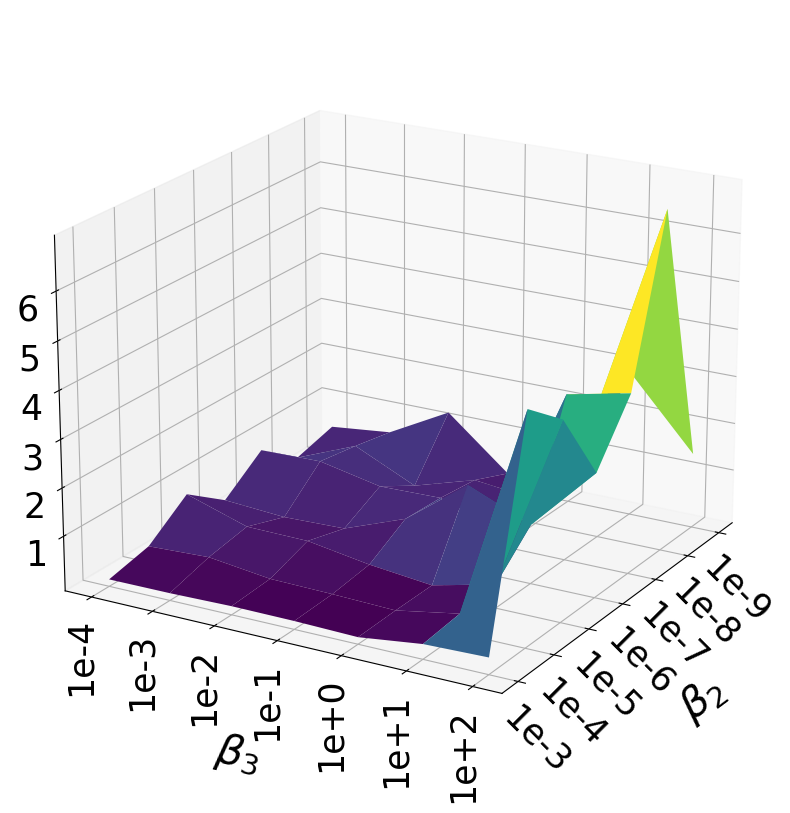}
\hspace{0.5em}
\includegraphics[width=1.8in]{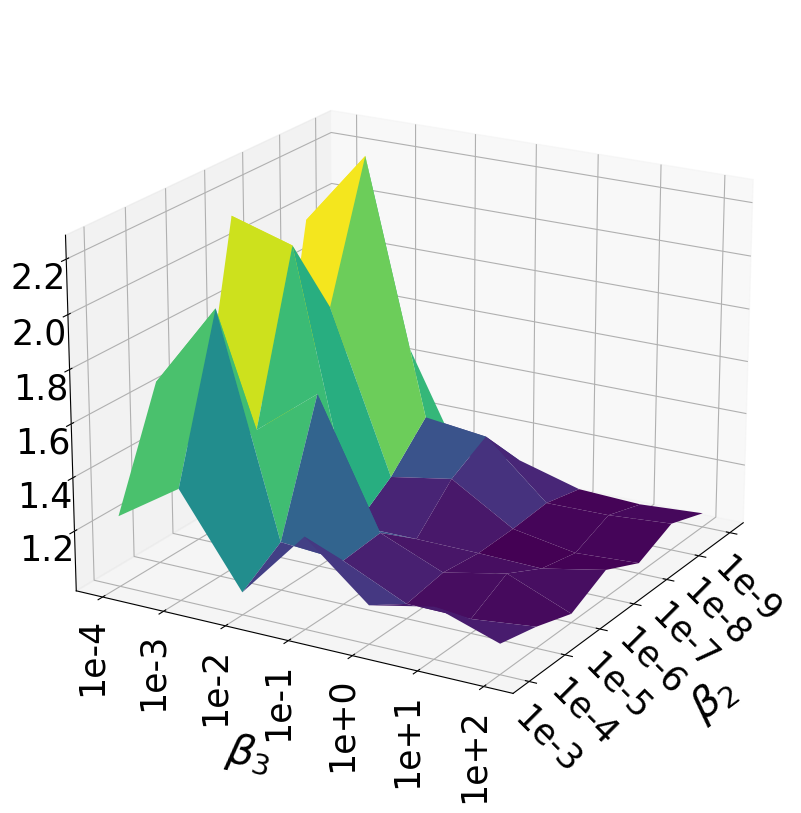}

\vspace{0.5em}
\par
(a) AUC \hspace{6.5em} (b) IF \hspace{6.5em} (c) GD

\vspace{1.0em}

% Row 2
\includegraphics[width=2in]{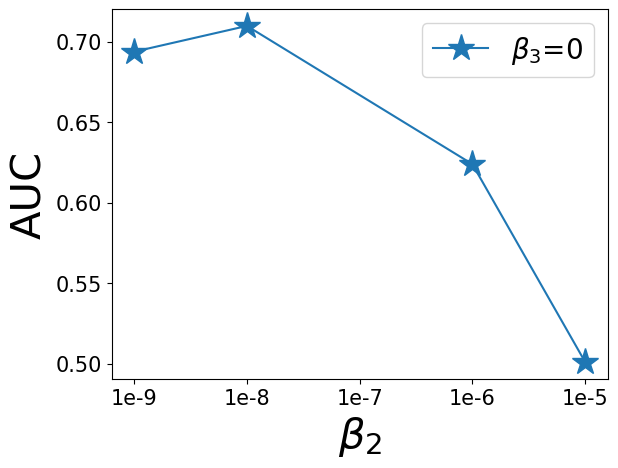}
\hspace{0.5em}
\includegraphics[width=2in]{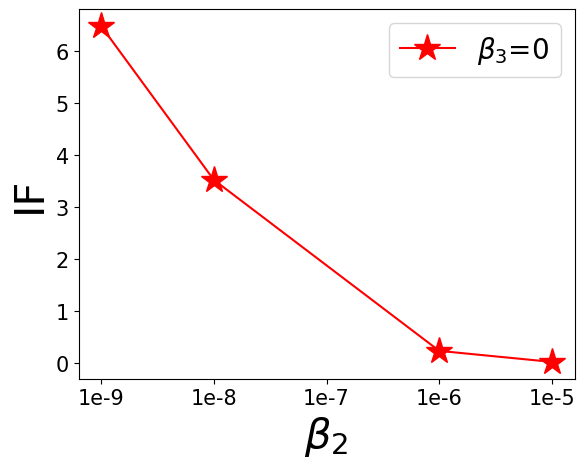}
\hspace{0.5em}
\includegraphics[width=2in]{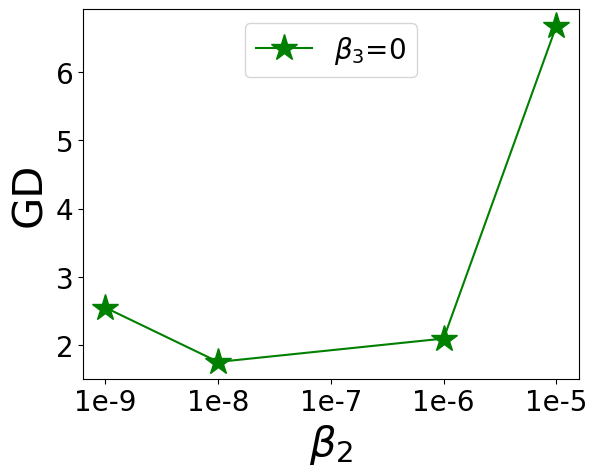}

\vspace{0.5em}
\par
(d) AUC ($\beta_3{=}0$) \hspace{2em} (e) IF ($\beta_3{=}0$) \hspace{2em} (f) GD ($\beta_3{=}0$)

\vspace{1.0em}

% Row 3
\includegraphics[width=2in]{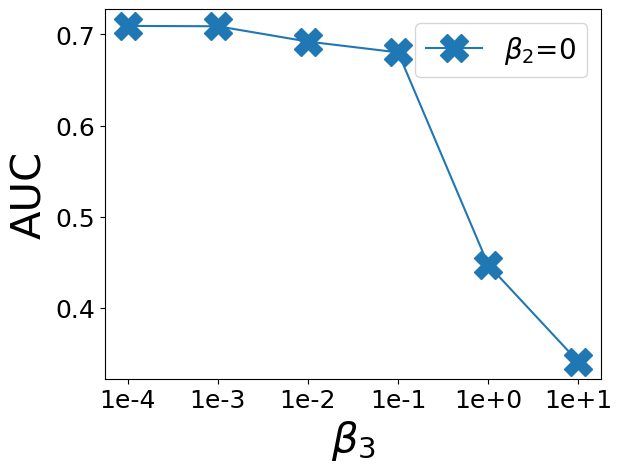}
\hspace{0.5em}
\includegraphics[width=2in]{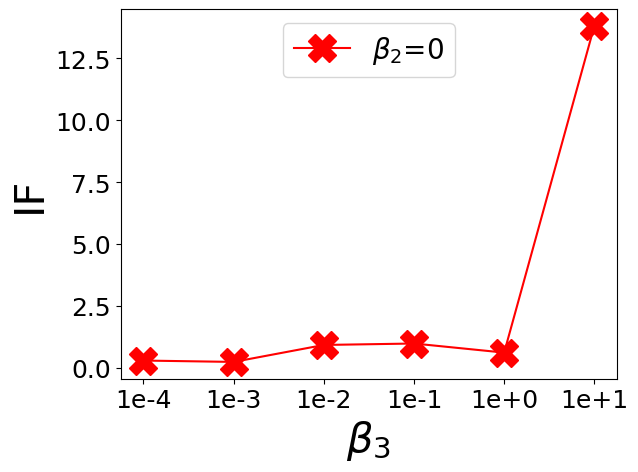}
\hspace{0.5em}
\includegraphics[width=2in]{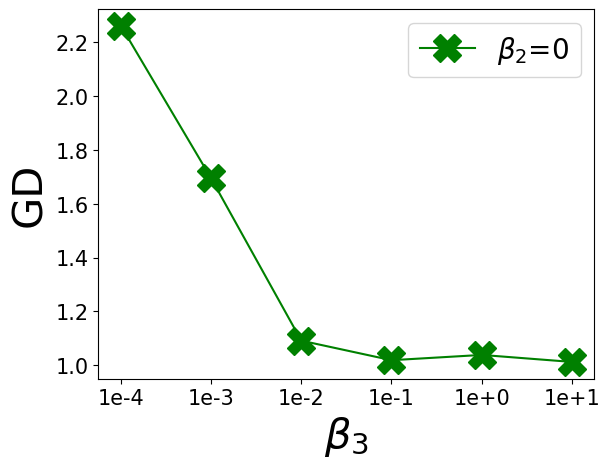}

\vspace{0.5em}
\par
(g) AUC ($\beta_2{=}0$) \hspace{2em} (h) IF ($\beta_2{=}0$) \hspace{2em} (i) GD ($\beta_2{=}0$)

\vspace{0.5em}
\caption{Performance results of \GFGNN for Credit dataset on GCN-GNN with varying hyperparameters \protect${\beta}_2$ for the overall individual unfairness (IF) objective and \protect${\beta}_3$ for the group disparity (GD) objective. IF values are in thousands. Figures (a-c) are 3-D representations where the z-axis corresponds to AUC, individual fairness (IF), and group fairness (GD) in figures  (a), (b), and (c), respectively. Figures (d-i) show 2-D representations by varying one fairness regularizer and fixing the other fairness regularizer to $0$. Here, $\beta_1$ is set to 1 in all figures, and lower IF and GD correspond to better individual and group fairness, respectively.}
\label{fig:param_var_3d}
\end{figure*}
%%%%%%%%%%%%%%%%%%%%%%%%%%%%%%%%%%%%%%%%%%%%%%%%%%%%%%%%%%%%
%%%%%%%%%%%%%%%%%%%%%%%%%%%%%%%%%%%%%%%%%%%%%%%%%%%%%%%%%%%%%%%%%%%%%%%%%%%%
\section{Sensitivity to Homophily/Heterophily} \label{app:homophily}
We evaluate the robustness of \textsc{GraphGini} under varying levels of structural homophily on all three datasets: Credit, Income, and  Pokec-n. The degree of homophily is controlled through a degree-preserving edge rewiring. For each value of $\rho \in \{0, 0.2, 0.4, 0.6, 0.8\}$, a fraction $\rho|E|$ of edges is randomly selected, and one endpoint of each edge is replaced with a node having the same class label as the source. Increasing $\rho$ progressively raises the proportion of same-label connections, thereby producing graphs with higher homophily while maintaining the original degree distribution. For every generated graph, we compute both topological and attribute-based similarity matrices $S$ as defined in Sec.~2. We performed all experiments using the GCN backbone (the same trends hold for GIN and JK architectures) and report the results in Tables~\ref{homocredit}, \ref{homoincome}, and \ref{homopocken}. It is evident that \GFGNN consistently maintains fairness and accuracy under diverse structural regimes. This validates \GFGNN’s robustness under varying graph structures.

\begin{table}[h!]
\centering
\caption{Performance of \GFGNN compared \guide with varying homophily levels ($\rho$) on the Credit dataset using the GCN backbone. Arrows: $\uparrow$ higher is better, $\downarrow$ lower is better.}

\label{homocredit}
\begin{tabular}{c|c|ccccc}
\toprule
$\rho$ & Method & AUC ($\uparrow$) & IF ($\downarrow$) & GD ($\downarrow$) & IF-Gini ($\downarrow$) & GD-Gini ($\downarrow$) \\
\midrule
\multirow{2}{*}{0.0} & \guide & 0.68  & 1.98  & 1.08  & 0.16  & 1.41  \\
                     & \GFGNN & 0.68  & 0.42  & 1.08  & 0.14  & 1.12  \\
\midrule                    
\multirow{2}{*}{0.2} & \guide & 0.68  & 1.96  & 1.08  & 0.16  & 1.40  \\
                     & \GFGNN & 0.68  & 0.31  & 1.06  & 0.13  & 1.11  \\
\midrule                      
\multirow{2}{*}{0.4} & \guide& 0.68   & 1.94  & 1.05  & 0.14  & 1.33  \\
                    & \GFGNN & 0.68   & 0.25  & 1.03  & 0.12  & 1.10  \\
\midrule
\multirow{2}{*}{0.6} & \guide& 0.69   & 1.94  & 1.05  & 0.14  & 1.23  \\
                    & \GFGNN & 0.69   & 0.22  & 1.01  & 0.12  & 1.09  \\
\midrule                      
\multirow{2}{*}{0.8} & \guide & 0.69  & 1.94  & 1.01  & 0.14  & 1.20  \\
                     & \GFGNN & 0.69  & 0.21  & 1.00  & 0.11  & 1.09  \\
\bottomrule
\end{tabular}
\end{table}

\begin{table}[h!]
\centering
\caption{Performance of \GFGNN  compared to \guide with varying homophily levels ($\rho$) on the Income dataset using the GCN backbone. Arrows: $\uparrow$ higher is better, $\downarrow$ lower is better.} 

\label{homoincome}
\begin{tabular}{c|c|ccccc}
\toprule
$\rho$ & Method & AUC ($\uparrow$) & IF ($\downarrow$) & GD ($\downarrow$) & IF-Gini ($\downarrow$) & GD-Gini ($\downarrow$) \\
\midrule
\multirow{2}{*}{0.0} &\guide  & 0.72 & 33.3 & 1.12 & 0.26 & 1.12 \\
                     &\GFGNN  & 0.72 & 25.4 & 1.10 & 0.18 & 1.10 \\
\midrule 
\multirow{2}{*}{0.2}&\guide  & 0.73 & 33.1 & 1.12 & 0.26 & 1.11 \\
                    &\GFGNN & 0.73 & 22.3 & 1.05 & 0.17 & 1.09 \\
\midrule
\multirow{2}{*}{0.4}&\guide   & 0.74 & 32.2 & 1.08 & 0.26 & 1.11 \\
                    &\GFGNN  & 0.74 & 20.8 & 1.03 & 0.16 & 1.06 \\
\midrule
\multirow{2}{*}{0.6}&\guide  & 0.74 & 32.7 & 1.02 & 0.25 & 1.10 \\
                    &\GFGNN & 0.74 & 19.7 & 1.02 & 0.15 & 1.05 \\
\midrule
\multirow{2}{*}{0.8}&\guide   & 0.75 & 32.7 & 1.03 & 0.20 & 1.10 \\
                    &\GFGNN  & 0.75 & 19.4 & 1.00 & 0.15 & 1.05 \\
\bottomrule
\end{tabular}
\end{table}

\begin{table}[h!]
\centering
\caption{Performance of \GFGNN compared to \guide with varying homophily levels ($\rho$) on the Pokec-n dataset using the GCN backbone. Arrows: $\uparrow$ higher is better, $\downarrow$ lower is better.}

\label{homopocken}
\begin{tabular}{c|c|ccccc}
\toprule
$\rho$ & Method &AUC ($\uparrow$) & IF ($\downarrow$) & GD ($\downarrow$) & IF-Gini ($\downarrow$) & GD-Gini ($\downarrow$) \\
\midrule
\multirow{2}{*}{0.0}&\guide & 0.72 & 85.2 & 1.20 & 0.27 & 1.19 \\
                    &\GFGNN & 0.72 & 35.1 & 1.18 & 0.23 & 1.15 \\
\midrule
\multirow{2}{*}{0.2}&\guide & 0.73 & 85.2 & 1.20 & 0.26 & 1.19 \\
                    &\GFGNN & 0.74 & 33.5 & 1.13 & 0.22 & 1.14 \\
\midrule
\multirow{2}{*}{0.4}&\guide & 0.74 & 76.4 & 1.19 & 0.25 & 1.19 \\
                    &\GFGNN & 0.74 & 29.9 & 1.09 & 0.21 & 1.13 \\
\midrule
\multirow{2}{*}{0.6}&\guide & 0.74 & 75.2 & 1.19 & 0.25 & 1.19 \\
                    &\GFGNN & 0.74 & 28.2 & 1.06 & 0.21 & 1.12 \\
\midrule
\multirow{2}{*}{0.8}&\guide  & 0.75 & 75.2 & 1.19 & 0.25 & 1.19 \\
                    &\GFGNN  & 0.75 & 28.1 & 1.04 & 0.20 & 1.11 \\
\bottomrule
\end{tabular}
\end{table}

%%%%%%%%%%%%%%%%%%%%%%%%%%%%%%%%%%%%%%%%%%%%%%%%%%%%%%%%%%%%%
\section{Sensitivity to Attribute Noise}\label{app_noise senstivity}
To evaluate the resilience of \GFGNN to perturbations in node features, we perform a controlled attribute-noise sensitivity analysis. To investigate how fairness and performance respond to progressive corruption of input features, while keeping the 
graph topology fixed. We added Gaussian noise to original feature matrix $X \in \mathbb{R}^{n \times d}$, as follws:
\[
\tilde{X} = X + \eta, \quad \eta \sim \mathcal{N}(0, \sigma^2),
\]
where $\sigma$ controls the noise magnitude.  We vary $\sigma \in \{0.0, 0.1, 0.2, 0.3, 0.4\}$ to increase the noise intensity of the features. This approach preserves structural information in the adjacency matrix $A$, ensuring that any observed variation arises 
solely from feature corruption. It is evident from Tables~\ref{tab:noise_credit}, \ref{tab:noise_income} and \ref{tab:noise_pokec} that across all datasets, the performance of \textsc{GraphGini} remains stable for moderate noise levels ($\sigma \leq 0.2$), while the deterioration in fairness metrics under stronger noise. 

\begin{table}[h!]
\centering
\caption{Performance of \GFGNN compared to \guide when noise ($\sigma$) is added to the features on the Credit dataset. Arrows: $\uparrow$ higher is better, $\downarrow$ lower is better.}
\label{tab:noise_credit}
\begin{tabular}{c|c|ccccc}
\toprule
$\sigma$ &Method & AUC ($\uparrow$) & IF ($\downarrow$) & GD ($\downarrow$) & IF-Gini ($\downarrow$) & GD-Gini ($\downarrow$) \\
\midrule
\multirow{2}{*}{0.0} &\guide & 0.69 & 1.94 & 1.01 & 0.15 & 1.41 \\
                     &\GFGNN & 0.69 & 0.22 & 1.01 & 0.12 & 1.09 \\
\midrule
\multirow{2}{*}{0.1} &\guide & 0.68 & 1.95 & 1.02 & 0.15 & 1.40 \\
                     &\GFGNN & 0.68 & 0.24 & 1.02 & 0.13 & 1.10 \\
\midrule 
\multirow{2}{*}{0.2} &\guide & 0.67 & 2.01 & 1.04 & 0.17 & 1.40 \\
                     &\GFGNN & 0.67 & 0.26 & 1.04 & 0.14 & 1.11 \\
\midrule 
\multirow{2}{*}{0.3} &\guide & 0.66 & 2.05 & 1.10 & 0.17 & 1.40 \\
                     &\GFGNN & 0.66 & 0.28 & 1.07 & 0.15 & 1.12 \\
\midrule
\multirow{2}{*}{0.4} &\guide  & 0.64 & 2.21 & 1.10 & 0.17 & 1.42 \\
                     &\GFGNN  & 0.64 & 0.30 & 1.09 & 0.16 & 1.13 \\
\bottomrule
\end{tabular}
\end{table}

\begin{table}[h!]
\centering
\caption{Performance of \GFGNN compared to \guide when noise ($\sigma$) is added to the features on the Income dataset. Arrows: $\uparrow$ higher is better, $\downarrow$ lower is better.}
\label{tab:noise_income}
\begin{tabular}{c|c|ccccc}
\toprule
$\sigma$ & Method &AUC ($\uparrow$) & IF ($\downarrow$) & GD ($\downarrow$) & IF-Gini ($\downarrow$) & GD-Gini ($\downarrow$) \\
\midrule
\multirow{2}{*}{0.0} &\guide & 0.74 & 39.1 & 1.00 & 0.21 & 1.02 \\
                     &\GFGNN & 0.74 & 19.7 & 1.00 & 0.15 & 1.02 \\
\midrule
\multirow{2}{*}{0.1} &\guide & 0.73 & 40.2 & 1.02 & 0.23 & 1.05 \\
                     &\GFGNN & 0.73 & 20.8 & 1.02 & 0.16 & 1.04 \\
\midrule
\multirow{2}{*}{0.2} &\guide & 0.72 & 45.5 & 1.04 & 0.23 & 1.08 \\
                     &\GFGNN & 0.72 & 22.1 & 1.04 & 0.17 & 1.05 \\
\midrule
\multirow{2}{*}{0.3} &\guide & 0.70 & 48.2 & 1.10 & 0.25 & 1.10 \\
                     &\GFGNN & 0.70 & 23.5 & 1.07 & 0.18 & 1.10 \\
\midrule
\multirow{2}{*}{0.4} &\guide & 0.68 & 49.4 & 1.10 & 0.27 & 1.12 \\
                     &\GFGNN & 0.68 & 25.0 & 1.09 & 0.19 & 1.11 \\
\bottomrule
\end{tabular}
\end{table}

\begin{table}[h!]
\centering
\caption{Performance of \GFGNN compared to \guide when noise ($\sigma$) is added to the features on the Pokec-n dataset. Arrows: $\uparrow$ higher is better, $\downarrow$ lower is better.}

\label{tab:noise_pokec}
\begin{tabular}{c|c|ccccc}
\toprule
$\sigma$ & Method &AUC ($\uparrow$) & IF ($\downarrow$) & GD ($\downarrow$) & IF-Gini ($\downarrow$) & GD-Gini ($\downarrow$) \\
\midrule
\multirow{2}{*}{0.0} &\guide & 0.74 & 65.1 & 1.10 & 0.24 & 1.18 \\
                     &\GFGNN & 0.74 & 28.2 & 1.06 & 0.21 & 1.12 \\
\midrule
\multirow{2}{*}{0.1} &\guide  & 0.73 & 68.2 & 1.10 & 0.26 & 1.19 \\
                     &\GFGNN  & 0.73 & 29.6 & 1.07 & 0.21 & 1.13 \\
\midrule
\multirow{2}{*}{0.2} &\guide  & 0.72 & 74.3 & 1.15 & 0.28 & 1.21 \\
                     &\GFGNN  & 0.72 & 31.1 & 1.09 & 0.22 & 1.14 \\
\midrule
\multirow{2}{*}{0.3} &\guide  & 0.71 & 78.3 & 1.15 & 0.28 & 1.26 \\
                     &\GFGNN  & 0.71 & 32.8 & 1.11 & 0.23 & 1.15 \\
\midrule
\multirow{2}{*}{0.4} &\guide  & 0.69 & 80.2 & 1.16 & 0.29 & 1.26 \\
                     &\GFGNN  & 0.69 & 34.9 & 1.13 & 0.24 & 1.16 \\
\bottomrule
\end{tabular}
\end{table}

%%%%%%%%%%%%%%%%%%%%%%%%%%%%%%%%%%%%%%%%%%%%%%%%%%%%%%%%%%%%%%%%%%%%%%%%%%%%%%%%%%%
\section{Sensitivity to Group Imbalance}\label{app:group_balance}
Handling group imbalance is critical, as real-world graph data often exhibits significant disparities in group representation. In our work, we have carefully selected datasets and corresponding train–test splits to ensure that our method can effectively handle such imbalanced scenarios. Specifically, two of our datasets \textit{Credit} and \textit{Income} are highly imbalanced, whereas the \textit{Pokec-n} dataset is relatively balanced. For clarity, the approximate group ratios in these datasets are summarized in Table~\ref{groupbalance} below. Our experimental results (Tables~3 and~4) further demonstrate that \textsc{GraphGini} remains robust and consistent even when the sensitive groups are highly imbalanced, highlighting its stability and fairness-preserving behavior across diverse data distributions.
\begin{table}[h!]
\centering
\caption{Group Ratio across datasets. Ratios denote the proportion of the majority to the minority groups.}
\label{groupbalance}
\begin{tabular}{lccc}
\toprule
\textbf{Metric} &  \textbf{Credit} & \textbf{Income} &\textbf{Pokec-n} \\
\midrule
Labels & \{0, 1\} & \{0, 1\} & \{0, 1\} \\
Group ratio (majority : minority) & 78 : 22 & 78 : 22 & 51 : 49 \\
\bottomrule
\end{tabular}
\label{tab:group_imbalance}
\end{table}
%%%%%%%%%%%%%%%%%%%%%%%%%%%%%%%%%%%%%%%%%%%%%%%
\section{Sensitivity to Embedding Dimension}\label{app_embedding_dim}
To understand how the representation size affects the fairness--utility trade-off by varying the hidden embedding dimension $h \in \{8, 16, 32\}$ of the GNN
encoder used by \textsc{GraphGini}. All other components (losses, similarity $S$, optimization, and data splits) are kept fixed. For all datasets, we performed an ablation study by changing from small to moderate dimensions ($h=8 \rightarrow 16/32$), and reported the results in Table~\ref{hdim} below. We observe that for larger hidden dimensions, the performance of the GNN begins to diminish. All the results in the manuscript are presented with $h{=}16$ at which fairness--utility is best balanced.

\begin{table}[h!]
\centering
\caption{Effect of embedding dimension $h$ on the Credit, Income and Pokec-n datasets (GCN backbone). Arrows: $\uparrow$ higher is better, $\downarrow$ lower is better.}
\label{hdim}
\begin{tabular}{cccccc}
\toprule
$h$ & AUC ($\uparrow$) & IF ($\downarrow$) & GD ($\downarrow$) & IF-Gini ($\downarrow$) & GD-Gini ($\downarrow$) \\
\midrule
\multicolumn{6}{c}{\textbf{Credit}} \\
\midrule
8   & 0.67 & 0.27 & 1.02 & 0.14 & 1.10 \\
16  & 0.69 & 0.22 & 1.01 & 0.12 & 1.09 \\
32  & 0.66 & 0.29 & 1.08 & 0.15 & 1.11 \\
\midrule
\multicolumn{6}{c}{\textbf{Income}} \\
\midrule
8   & 0.72 & 22.8 & 1.03 & 0.17 & 1.10 \\
16  & 0.74 & 19.7 & 1.02 & 0.15 & 1.08 \\
32  & 0.71 & 23.2 & 1.08 & 0.18 & 1.10 \\
\midrule
\multicolumn{6}{c}{\textbf{Pokec-n}} \\
\midrule
8   & 0.72 & 31.4 & 1.08 & 0.22 & 1.14 \\
16  & 0.74 & 28.2 & 1.06 & 0.21 & 1.12 \\
32  & 0.74 & 30.9 & 1.10 & 0.24 & 1.15 \\

\bottomrule
\end{tabular}
\end{table}

% \section{Appendix}
% You may include other additional sections here.

\end{document}